\documentclass[]{custom}
\usepackage{float}
\usepackage{graphicx}
\usepackage{titletoc}
\usepackage{subcaption}  

\usepackage[english]{babel}
\usepackage{xcolor}
\usepackage{color}
\usepackage{wrapfig}

\usepackage{fontawesome5}
\usepackage{amsmath}
\usepackage{amssymb}
\usepackage{amsthm}
\usepackage{graphicx}
\usepackage{booktabs}      
\usepackage{array}
\usepackage{multirow}      
\usepackage{caption}  
\usepackage{amsmath}
\usepackage[table]{xcolor}  

\usepackage{algorithm}
\usepackage{algorithmic}

\theoremstyle{plain}
\newtheorem{theorem}{Theorem}[section]
\newtheorem{proposition}[theorem]{Proposition}

\theoremstyle{definition}

\theoremstyle{remark}
\newtheorem{remark}[theorem]{Remark}

\newtcolorbox{propbox}{
    enhanced,
    colback=accentgreen!4!bonebg!60!white,      
    colframe=accentgreen, 
    boxrule=0pt,
    leftrule=5pt,      
    arc=0pt,            
    left=8pt, right=8pt, top=8pt, bottom=8pt,
    fonttitle=\bfseries\sffamily,
    coltitle=juniper
}


\title{\fontsize{0.69cm}{5.5cm}\selectfont Meta Flow Maps enable scalable reward alignment}

\author[1,2]{Peter Potaptchik$^*$}
\author[1]{Adhi Saravanan$^*$}
\author[1]{Abbas Mammadov}
\author[1]{Alvaro Prat}
\author[2,3]{\\Michael S. Albergo$^\dag$}
\author[1]{Yee Whye Teh$^\dag$}
\contribution[1]{University of Oxford}
\contribution[2]{Harvard University}
\contribution[3]{Kempner Institute}

\abstract{Controlling generative models is computationally expensive. This is because optimal alignment with a reward function—whether via inference-time steering or fine-tuning—requires estimating the value function. This task demands access to the conditional posterior $p_{1|t}(x_1|x_t)$, the distribution of clean data $x_1$ consistent with an intermediate state $x_t$, a requirement that typically compels methods to resort to costly trajectory simulations. To address this bottleneck, we introduce \textbf{Meta Flow Maps (MFMs)}, a framework extending consistency models and flow maps into the stochastic regime. MFMs are trained to perform \textbf{stochastic one-step posterior sampling}, generating arbitrarily many i.i.d.\ draws of clean data $x_1$ from any intermediate state. Crucially, these samples provide a differentiable reparametrisation that unlocks efficient value function estimation. We leverage this capability to solve bottlenecks in both paradigms: enabling \textbf{inference-time steering without inner rollouts}, and facilitating \textbf{unbiased, off-policy fine-tuning} to general rewards. Empirically, our single-particle steered-MFM sampler outperforms a Best-of-1000 baseline on ImageNet across multiple rewards at a fraction of the compute.}
\abstractfooter{%
  \faGlobe\ \href{https://meta-flow-maps.github.io}{Project Page}
  \quad
  \faGithub\ \href{https://github.com/adh1s/mfm/}{Code}
}

\begin{document}
\maketitle

\begingroup
\renewcommand{\thefootnote}{}
\footnotetext{$^*$Equal contribution. $^\dag$Senior authors. Published at the 43rd International Conference on Machine Learning (ICML 2026).}
\endgroup

\renewcommand{\footnoterule}{%
  \kern -3pt
  \hrule width \linewidth
  \kern 2.6pt
}
\vspace{-0.2cm}

\begin{figure}[htbp]
    \centering
    \setlength{\tabcolsep}{2pt}    
    
    \newlength{\wLeft}
    \setlength{\wLeft}{0.1225\textwidth}  
    \newlength{\wRight}
    \setlength{\wRight}{0.17\textwidth} 
    
    \renewcommand{\arraystretch}{0}
    
    \makebox[\textwidth][c]{%
    \resizebox{0.95\textwidth}{!}{%

        \begin{minipage}[t]{\dimexpr 2\wLeft + 4\tabcolsep \relax} 
            \centering
            \vspace{0pt}
            \begin{tabular}{cc}
                \scriptsize \textbf{Base MFM} & \scriptsize \textbf{Base MFM} \\[3pt]
                
                \includegraphics[width=\wLeft]{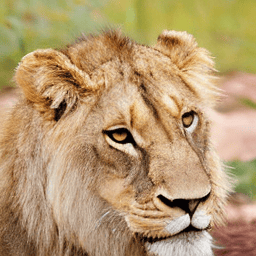} &
                \includegraphics[width=\wLeft]{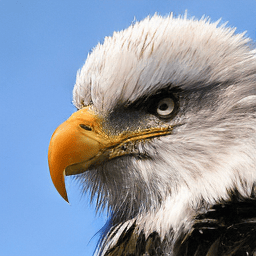} \\[2pt]
                
                \scriptsize \textbf{lion} & \scriptsize \textbf{bald eagle} \\[3pt] 

                \includegraphics[width=\wLeft]{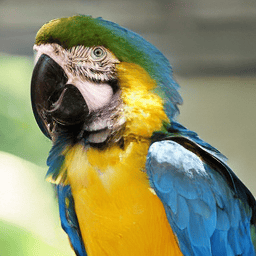} &
                \includegraphics[width=\wLeft]{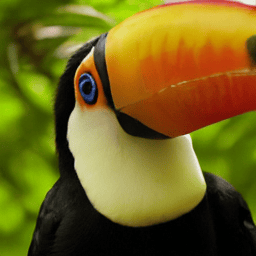} \\[2pt]
                
                \scriptsize \textbf{macaw} & \scriptsize \textbf{toucan} \\[3pt]

                \includegraphics[width=\wLeft]{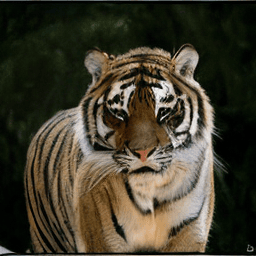} &
                \includegraphics[width=\wLeft]{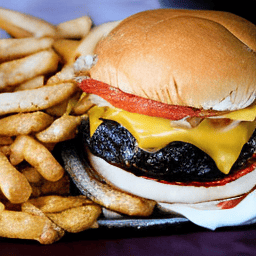} \\[2pt]
                
                \scriptsize \textbf{tiger} & \scriptsize \textbf{cheeseburger} \\[3pt]
                
                \includegraphics[width=\wLeft]{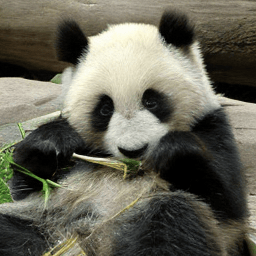} &
                \includegraphics[width=\wLeft]{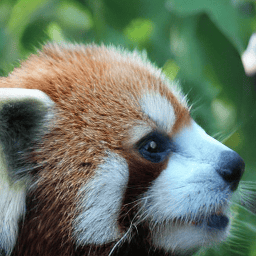} \\[2pt]
                
                \scriptsize \textbf{giant panda} & \scriptsize \textbf{red panda}
            \end{tabular}
        \end{minipage}%
        \hfill

        \begin{minipage}[t]{\dimexpr 4\wRight + 8\tabcolsep \relax} 
            \centering
            \vspace{0pt}
            \begin{tabular}{cccc}
   
              \scriptsize \textbf{Base MFM} & \scriptsize \textbf{Steered MFM} & \scriptsize \textbf{Base MFM} & \scriptsize \textbf{Steered MFM} \\[3pt]

                \includegraphics[width=\wRight]{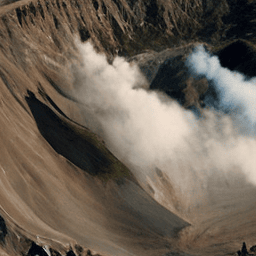} &
                \includegraphics[width=\wRight]{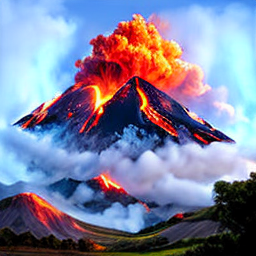} &
                \includegraphics[width=\wRight]{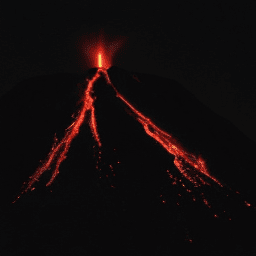} &
                \includegraphics[width=\wRight]{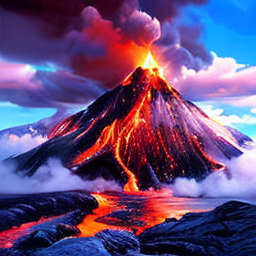} \\[2pt]

                \multicolumn{4}{c}{\scriptsize \textbf{majestic volcano erupting with lava flowing down its sides}} \\[3pt]

                \includegraphics[width=\wRight]{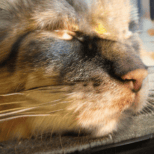} &
                \includegraphics[width=\wRight]{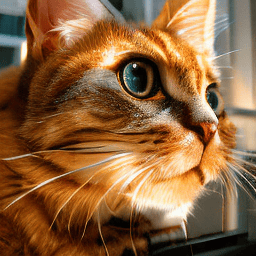} &
                \includegraphics[width=\wRight]{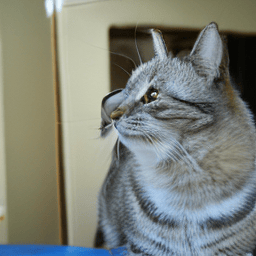} &
                \includegraphics[width=\wRight]{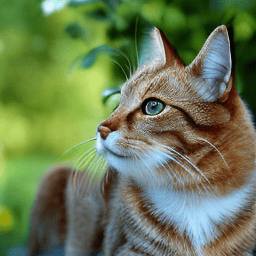}  \\[2pt]

                \multicolumn{4}{c}{\scriptsize \textbf{ginger tabby cat}} \\[3pt]

                  \includegraphics[width=\wRight]{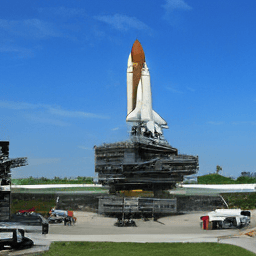} &
                \includegraphics[width=\wRight]{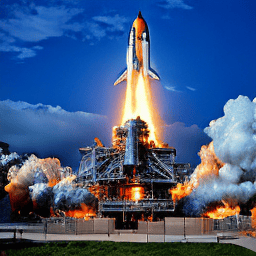} &
                \includegraphics[width=\wRight]{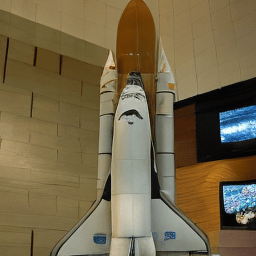} &
                \includegraphics[width=\wRight]{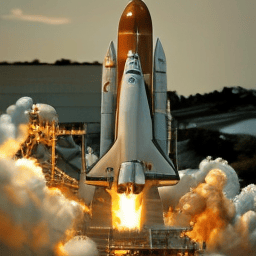} \\[2pt]
                \multicolumn{4}{c}{\scriptsize \textbf{space shuttle launching into space, fiery fumes}} \\[2pt]
         
            \end{tabular}
        \end{minipage}%
    } 
    }%
    \caption{Samples from a Meta Flow Map (MFM) trained on ImageNet ($256 \times 256$). (Left) 4-step samples from a base MFM. (Right) Inference-time steering with MFMs and HPSv2 using the prompts shown. The base MFM generates images using \textbf{only class labels}, and so \textbf{all} prompt alignment comes from the MFM steering with HPSv2.}
    \label{fig:mfm_main}
\end{figure}

\begin{figure}[tp]
    \centering
    \includegraphics[width=1.01\linewidth, height=0.25\textheight]{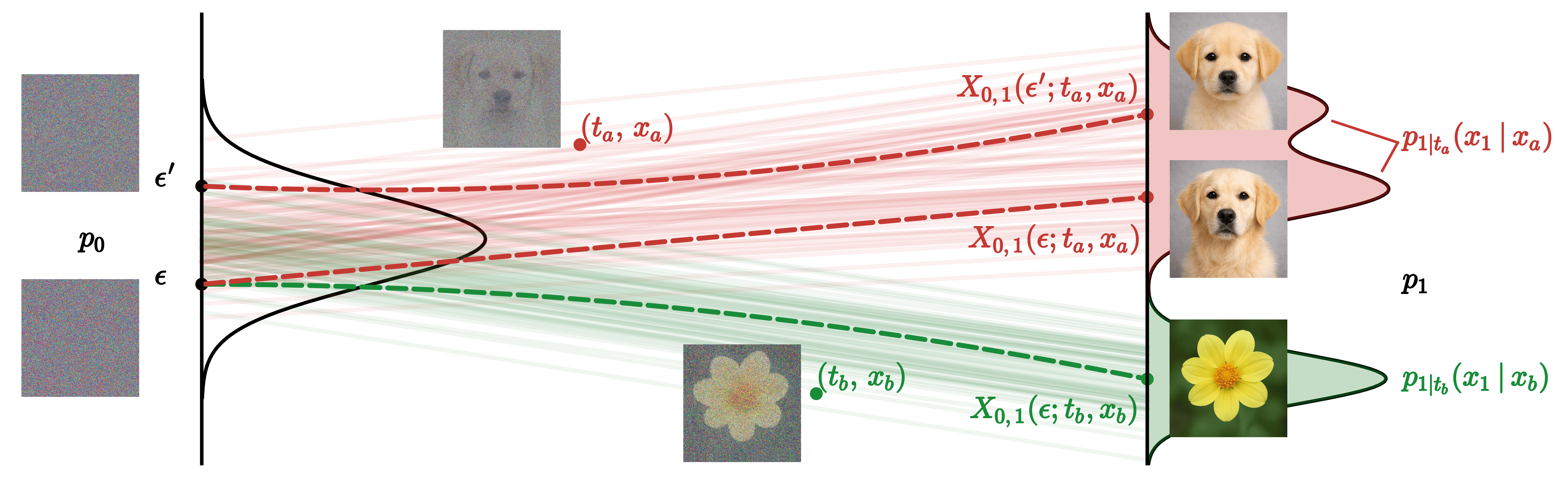}
    \caption{An MFM $X$ conditions on an intermediate time–state pair $(t, x)$ (visualised as noisy images) along the stochastic interpolant and learns a shared conditional flow $X_{s,u}(\cdot \,; t, x)$ that maps base noise $\epsilon \sim p_0$ to endpoint samples $x_1 \sim p_{1|t}(\cdot |x)$ (visualised as clean images) via $X_{0,1}(\epsilon; t, x)$. For a given $(t, x)$ pair, varying the initial noise $\epsilon' \sim p_0$ yields multiple samples from the same posterior $p_{1|t}(\cdot | x)$. Conversely, for the same initial noise $\epsilon$, conditioning on two different time–state pairs $(t_a, x_a)$ and $(t_b, x_b)$ yields one sample from each of two different posteriors $p_{1|t_a}(\cdot | x_a)$ and $p_{1|t_b}(\cdot | x_b)$.}
    \label{fig:mfm}
\end{figure}
\vspace{-0.5cm}
\section{Introduction}
\label{section:introduction}
Many of the most powerful generative models today are transport-based, realising generation through a time-evolving process, as in diffusion models, flow matching, and stochastic interpolants~\citep{song2020score, ho2020denoising, lipman2022flow, albergo2022building}. A growing frontier in this space is adapting these models to align with a reward function. In many applications, we are not satisfied with unconditional samples; instead, we want to \emph{control} model trajectories so that samples exhibit desirable properties~\citep{bansal2023universalguidancediffusionmodels,kim2023refininggenerativeprocessdiscriminator,ma2025inferencetimescalingdiffusionmodels}.

Formally, this task can be phrased as sampling from a modified target, the \emph{reward-tilted} distribution:
{
\setlength{\abovedisplayskip}{3pt}
\setlength{\belowdisplayskip}{3pt}
\begin{equation}
\label{eq:p_reward}
    p_{\text{reward}}(x) \propto p_{\text{model}}(x)e^{r(x)},
\end{equation}
}where $p_{\text{model}}(x)$ is the output distribution of the original pretrained generative model, and $r(x)$ is a general reward function. This formulation encapsulates settings such as classifier guidance, where $r(x)=\log p(c|x)$ targets a class $c$~\citep{Dhariwal_guidance}, inverse problems, where $r(x)=\log p(y_{\mathrm{obs}}|x)$~\citep{chung2024diffusionposteriorsamplinggeneral}, or black-box rewards (e.g., $r$ measures aesthetics or prompt alignment). In all cases, the central algorithmic challenge is to modify the sampling dynamics so that terminal samples are distributed according to~$p_{\mathrm{reward}}$.

Two related paradigms for achieving such control are \emph{inference-time steering} and \emph{fine-tuning}. Inference-time steering keeps the pretrained model fixed and modifies the sampling process directly—traditionally at the cost of substantially more expensive generation—in order to target $p_{\text{reward}}$. Fine-tuning, conversely, updates the generative network's parameters to permanently target $p_{\text{reward}}$~\citep{clark2024directlyfinetuningdiffusionmodels,domingoenrich2025adjointmatchingfinetuningflow, TM}. While effective for a single fixed objective, repeating this computationally intensive process for every new downstream reward is often intractable.

Despite these practical distinctions, the two paradigms share a unified theoretical solution. In both cases, the optimally controlled drift targeting $p_{\text{reward}}$ is identical, and can be expressed in terms of the value function, which, for a stochastic process $(X_t)_{t \in [0,1]}$, is defined as
{
\setlength{\abovedisplayskip}{3pt}
\setlength{\belowdisplayskip}{3pt}
\begin{equation}
\label{eq:value_function_intro}
    V_t(x) := \log \mathbb{E}[e^{r(X_1)} \,|\, X_t = x].
\end{equation} 
}This function measures the expected future reward conditioned on the current state $x$ at time $t$, where the expectation is taken over the \emph{conditional posterior} $p_{1|t}(\cdot|x)$—the distribution of clean endpoints $X_1$ consistent with $X_t = x$. Crucially, the gradient $\nabla V_t(x)$ determines the optimal drift correction required to direct trajectories towards high-reward states~\citep{dai1991stochastic}. This correction serves as the target for both strategies: it can be applied transiently during inference-time steering or distilled permanently into model weights during fine-tuning. Thus, both paradigms are mathematically unified by the shared bottleneck of estimating $\nabla V_t(x)$.

Estimating this gradient generally necessitates samples from the conditional posterior $p_{1|t}(\cdot|x)$, creating a fundamental computational dilemma. Heuristic approximations, such as replacing the posterior with a point mass or a Gaussian centred at the posterior mean~\citep{song2023loss,chung2024diffusionposteriorsamplinggeneral}, are efficient but biased, often failing in multimodal settings.  Conversely, Monte Carlo estimation requires drawing exact samples by integrating long ODE or SDE trajectories~\citep{holderrieth2025glassflowstransitionsampling,jain2025diffusiontreesamplingscalable}. This reliance on repeated trajectory integration makes steering prohibitively slow and fine-tuning for every downstream reward impractical. To eliminate this bottleneck, we ask whether the computational burden can be amortised during the training of the base model by compressing these expensive rollouts into efficient, few-step maps—thereby making it feasible to efficiently steer or fine-tune on any new reward.

A natural strategy is to learn a new class of few-step maps, paralleling the development of consistency models and flow maps~\citep{song2023consistencymodels, frans2025stepdiffusionshortcutmodels, geng2025meanflowsonestepgenerative,boffi2025build}. Flow maps distill the probability flow ODE of $(X_t)_{t \in [0,1]}$ into an efficient map that predicts the trajectory endpoint $x_1$ from an intermediate state $x$. However, as \emph{deterministic} maps of $x$, they are inherently incapable of representing the diversity of the conditional posterior $p_{1|t}(\cdot|x)$ since they collapse the output to a single point. We instead seek \emph{stochastic} maps that still efficiently compress generative rollouts, but also explicitly preserve the \emph{full} posterior distribution.

We refer to this new class of operators as \emph{Stochastic Flow Maps}. Concretely, we define a stochastic flow map as a transformation that maps an exogenous noise source $\epsilon$ and an intermediate state $x$ directly to a sample $x_1 \sim p_{1|t}(\cdot|x)$. By varying this noise input, the map can generate arbitrarily many i.i.d.\ draws from the posterior in a single step. The motivation for such maps is that these one-shot posterior samples provide a \emph{differentiable reparametrisation} of the posterior, enabling asymptotically exact estimation of the value function and its gradient. This raises the natural question: how can we efficiently train such stochastic flow maps?

To address this, we introduce \textbf{Meta Flow Maps (MFMs)}. We exploit that for any intermediate state $x$ at time $t$, there exists a probability flow ODE transporting a simple noise distribution to the conditional posterior $p_{1|t}(\cdot|x)$.  Since each ODE has a corresponding flow map compressing its trajectory rollouts, we reduce learning a stochastic flow map to learning this infinite collection of posterior-targeting flow maps simultaneously. MFMs achieve this by training a single amortised model that acts as a ``meta'' flow map over this infinite family.

Empirically, we demonstrate that MFMs train efficiently at scale and can be applied across diverse reward settings. Our results show competitive sample quality and improved controllability with marked reductions in computation for steering and fine-tuning.

\begin{propbox}
\paragraph{Core Contributions:}
\paragraph{} 
\vspace{-0.2cm}
\begin{itemize}
    \item We introduce \textbf{Meta Flow Maps (MFMs)}, stochastic extensions of consistency models and flow maps that generate \textbf{arbitrarily many one-step} samples of clean data $x_1$ conditioned on any noisy state $x_t$. 
    \item Unlike standard few-step models, MFMs capture the \textbf{full conditional posterior} $p_{1|t}(\cdot|x_t)$.  
    \item We leverage these differentiable samples for efficient, asymptotically exact \textbf{inference-time steering} via Monte Carlo estimators of the value function and its gradient.
    \item We show that MFMs enable efficient \textbf{off-policy fine-tuning} to general rewards using unbiased~objectives.
\end{itemize}
\end{propbox}

\section{Dynamical Measure Transport}
\label{sec:prelim}
Before developing our framework, we briefly review the transport viewpoint that underpins our approach. We first recall ODE-based transport and its standard flow matching training objective, then summarise one and few-step flow map models that compress ODE integration into efficient maps. 

\subsection{Dynamical Transport via ODEs}
\label{sec:ODEs}
\paragraph{ODE Transport.} A central objective in modern generative modelling is to learn a transport that transforms samples from a reference distribution $x_0 \sim p_0$ into a data distribution $x_1 \sim p_1$, inferred from existing data samples $\{x_1^{(i)}\}_{i=1}^N$.
Rather than modelling $p_1$ directly, one constructs a dynamical system that continuously evolves particles from $p_0$ to $p_1$ under a time-dependent drift $b_t : \mathbb{R}^d \to \mathbb{R}^d$. In the deterministic setting, the sample trajectories $(x_t)_{t \in [0,1]}$ evolve according to an ordinary differential equation (ODE)
\begin{align}
\label{eq:ode}
\dot x_t = b_t(x_t), \qquad x_0 \sim p_0.
\end{align}

\paragraph{Training.} The goal now is to choose a drift $b_t$ such that the endpoint of this ODE is distributed according to $p_1$, that is $x_1 \sim p_1$. To achieve this, flow matching and stochastic interpolants~\citep{albergo2022building,lipman2022flow,liu2022flow} define a continuous-time stochastic process $(I_t)_{t \in [0,1]}$ that interpolates between samples from the prior $I_0 \sim p_0$ and the data $I_1 \sim p_1$ via
\begin{align}
\label{eq:si}
I_t = \alpha_t I_0 + \beta_t I_1,
\end{align}
where $\alpha_t, \beta_t$ are time-dependent coefficients satisfying $\alpha_0 = \beta_1 = 1$ and $\alpha_1 = \beta_0 = 0$.
The density $p_t$ of $I_t$ defines a continuous family of intermediate distributions bridging $p_0$ and $p_1$. One valid drift in \eqref{eq:ode} that generates this marginal family $(p_t)_{t \in [0,1]}$ (i.e., such that the density of $x_t$ is $p_t$) is given by the conditional expectation
\begin{align}
\label{eq:drift}
b_t(x) = \mathbb{E}[\dot I_t | I_t = x] = \dot \alpha_t \mathbb{E}[I_0|I_t = x] + \dot \beta_t \mathbb{E}[I_1|I_t = x],
\end{align}
where $\dot I_t = \dot \alpha_t I_0 + \dot \beta_t I_1$ is the time derivative of the interpolant. In particular, this drift ensures the terminal marginal constraint $x_1 \sim p_1$. In practice, $b_t$ is parameterised by a neural network $\hat b_t$ and learned by minimising a mean-squared regression loss:
\begin{align}
\label{eq:b:loss}
b_t = \arg\min_{\hat b_t} \int_0^1 \mathbb{E}\left \| \hat b_t(I_t) - \dot I_t \right \|^2  dt.
\end{align}

\subsection{One and Few-Step Sampling}
Sampling trajectories by numerically integrating the ODE in~\eqref{eq:ode} typically requires many neural network evaluations, making inference expensive. This has motivated a class of methods including consistency models~\citep{song2023consistencymodels,song2023improvedtechniquestrainingconsistency} and flow maps~\citep{kim2024consistencytrajectorymodelslearning,frans2024step,boffi_flow_2024, boffi2025build, sabour2025align} that aim to \emph{compress} this integration into one or a small number of steps.

At a high level, these methods learn a map that directly predicts the state of the ODE trajectory~\eqref{eq:ode} at time $u$ from its state at time $s$, without explicitly simulating the infinitesimal dynamics. We formalise this via a \emph{flow map}
\begin{align}
\label{eq:map}
    X_{s,u} : \mathbb{R}^d \to \mathbb{R}^d, \qquad X_{s,u}(x_s) = x_u, \quad \forall s,u \in [0,1],
\end{align}
which learns the exact ODE solution operator between times $s$ and $u$ for trajectories $(x_t)_{t \in [0,1]}$. For training, it is convenient to parametrise the flow map in residual form
\begin{align}
\label{eq:map:resid}
    X_{s,u}(x) = x + (u - s)\, v_{s,u}(x),
\end{align}
where $v_{s,u}(x)$ represents the \emph{average velocity} of the trajectory between times $s$ and $u$. For $X_{s,u}$ to be consistent with the underlying ODE, the average drift must recover the instantaneous drift in the infinitesimal limit. This is captured by the tangent condition~\citep{kim2024consistencytrajectorymodelslearning},
\begin{align}
\label{eq:tangent}
    \lim_{s \to u} \partial_u X_{s,u}(x) = v_{u,u}(x) = b_u(x).
\end{align}
In practice, we learn a neural parametrisation $\hat v_{s,u}$, along with its induced flow map $\hat X_{s,u}$. A natural way to enforce the tangent condition along the diagonal $s=u$ is via a flow matching objective as in~\eqref{eq:b:loss}:
\begin{equation}
\label{eq:flow_map_diag_loss}
    \mathcal{L}_b(\hat v)
    = \int_0^1 \mathbb{E} \left\| \hat v_{u,u}(I_u) - \dot I_u \right\|^2 du,
\end{equation}
where $(I_u,\dot I_u)$ are drawn from the interpolant process~\eqref{eq:si}. While the diagonal loss enforces the correct instantaneous drift by matching $\hat v_{u,u}$ to $b_u$, it does not constrain the behaviour of $\hat v_{s,u}$ for $s \neq u$. To propagate this local correctness into a valid global trajectory, existing methods introduce an additional \emph{consistency objective}. In particular, a valid flow map must ensure that $\hat v_{s,u}$ correctly represents the average drift, which is equivalent to enforcing any of the following consistency rules~\citep{boffi_flow_2024, boffi2025build}:
\begin{align}
\label{eq:map:rules}
    \partial_u X_{s,u}(x) = v_{u,u}\big(X_{s,u}(x)\big), \qquad
    \partial_s X_{s,u}(x) + v_{s,s}(x)\cdot \nabla X_{s,u}(x) = 0, \qquad
    X_{w,u}\big(X_{s,w}(x)\big) = X_{s,u}(x).
\end{align}
A consistency objective must therefore ensure that one (and hence all) of the above rules is satisfied. Many flow map losses in the literature, including Consistency Trajectory, Mean Flow, and Shortcut~\citep{kim2024consistencytrajectorymodelslearning, geng2025meanflowsonestepgenerative, frans2025stepdiffusionshortcutmodels}, can be interpreted this way. For clarity, we focus on the following three losses, which directly penalise the residual violations of the above rules for all $s,u,w \in [0,1]$ and $x \in \mathbb R^d$:
\begin{align}
\label{eq:meanflow_identity}           
    \left \|\partial_u \hat X_{s,u}(x) - \hat v_{u,u}\big(\hat X_{s,u}(x)\big)\right \|^2\!\!\!, \qquad
    \left\|\partial_s \hat X_{s,u}(x) + \hat v_{s,s}(x)\cdot \nabla \hat X_{s,u}(x) \right \|^2\!\!\!, \qquad
    \left \|\hat X_{w,u}\big(\hat X_{s,w}(x)\big) - \hat X_{s,u}(x)\right \|^2\!\!\!.
\end{align} 

\section{Reward Alignment}
\label{section:inference_time_steering}
We now address aligning generative models with a reward function, a task that motivates our development of Meta Flow Maps. As discussed in Section~\ref{section:introduction}, both inference-time steering and fine-tuning share a unified theoretical objective: sampling from the \emph{reward-tilted distribution} $p_{\text{reward}}$ defined in~\eqref{eq:p_reward}. Assuming our trained model $b_t$ perfectly targets the data distribution, $p_{\text{model}} = p_1$, we can rewrite the target as
\begin{equation}
\label{eq:tilted_dist}
    p_{\text{reward}}(x) \propto p_1(x)e^{r(x)},
\end{equation}
for some scalar reward function $r : \mathbb{R}^d \to \mathbb R$. As highlighted in Section~\ref{section:introduction}, this formulation captures a diverse range of settings, including class-conditional generation, inverse problems, and black-box rewards. We begin by reviewing Doob's $h$-transform, which characterises the \emph{optimal controlled} dynamics for targeting $p_{\text{reward}}$. We then outline the computational limitations of existing generative models for efficient control, motivating the need for a new class of operators.

\subsection{Controlling Dynamics via Doob's $h$-Transform}
\label{sec:doob}
Recall that the drift $b_t$ in~\eqref{eq:ode} was chosen such that the ODE marginals match the interpolant's: $\text{Law}(x_t) =  \text{Law}(I_t) = p_t$. A standard approach to obtain stochastic dynamics with these same marginals is to introduce diffusion while compensating in the
drift~\citep{song2020score, albergo2023stochastic}, yielding the following SDE:
\begin{align}
    \label{eq:sde}
    dX_t
    =
    \Big[b_{t}(X_t) + \tfrac{\sigma_t^2}{2}\nabla \log p_{t}(X_t)\Big]\,dt
    + \sigma_t\,dB_t,
    \qquad X_0 \sim p_0,
\end{align}
where the diffusion coefficient is $\tfrac{\sigma_t^2}{2} = \tfrac{\dot \beta_t}{\beta_t}\alpha_t^2 - \dot \alpha_t \alpha_t$. While any diffusion schedule $\sigma_t$ in this formulation yields the correct marginals $\text{Law}(X_t)=p_t$, the SDE $X_t$ and the interpolant $I_t$ generally possess different transition kernels. We employ this specific schedule because it ensures that the conditional endpoint laws are identical; that is, the distribution of $X_1$ conditioned on $X_t=x$ matches that of $I_1$ conditioned on $I_t=x$ (we verify this in Appendix~\ref{app:proofs}). Consequently, we let $p_{1|t}(\cdot | x)$ denote the density of this shared \emph{conditional posterior distribution}:
\begin{equation}
p_{1|t}(\cdot | x) = \text{Law}(X_1 \,|\, X_t=x) = \text{Law}(I_1 \,|\, I_t = x).
\end{equation}
For the SDE~\eqref{eq:sde}, we recall the \emph{value function} $V_t(x)$ defined in~\eqref{eq:value_function_intro}:
\begin{equation}
\label{eq:value_function}
    V_t(x) = \log \mathbb{E}\big[e^{r(X_1)} | X_t = x\big] = \log \mathbb{E}\big[e^{r(I_1)} | I_t = x\big],
\end{equation}
where the second equality holds precisely because the SDE and the interpolant share identical conditional endpoint laws. Doob's $h$-transform~\citep{ dai1991stochastic,denker2025deftefficientfinetuningdiffusion} tilts the path measure by the terminal reward $e^{r(X_1)}$ by adding the diffusion-scaled gradient of the value function to the drift, yielding the optimally controlled SDE:
\begin{align}
    \label{eq:sde_steering}
    dX_t^{\star}
    =
    \Big[b_{t}(X_t^{\star}) + \tfrac{\sigma_t^2}{2}\nabla \log p_{t}(X_t^{\star}) + \sigma^2_t \nabla V_t(X_t^{\star})\Big]\,dt
    + \sigma_t\,dB_t,
    \qquad X_0^{\star} \sim p_0.
\end{align}
Crucially, the score term in~\eqref{eq:sde_steering} corresponds to the uncontrolled process and is therefore already available; when $p_0$ is Gaussian, the score $\nabla\log p_t$ is simply a linear reparametrisation of $b_t$. This leaves $\nabla V_t$ as the only missing component. Under optimal control, the marginal density $p_t^{\star}$ of $X_t^{\star}$ satisfies
\begin{equation}
    p_t^{\star}(x) \propto p_t(x) e^{V_t(x)},
\end{equation}
ensuring the terminal marginal is exactly $p_1^{\star} = p_{\text{reward}}$. We can also define the corresponding probability flow ODE, whose trajectories satisfy $\text{Law}(x_t^{\star}) = \text{Law}(X_t^{\star}) = p_t^{\star}$, by subtracting half the diffusion-scaled score from~the~drift:
\begin{align}
    \label{eq:doob_ode}
    \dot{x}_t^{\star}
    = \underbrace{b_{t}(x^{\star}_t) + \tfrac{\sigma_t^2}{2} \nabla V_{t}(x^{\star}_t)}_{b_t^{\star}(x_t^{\star})}, \quad x^{\star}_0 \sim p_0. 
\end{align}
The optimal drift $b_t^\star$ serves as the target for both control paradigms: it can be estimated to steer trajectories during inference, or distilled permanently into a student model during fine-tuning. Since $b_t$ is readily available from the pretrained model, the central algorithmic challenge is efficiently estimating the value function gradient~$\nabla V_t(x)$. 

\subsection{Estimators of $\nabla V_t$}
\label{sec:estimators}
We present two consistent Monte Carlo estimators for $\nabla V_t(x)$. The limitations of existing generative models to tractably implement them will motivate the development of Meta Flow Maps.

Adapted from Tilt Matching~\citep{TM}, this estimator requires only reward function evaluations $r(x)$ and i.i.d.\ samples from the posterior $p_{1|t}(\cdot | x)$.
\begin{propbox}
\paragraph{Gradient-Free Estimator (MFM-GF).}
A consistent Monte Carlo estimator of $\nabla V_t(x)$ is
\begin{equation}
\label{eq:self_norm_estimator}
    \frac{\sigma_t^2}{2}\widehat{\nabla V_t(x)}
    = \Big(\dot \beta_t - \frac{\dot\alpha_t}{\alpha_t} \beta_t\Big)
    \frac{\sum_{i=1}^{N} x_1^{(i)} \exp(r(x_1^{(i)}))}
         {\sum_{i=1}^{N} \exp(r(x_1^{(i)}))}
    +\frac{\dot\alpha_t}{\alpha_t} x - b_t(x),
    \qquad 
    x_1^{(i)} \overset{\text{iid}}{\sim} p_{1|t}(\cdot | x).
\end{equation}
\end{propbox}
If the reward function is differentiable, we can leverage the reparametrisation trick~\citep{kingma2022autoencodingvariationalbayes}. Assume posterior samples can be generated via a differentiable map $\hat x_1 = \Phi(\epsilon; t, x)$, so that $\hat x_1 \sim p_{1|t}(\cdot | x)$ where $\epsilon \sim q$ for some base noise distribution $q$. Then we can express the gradient as:
\begin{equation}
\label{eq:grad_reparam}
    \nabla V_t(x)= \nabla\log \mathbb{E}_{\epsilon\sim q}\Big[\exp\left (r(\Phi(\epsilon;t,x))\right)\Big].
\end{equation}
From this representation, we derive the following gradient-based estimator of the value function's gradient.
\begin{propbox}
\paragraph{Gradient-Based Estimator (MFM-G).}
A consistent Monte Carlo estimator of $\nabla V_t(x)$ is
\begin{equation}
\label{eq:gradient_estimator}
    \widehat{\nabla V_t(x)}
    = \nabla_x\log\!\left(
    \frac{1}{N}\sum_{i=1}^{N}
    \exp\big(r(\Phi(\epsilon^{(i)}\,;t,x))\big)
    \right), \qquad \epsilon^{(i)} \overset{\text{iid}}{\sim} q.
\end{equation}
\end{propbox}
For steering, estimating $\nabla V_t$ must be done online at every step. For fine-tuning, this gradient defines the regression target. As this estimation must occur at every step, the viability of our approach hinges critically on our ability to sample from the conditional posterior efficiently, and for MFM-G in a differentiable manner.

\subsection{Limitations of Existing Posterior Samplers}
\label{sec:limitations}

As we now discuss, obtaining posterior samples from $p_{1|t}(\cdot | x)$ to Monte Carlo estimate the gradient of the value function $\nabla V_t$ remains a major bottleneck. Existing approaches typically rely on expensive trajectory unrolling, while standard acceleration techniques like consistency models and flow maps are structurally ill-suited for conditional sampling. 

\paragraph{Inner rollouts of SDEs.}
A direct way to obtain posterior samples is via ``inner rollouts'', where one simulates the SDE~\eqref{eq:sde} forward from time $t$ to $1$, starting at $x$~\citep{elata2023nesteddiffusionprocessesanytime,li2025dynamicsearchinferencetimealignment, zhang2025inferencetimescalingdiffusionmodels,jain2025diffusiontreesamplingscalable}. Repeating this process with independent noise yields a batch of samples from $p_{1|t}(\cdot | x)$. However, this approach is prohibitively costly, as it nests a full inner simulation within every step of the outer~steering~trajectory or every iteration of the fine-tuning loop.

\paragraph{Inner rollouts of ODEs.}
In principle, SDE simulation can be replaced with ODE sampling. Theoretically, for any context $(t, x)$, there exists an ODE that transports the prior $p_0$ to the conditional posterior $p_{1|t}(\cdot |x)$. The drift $\bar b_s(\cdot\,; t, x)$ for this flow can be defined as the solution to the flow matching problem in~\eqref{eq:b:loss}, targeting the conditional posterior $p_{1|t}(\cdot|x)$ rather than the marginal data distribution $p_1$:
\begin{equation}
\label{eq:cond_flow_matching}
    \bar b_s(\bar x\,; t, x) = \mathbb{E}\left [\dot\alpha_s \bar I_0  + \dot \beta_s \bar I_1 | \bar I_s = \bar x\right], \qquad \bar I_s = \alpha_s \bar I_0  + \beta_s \bar I_1, \qquad \bar I_0 \sim p_0, \bar I_1 \sim p_{1|t}(\cdot|x).
\end{equation}
Consequently, the probability flow associated with $\bar b_s(\cdot\,; t, x)$ satisfies:
\begin{equation}
\label{eq:posterior_transport_ode}
    \frac{d}{ds}\bar x_s = \bar b_s(\bar x_s; t, x),\quad \bar x_0\sim p_0
    \quad \Longrightarrow \quad \mathrm{Law}(\bar x_1)=p_{1|t}(\cdot| x).
\end{equation}
Here, we use the bar notation ($\bar{x}_s, \bar{b}_s$) to distinguish the state and velocity of this \emph{conditional} auxiliary process from the unconditional trajectory ($x_t, b_t$) in~\eqref{eq:ode}. For brevity, the dependence of $\bar{x}_s$ on $(t, x)$ is suppressed.

Although $\bar b_s$ is well defined in theory, it is generally intractable without retraining. However, when the prior $p_0$ is Gaussian, GLASS flows~\citep{holderrieth2025glassflowstransitionsampling} demonstrate that $\bar b_s$ can be derived analytically by reparametrising the drift $b_t$:
\begin{equation}
\label{eq:glass_flows_ode}
    \bar b_s(\bar x_s; t, x) 
    = w_1 \bar x_s + w_2 b_{t^*}(S(\bar x_s, x)) + w_3 x,
\end{equation}
where $w_1, w_2, w_3$ are scalar coefficients, $t^*$ is a reparametrised time, and $S$ is a linear sufficient statistic. We provide explicit expressions for these terms in Appendix~\ref{app:glass_flows}. While this reparametrisation makes $\bar b_s$ accessible, generating posterior samples from $p_{1|t}(\cdot |x)$ by unrolling ODE trajectories remains computationally expensive. It requires drawing a separate initial condition $\bar x_0 \sim p_0$ for every element in the Monte Carlo batch and integrating~\eqref{eq:posterior_transport_ode}. 

Furthermore, efficient estimators of $\nabla V_t(x)$, such as~\eqref{eq:gradient_estimator}, also require that the posterior samples $x_1 \sim p_{1|t}(\cdot|x)$ are differentiable with respect to $x$ via the reparametrisation $\Phi$. While it is theoretically possible to differentiate through the ODE or SDE solvers used in exact rollouts, this incurs a prohibitive memory and computational cost, making it impractical for iterative steering or fine-tuning. Consequently, this has often forced reliance on coarse approximations.

\paragraph{Insufficiency of flow maps.}
Finally, we remark that standard flow maps address a fundamentally different transport problem than what is required for posterior sampling. A flow map $X_{s,u}$ is trained to satisfy the \emph{marginal}~transport~constraint:
\begin{equation}
    X_{s,u} \# p_s = p_u, \qquad \forall s,u \in [0,1].
\end{equation}
Because the map $x \mapsto X_{t,1}(x)$ is a deterministic function of $x$, it is structurally incapable of representing the full conditional posterior $p_{1|t}(\cdot|x)$, which generally admits a distribution of valid endpoints for any fixed intermediate state $x$ at time $t$. 

This leaves us with a fundamental dilemma. Exact sampling methods, such as SDE or ODE rollouts, can capture the diversity of the posterior but are prohibitively slow due to the need for iterative integration. Conversely, accelerated methods like consistency models and flow maps are efficient but, due to their deterministic nature, cannot capture the stochasticity required for posterior sampling.

\section{Meta Flow Maps}
\label{sec:MFM}

To solve this computational bottleneck, we require a new type of model that bridges this gap: an operator that retains the one-step efficiency of flow maps but introduces the stochasticity necessary to cover the full support of every posterior. We first introduce the formalism for such operators with Stochastic Flow Maps---generalised maps for stochastic one-step sampling. We then introduce Meta Flow Maps (MFMs), a practical amortised framework for training them.

\subsection{Stochastic Flow Maps}
Standard flow maps are deterministic operators designed to transport a source distribution to a \emph{single} target distribution. To capture entire families of target distributions, we must extend this framework to the stochastic regime.
\begin{propbox}
\paragraph{Definition (Stochastic Flow Map).}
Let $\mathcal{C}$ be an index set of contexts. For each context $c \in \mathcal{C}$, let $p_c$ be a target distribution on $\mathbb{R}^d$. A Stochastic Flow Map is a parametric function $\Phi(\epsilon; c): \mathbb{R}^d \times \mathcal{C} \to \mathbb{R}^d$ that maps exogenous noise $\epsilon \sim q$ directly to the target $p_c$. Specifically, it satisfies the \emph{conditional} transport constraint:
\begin{equation}
\label{eq:stochastic_flow_map_def}
    \Phi(\cdot\,; c) \# q = p_c, \quad \forall c \in \mathcal{C}.
\end{equation}
\end{propbox}
The inclusion of $\epsilon$ renders $\Phi$ stochastic with respect to $c$. This allows $\Phi$ to generate arbitrarily many distinct samples for a fixed context, enabling it to cover the full support of each $p_c$. However, this formulation leaves open the core problem: how to train such a map to target the family of conditional posteriors we are interested in.
\begin{propbox}
\paragraph{Problem.} How can we train efficient stochastic flow maps that sample from the entire family $p_{1|t}(\cdot | x)$?
\end{propbox}

\subsection{Meta Flow Map Framework}
\paragraph{Target family.}
For our goals with reward alignment, we desire a stochastic flow map that targets the family of conditional posteriors. Therefore, we will identify the context $c$ with the time-state pair $(t, x)$ from the generative process, and set $\mathcal{C} = [0,1] \times \mathbb{R}^d$. Our target family is the set of conditional posteriors $\mathcal{P} := \{p_{1|t}(\cdot|x)\}_{(t,x) \in [0,1] \times \mathbb{R}^d}$.

\paragraph{Family of conditional probability flow ODEs.}
We recall from Section~\ref{sec:limitations}, that for any specific context $(t, x)$, there exists a probability flow ODE~\eqref{eq:posterior_transport_ode} transporting $p_0$ to $p_{1|t}(\cdot|x)$. Each of these posterior-targeting ODEs has a dedicated flow map which is its solution operator.  We define a Meta Flow Map as an amortised map for this collection.

\begin{propbox}
\paragraph{Definition (Meta Flow Map).}
A Meta Flow Map (MFM) targeting the family $\mathcal{P}$ is the parametric family of conditional flow maps $X_{s,u}(\cdot\,; t, x) : \mathbb{R}^d \to \mathbb{R}^d$ acting as the solution operators for the context-dependent ODEs in~\eqref{eq:posterior_transport_ode}. Formally, for any trajectory $(\bar x_\tau)_{\tau \in [0,1]}$ governed by the specific drift $\bar b_\tau(\cdot\,; t, x)$, the map satisfies:
\begin{equation}
X_{s,u}(\bar x_s; t, x) = \bar x_u, \quad \forall s,u \in [0,1].
\end{equation}
In particular, by setting $q = p_0$, the MFM $X$ satisfies the definition of a stochastic flow map in~\eqref{eq:stochastic_flow_map_def}.
\end{propbox}
Here, the context $(t, x)$ specifies the target posterior, while $s$ and $u$ denote flow times of the auxiliary probability flow ODE. The qualifier ``meta'' highlights that our model does not learn only a single transport map, but rather a family of transport maps, one for each context $(t,x)$. As such, the context acts to ``choose'' the flow map that targets the specific posterior $p_{1|t}(\cdot | x)$. See Figures~\ref{fig:mfm} and \ref{fig:imagenet_post} for an~overview.

\paragraph{Auxiliary transport.}
To avoid confusion, we emphasise that the MFM's auxiliary flow $(\bar{x}_s)_{s\in[0,1]}$ does not reproduce the conditional evolution of the generative process, whether modelled as the SDE $(X_t)$ or the interpolant $(I_t)$. While a generative trajectory conditioned on state $x$ at time $t$ is naturally constrained to intersect $x$, the MFM trajectories $(\bar{x}_s)_{s\in[0,1]}$ are not, implying that in general $X_{0,t}(\epsilon; t,x) \neq x$. The MFM is instead designed solely to satisfy the endpoint constraint $\bar{x}_1 \sim p_{1|t}(\cdot|x)$ starting from noise at $s=0$. Consequently, the auxiliary trajectory $\bar{x}_s$ need not intersect $x$ at any point, and the intermediate states $\bar{x}_s$ for $s<1$ generally lack a direct interpretation. This distinction is highlighted in Figure~\ref{fig:mfm}, where the bold dotted trajectories clearly do not pass through the conditioning points $(t_a, x_a)$ or $(t_b, x_b)$.

\begin{figure}[t]
    \centering
    \setlength{\tabcolsep}{2pt}
    \renewcommand{\arraystretch}{0.8}

    \begin{subfigure}[t]{0.49\textwidth}
        \centering
        \begin{tabular}{c}
            \includegraphics[width=\linewidth]{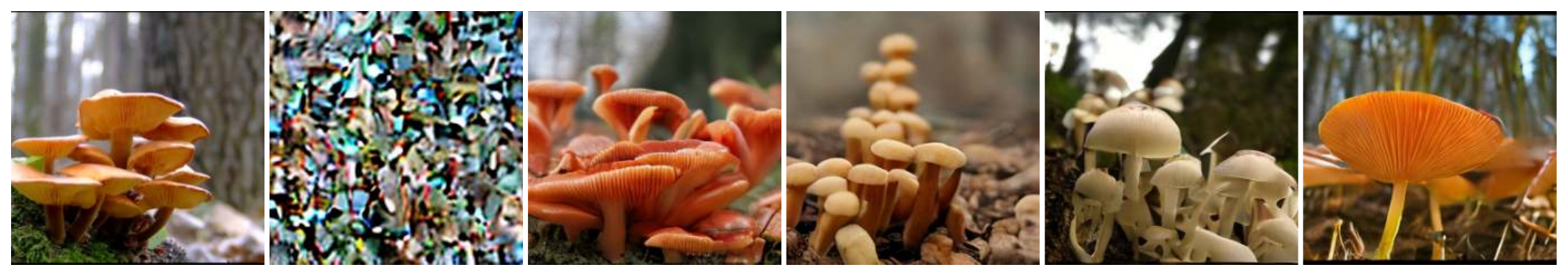} \\
            \includegraphics[width=\linewidth]{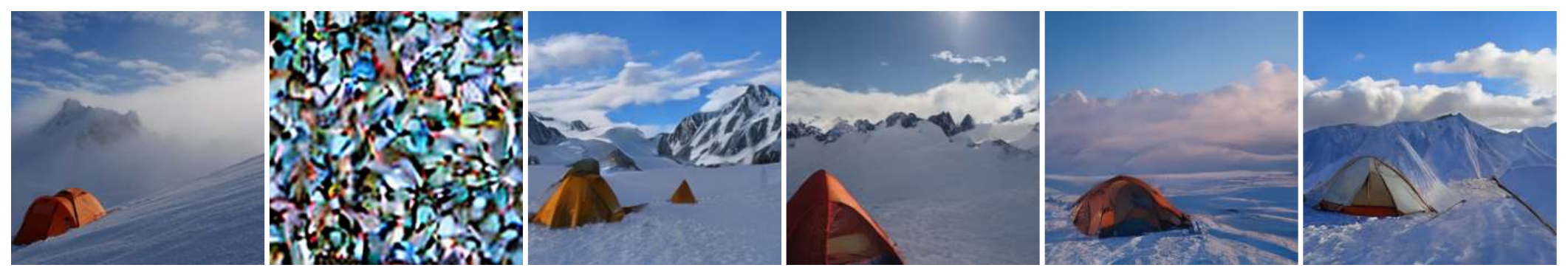} \\
            \includegraphics[width=\linewidth]{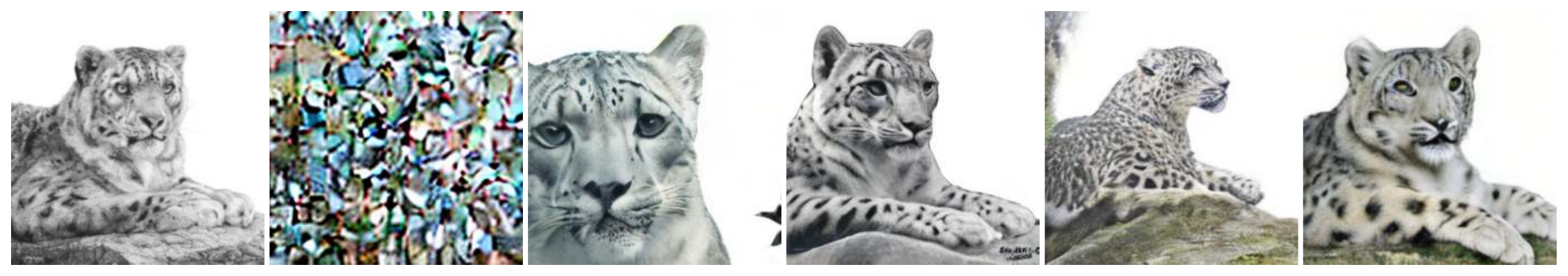} \\
            \includegraphics[width=\linewidth]{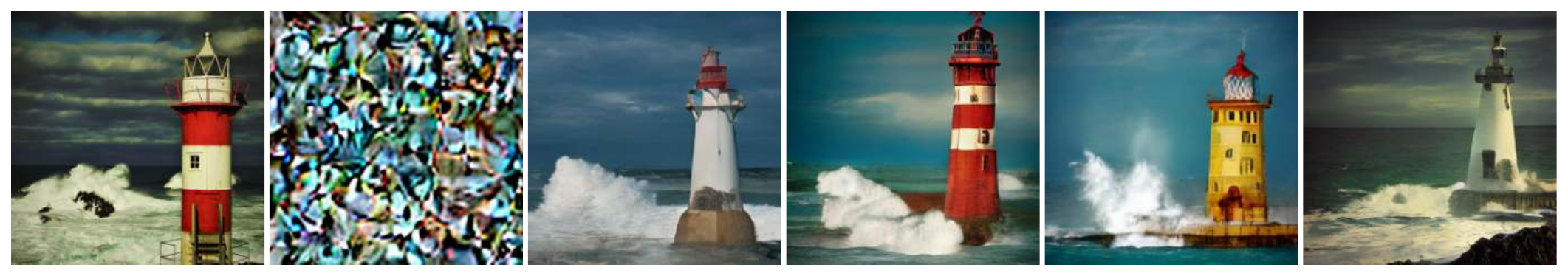} \\
        \end{tabular}
        \caption{\textbf{Large} noise level ($t=0.2$), where the posterior is broad and the samples exhibit noticeable variation while remaining semantically consistent with the original image.}
        \label{fig:imagenet_post_large}
    \end{subfigure}
    \hfill
    \begin{subfigure}[t]{0.49\textwidth}
        \centering
        \begin{tabular}{c}
            \includegraphics[width=\linewidth]{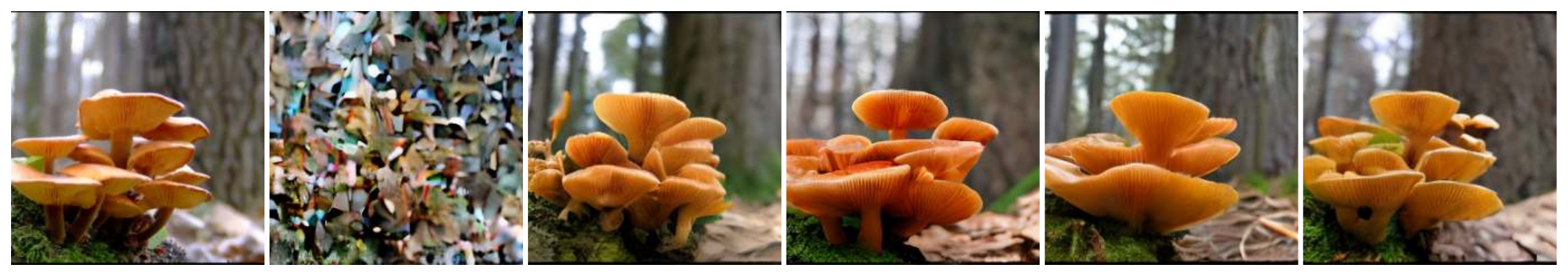} \\
            \includegraphics[width=\linewidth]{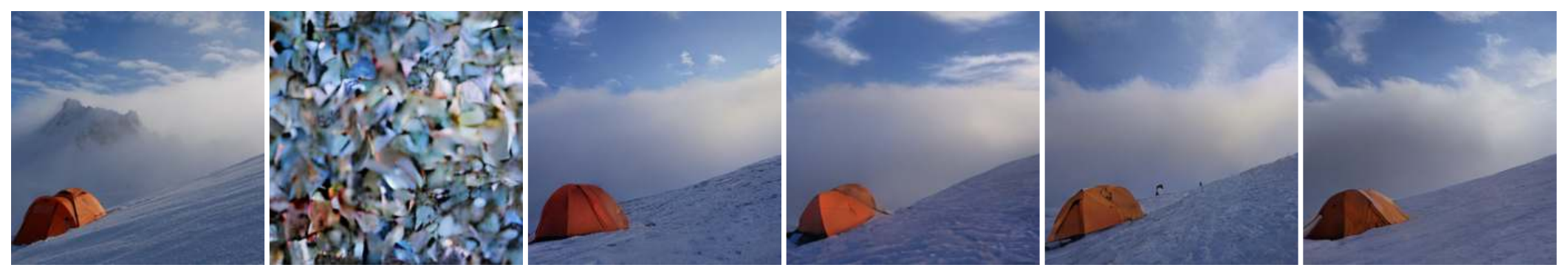} \\
            \includegraphics[width=\linewidth]{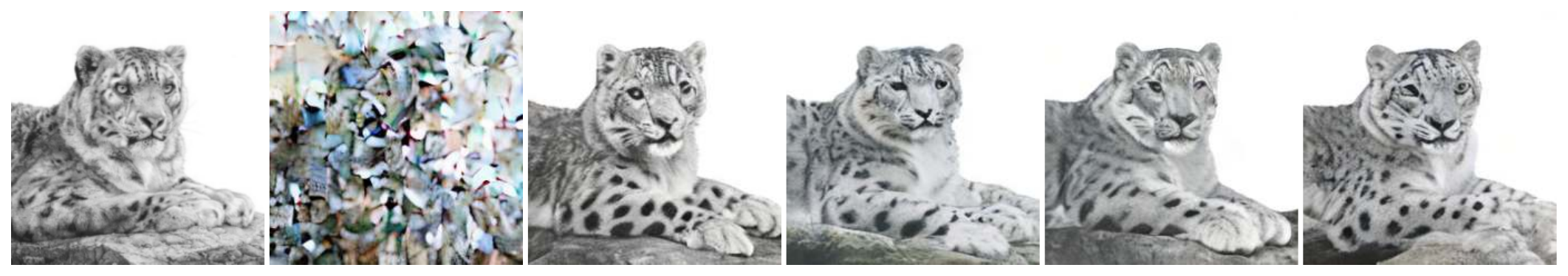} \\
            \includegraphics[width=\linewidth]{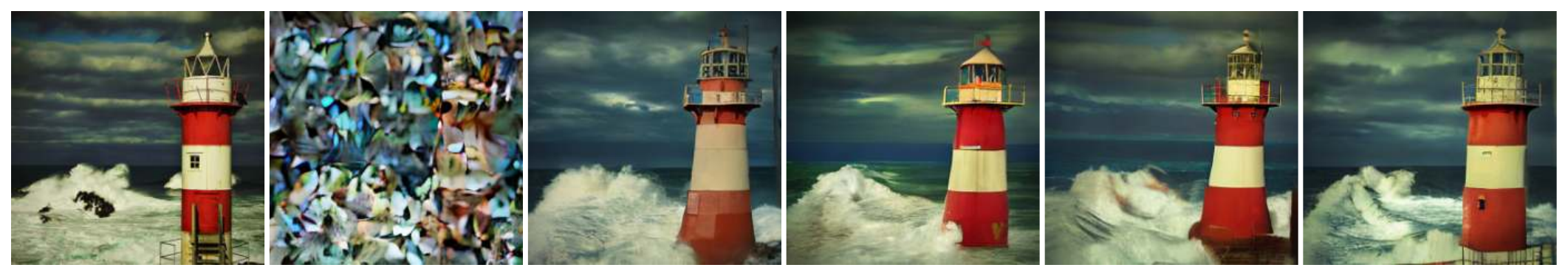} \\
        \end{tabular}
        \caption{\textbf{Small} noise level ($t=0.4$), where the posterior concentrates and the conditional samples are nearly identical to each other and to the original image.}
        \label{fig:imagenet_post_small}
    \end{subfigure}
    \caption{\textbf{Conditional endpoint samples on ImageNet.} In each block, the first column shows a ground-truth ImageNet image, the second column shows a corrupted version $x_t$ at the indicated noise level, and the remaining four columns are four independent one-shot samples from the Meta Flow Map, $\hat{x}^{(i)}_1 = X_{0,1}(\epsilon^{(i)}; t, x_t)$, targeting $p_{1|t}(\cdot \mid x_t)$, for independent $\epsilon^{(i)}$. The noise variables $\epsilon^{(i)}$ for the posterior-sample columns are coupled across the left and right sub-figures, showing how the same $\epsilon^{(i)}$ yields different endpoints as $(t,x_t)$ changes.}

    \label{fig:imagenet_post}
\end{figure}

\subsection{Training}
\label{sec:training}
For training, it is convenient to parametrise the MFM with a neural network in residual form, 
\begin{equation}
    \hat{X}_{s,u}(\bar x; t, x) = \bar x + (u-s)\hat{v}_{s,u}(\bar x; t,x),
\end{equation}
where $\hat{v}$ predicts the average velocity of the auxiliary trajectory. Note that for any \emph{fixed} context pair $(t, x)$, the subnetwork $\hat{X}_{s,u}(\cdot\,; t, x)$ acts as a standard flow map targeting the specific conditional posterior $p_{1|t}(\cdot|x)$. Therefore, we can train it using standard flow map objectives, amortising over all possible contexts by randomly sampling $(t,x)$. We employ a \emph{diagonal loss} to anchor the instantaneous velocity field $\hat v_{s,s}(\cdot\,;t,x)$ to the theoretical velocity $\bar b_s(\cdot\,;t,x)$ of the auxiliary conditional ODE. We use an additional \emph{consistency loss} to ensure the map $\hat X_{s,u}(\cdot\,;t,x)$ integrates this instantaneous velocity correctly over time intervals $(s,u)$, resulting in a valid flow map for each fixed $(t,x)$. We define these terms below for two distinct training scenarios: training directly from data and distillation.

\subsubsection{Training from Data}
\label{sec:train_data}
When training on a dataset, we lack access to the ground-truth vector field $\bar b_s$. Instead, we construct simulation-free targets using the stochastic interpolant framework and employ a self-distillation consistency loss.

\paragraph{Diagonal loss $\mathcal{L}_{\text{diag}}$.}
Recall that $\bar b_s$ is defined as the solution to a conditional flow matching problem~\eqref{eq:cond_flow_matching}. Therefore, to supervise the instantaneous velocity $\hat{v}_{s,s}$, we construct reference trajectories that connect the prior to the correct conditional posterior. We sample times $s,t \sim  \text{Unif}[0,1]$, draw a data sample $I_1 \sim p_1$, set $\bar I_1 = I_1$, draw independent prior samples $I_0, \bar I_0 \sim p_0$, and construct two coupled interpolants:
\begin{equation}
    I_t = \alpha_t I_0 + \beta_t I_1, \qquad \bar{I}_s = \alpha_s \bar I_0 + \beta_s \bar I_1.
\end{equation}
Conditioned on the context state $I_t = x$, the endpoint $I_1$ is distributed exactly according to $p_{1|t}(\cdot | x)$, implying that $\text{Law}\left(\bar I_0, \bar I_1 \mid I_t = x\right) = p_0 \times p_{1|t}(\cdot | x)$. Therefore, the auxiliary path $\bar{I}_s$ traces a valid conditional trajectory from $p_0$ to this posterior. We treat its time derivative, $\frac{d}{ds}{\bar{I}}_s = \dot \alpha_s \bar I_0 + \dot \beta_s \bar I_1$, as the regression target for the diagonal loss:
\begin{equation}
\label{eq:diag_data}
    \mathcal{L}_{\text{diag}}^{\text{data}}(\hat v) := \int_0^1 \!\!\int_0^1\!\!\mathbb E \left \| \hat v_{s,s}(\bar{I}_s; t, I_t) - \tfrac{d}{ds}{\bar{I}}_s \right \|^2\!dsdt.
\end{equation}
Minimising the diagonal loss ensures that $\hat{v}_{s,s}(\bar x; t, x) = \bar b_s(\bar x; t, x)$ for all $t \in [0,1], \bar x, x \in \mathbb{R}^d$.

\paragraph{Self-distillation consistency loss $\mathcal{L}_{\text{cons}}$.}
We require a consistency loss to ensure $\hat{v}_{s,u}$ learns the average velocity for $s \neq u$. This is equivalent to enforcing any of the consistency rules in~\eqref{eq:map:rules} on the map $\hat X_{s,u}(\cdot\,; t, x)$ for every $(t,x)$. Therefore, one simply applies \emph{any} standard self-consistency loss to this conditional map $\hat X_{s,u}(\cdot\,; t, x)$. Table~\ref{tab:objectives} explicitly details four representative examples, though we emphasise that our framework is agnostic to this choice. Note that we rely on self-distillation, where the student $\hat v$ bootstraps its own targets, effectively enforcing consistency relative to its own current predictions. 

\begin{table}[h]
    \centering
    \small 
    \caption{MFM consistency objectives. We adapt the Eulerian, Lagrangian, and Semigroup losses~\citep{boffi2025build} and the Mean Flow loss~\citep{geng2025meanflowsonestepgenerative} to the MFM framework. We distinguish between two supervision variants: teacher-distillation, which leverages the analytical drift $\bar b_s$ of a pretrained flow matching model $b_t$ for stable supervision, and self-distillation, where the model bootstraps its own targets when training directly from data. The operator $\text{sg}(\cdot)$ signifies that we place a stop gradient on the term during optimisation.} 
    \label{tab:objectives}
    \renewcommand{\arraystretch}{1.4}
    \begin{tabular}{l l l} 
    \toprule
    \textbf{Objective} & \multicolumn{1}{c}{\textbf{Distillation}} & \multicolumn{1}{c}{\textbf{Loss Formulation} $\mathcal{L}_{\text{cons}}(\hat v)$}\\
    \midrule
    
    \multirow{2}{*}{\textbf{Eulerian}} 
      & Self & $ \displaystyle \int_0^1\!\!\int_0^u\mathbb{E} \left \|\hat v_{s,u}(I_s; t, I_t') - \textrm{sg}\!\left ((u-s)\partial_s \hat v_{s,u}(I_s; t,I_t') + v_{s,s}(I_s; t, I_t') \cdot \nabla \hat X_{s,u}(I_s; t, I_t') \right )\right \|^2\!\!dsdu$ \\
      & Teacher & $ \displaystyle \int_0^1\!\!\int_0^u\mathbb{E} \left \|\hat v_{s,u}(I_s; t, I_t') - \textrm{sg}\!\left ((u-s)\partial_s \hat v_{s,u}(I_s; t,I_t') + \bar b_{s}(I_s; t, I_t') \cdot \nabla \hat X_{s,u}(I_s; t, I_t') \right )\right \|^2\!\!dsdu$ \\
    \midrule

    \multirow{2}{*}{\textbf{Lagrangian}} 
      & Self & $ \displaystyle \int_0^1\!\!\int_0^u\mathbb E\left \|\hat v_{s,u}(I_s; t, I_t') - \textrm{sg}\!\left (\hat v_{u,u}\big(\hat X_{s,u}(I_s; t, I_t');t, I_t'\big) - (u-s)\partial_u \hat v_{s,u}(I_s; t, I_t') \right )\right \|^2\!\!dsdu$ \\
      & Teacher & $ \displaystyle \int_0^1\!\!\int_0^u\mathbb E\left \|\hat v_{s,u}(I_s; t, I_t') - \textrm{sg}\!\left (\bar b_{u}\big(\hat X_{s,u}(I_s; t, I_t');t, I_t'\big) - (u-s)\partial_u \hat v_{s,u}(I_s; t, I_t') \right )\right \|^2\!\!dsdu$ \\
    \midrule
    
     \textbf{Mean Flow}
      & Self & $ \displaystyle \int_0^1\!\!\int_0^u\mathbb{E} \left \|\hat v_{s,u}(I_s; t, I_t') - \textrm{sg}\!\left ((u-s)\partial_s \hat v_{s,u}(I_s; t,I_t') + \dot I_s \cdot \nabla \hat X_{s,u}(I_s; t, I_t') \right )\right \|^2\!\!dsdu$ \\
    \midrule

    \textbf{Semigroup}
      & Self & $  \displaystyle \int_0^1\!\!\int_0^u\!\!\int_s^u \mathbb{E}\left \|\hat X_{s,u}(I_s; t, I_t') - \textrm{sg}\!\left (\hat X_{w,u}(\hat X_{s,w}(I_s; t, I_t'); t, I_t')\right )\right \|^2\!\!dwdsdu$ \\    
    \bottomrule
    \end{tabular}
\end{table}

\subsubsection{Training via Distillation}
\label{sec:train_teach}
When a pretrained flow matching model with drift $b_t$ is available, we can distill it directly into an MFM, provided the base distribution $p_0$ is Gaussian. We use the pretrained model to construct the target conditional vector fields.

\paragraph{Diagonal loss $\mathcal{L}_{\text{diag}}$.}
To supervise the instantaneous velocity $\hat v_{s,s}$, we substitute the simulation-free data target used in Section~\ref{sec:train_data} with the ground-truth conditional drift $\bar b_s(\cdot\,; t, x)$. As discussed in Section~\ref{sec:limitations}, $\bar b_s$ can be derived analytically from the unconditional teacher drift $b_t$ via the GLASS flow reparametrisation~\citep{holderrieth2025glassflowstransitionsampling}. We regress the MFM's diagonal directly onto this analytical target:
\begin{equation}
\label{eq:diag_GLASS}\mathcal{L}_{\text{diag}}^{\text{teach}}(\hat v) := \int_0^1 \!\!\int_0^1 \!\! \mathbb E \left \|\hat v_{s,s}(\bar x; t, x) - \bar b_s(\bar x; t, x) \right \|^2\!dsdt,
\end{equation}
where the expectation is over the distribution of the query points $\bar x$ and $x$. These can either be constructed from data or from simulating trajectories with the teacher $b_t$. We provide explicit expressions for $\bar b_s$ in terms of $b_t$ in Appendix~\ref{app:glass_flows}. Crucially, since the target vector field is explicitly defined by the teacher, this objective admits a global minimum of zero.

\paragraph{Consistency loss $\mathcal{L}_{\text{cons}}$.}
We again require a consistency loss to ensure the map $\hat{X}_{s,u}$ correctly integrates the velocity field for $s \neq u$. While self-distillation is possible, we favour a \emph{teacher-distillation} approach: since the ground-truth conditional velocity field $\bar{b}_s$ is available analytically via GLASS, we can replace the unstable bootstrap targets in the consistency objectives with fixed targets derived directly from $\bar{b}_s$. For instance, in the Lagrangian objective, we replace the student's velocity estimate $\hat v_{u,u}$ with the teacher's $\bar b_u$. This provides more stable targets for training; we detail these teacher-distillation variants in Table~\ref{tab:objectives}.

\begin{propbox}
\paragraph{MFM Objective.}
Combining the two terms, we define the total MFM training objective as:
\begin{equation}
\label{eq:mfm_total_loss}
\mathcal{L}_{\mathrm{MFM}}(\hat v)
:= \mathcal{L}_{\text{diag}}(\hat v) + \mathcal{L}_{\text{cons}}(\hat v).
\end{equation}
Here, $\mathcal{L}_{\text{diag}}$ provides diagonal supervision via data~\eqref{eq:diag_data} or teacher-distillation~\eqref{eq:diag_GLASS}, while $\mathcal{L}_{\mathrm{cons}}$ enforces consistency on the conditional map $\hat X_{s,u}(\cdot\,; t, x)$ using objectives such as those in Table~\ref{tab:objectives}. Minimising~\eqref{eq:mfm_total_loss} over $\hat v$ ensures that $\hat X$ satisfies the definition of a stochastic flow map in~\eqref{eq:stochastic_flow_map_def} with base noise $q = p_0$. 
Consequently, the map $\hat X_{0,1}(\epsilon; t, x)$ generates one-shot samples directly from the conditional posterior $p_{1|t}(\cdot |x)$ for $\epsilon \sim p_0$.
\end{propbox}

\subsection{Extensions to General Stochastic Processes and Contexts}
While we have focused on learning stochastic flow maps for the specific family of posteriors $p_{1|t}(\cdot |x)$ arising from linear interpolants, our framework extends naturally to \emph{arbitrary} context sets and \emph{general} stochastic processes.

Consider a stochastic process $(X_{\tau})_{\tau \in \mathcal{T}}$ (e.g., video frames or weather observations), which need not be an interpolant or defined by a flow matching process. We define a general context $c = \{(t_i, x_i)\}_{i=1}^M$ as a set of observations at various times and let $r \in \mathcal{T}$ be an arbitrary prediction time. Our objective is to learn an extended Meta Flow Map $X_{s,u}(\cdot\,; r, c)$ that samples from the generalised posterior by satisfying the transport:
\begin{equation}
    X_{0,1}(\cdot\,; r, c)\# q = p_{r|c}(\cdot |c) := \mathrm{Law}(X_r \mid X_{t_1}=x_1, \dots, X_{t_M}=x_M).
\end{equation}
To achieve this, we define a conditional drift $\bar{b}_s(\cdot\,; r, c)$ to construct an auxiliary probability flow over a computational time $s \in [0,1]$ that transports a base measure $q$ to the generalised posterior $p_{r|c}$. The MFM is then trained as the solution operator for this family of auxiliary ODEs. See Appendix~\ref{app:intermediate_time} for further details.

\subsection{Sampling}
\label{subsec:mfm_uncond}
\begin{wrapfigure}{R}{0.35\textwidth}
    \begin{minipage}{\linewidth}
    \vspace{-27pt} 
    \begin{algorithm}[H]
      \caption{$K$-Step MFM Sampler}
      \label{alg:k_step_mfm}
      \begin{algorithmic}
        \STATE {\bfseries Input:} MFM $X$; times $0=t_0<\cdots<t_K=1$
        \STATE Sample $x_0 \sim p_0$
        \FOR{$k = 0$ {\bfseries to} $K-1$}
          \STATE Sample iid $\epsilon^{(k)},\, x_0^{(k)} \sim p_0$
          \STATE $\hat x_1^{(k)} \gets X_{0,1}(\epsilon^{(k)}; t_k, x_{t_k})$ \label{step:gamma_step}
          \STATE $x_{t_{k+1}} \gets \alpha_{t_{k+1}} x_0^{(k)} + \beta_{t_{k+1}} \hat x_1^{(k)}$
        \ENDFOR
        \STATE {\bfseries Output:} Sample $x_1$
      \end{algorithmic}
    \label{alg:k_step_uncond}
    \end{algorithm}
    \end{minipage}
    \vspace{-23.5pt} 
\end{wrapfigure}
MFMs support several approaches to sampling from $p_1$, either in one shot or as a multi-step refinement.

\paragraph{One-step sampler.} A simple sampler draws $\epsilon \sim p_0$ and applies the one-step map at $t=0$ for any choice of $x \in \mathbb R^d$:
\begin{equation}
    \hat x_1 = X_{0,1}(\epsilon; 0, x).
\end{equation}
Since $I_0$ is independent of $I_1$, for any $x$ we have that $p_{1|0}(\cdot | x) = p_1$. Thus a perfectly trained MFM yields $\hat x_1 \sim p_1$.

\paragraph{$K$-step sampler.}
We can sample using a $K$-step refinement procedure as outlined in Algorithm~\ref{alg:k_step_uncond}. With a well-trained MFM, each $\hat x_1^{(k)} \sim p_1$, so $x_{t_{k+1}}$ has marginal density $p_{t_{k+1}}$ for all $k$. This iterative procedure improves sample quality, as it increasingly relies on one-step maps at larger conditioning times $t$, which are typically easier to learn accurately. While we show the algorithm using the same Gaussian path as the original interpolant \eqref{eq:si}, by reparametrisation the same $K$-step sampler applies to any other interpolant path $\tilde \alpha_t, \tilde \beta_t$ (see Appendix~\ref{app:mfm_sampling_diff_interp}). See Appendix~\ref{sec:connection_gamma_sampling} for a discussion on the connection to $\gamma$-sampling.

\section{MFMs for Reward Alignment}

\subsection{Inference-Time Steering}
\label{sec:inference_time_steering}

Next, we demonstrate how MFMs enable efficient inference-time steering. Our core approach is to estimate the optimal steering drift directly and to simulate the resulting steered dynamics.

As established in Section~\ref{section:inference_time_steering}, the optimally steered processes~\eqref{eq:sde_steering} and~\eqref{eq:doob_ode} targeting $p_{\text{reward}}$ are governed by drifts that combine the marginal drift $b_t$, the score $\nabla \log p_t$, and the gradient of the value function $\nabla V_t$. We can extract $b_t$ and $\nabla \log p_t$ directly from a trained MFM, but we must estimate the value function's gradient. 
\paragraph{Extracting the unconditional drift and score.}
The first component, the marginal drift $b_t$, corresponds to the unconditional flow from $p_0$ to $p_1$. Recall that for any $x_0 \in \mathbb R^d$, the conditional distribution $p_{1|0}(\cdot | x_0)$ is simply the marginal data distribution $p_1$. Consequently, the flow map $X_{s,u}(\cdot\,;0,x_0)$ recovers the unconditional flow governed by $b_t$. By the tangent condition~\eqref{eq:tangent}, we can extract the instantaneous drift for any state $x \in \mathbb R^d$ using any conditioning point $x_0$:
\begin{equation}
\label{eq:mfm_drift_extraction}
    b_t(x) = v_{t,t}(x; 0, x_0).
\end{equation}
If the base distribution $p_0$ is Gaussian, then the score function $\nabla \log p_t(x)$ required for SDE simulation is also available via a linear reparametrisation of this drift $b_t$~\citep{albergo2023stochastic}. Rather than explicitly forming $b_t$, one can also perform Euler-style updates using short flow segments (see Appendix~\ref{app:short_flow}).

\paragraph{Estimating $\nabla V_t$.}
We return to the core motivation behind MFMs: the estimation of the gradient of the value function. We leverage the estimators $\widehat {\nabla V_t(x)}$ from Section~\ref{sec:estimators}, which require computing expectations over the conditional posterior $p_{1|t}(\cdot | x)$. Recalling the specific diffusion schedule $\sigma_t$ assumed earlier, the MFM generates direct samples from the posterior of the SDE~\eqref{eq:sde} via $X_{0,1}(\epsilon; t,x)$ for $\epsilon \sim p_0$. More general SDEs can be handled via reparametrisation (see Appendix~\ref{app:mfm_reparam}).

Given access to these posterior samples, we can directly implement MFM-GF~\eqref{eq:self_norm_estimator}. To implement MFM-G~\eqref{eq:gradient_estimator}, we set $q = p_0$ and $\Phi = X_{0,1}$, giving the estimator:
\begin{equation}
\label{eq:gradient_estimator_sec5}
    \widehat{\nabla V_t(x)}
    = \nabla_x\log\!\left(
    \frac{1}{N}\sum_{i=1}^{N}
    \exp\big(r(X_{0,1}(\epsilon^{(i)};t,x))\big)
    \right), \qquad \epsilon^{(i)} \overset{\text{iid}}{\sim} p_0.
\end{equation}
We remark that the estimators~\eqref{eq:self_norm_estimator} and~\eqref{eq:gradient_estimator} are not specific to MFMs; they hold for any stochastic flow map that satisfies the conditional transport constraint. We also note that similar estimators to~\eqref{eq:gradient_estimator} have appeared in the steering literature before. However, since prior works lacked access to efficient, differentiable samples from the true conditional posterior, they relied on heuristic approximations. For instance, \cite{bansal2023universalguidancediffusionmodels, yu2023freedomtrainingfreeenergyguidedconditional, chung2024diffusionposteriorsamplinggeneral} can be interpreted as approximating the estimator with a point mass at the denoising mean estimate. Similarly, \cite{song2023loss} explicitly acknowledges the intractability of the true posterior and approximates it as a Gaussian centred at the denoising mean. Consequently, the practical utility of these estimators has historically been constrained by the prohibitive cost of exact trajectory unrolling or the bias of these heuristic approximations. Stochastic flow maps, such as MFMs, overcome this bottleneck by providing the efficient, differentiable access to $p_{1|t}(\cdot|x)$ required to deploy these estimators faithfully.

\paragraph{Steering via estimated dynamics.}
Substituting either estimator directly into the controlled SDE~\eqref{eq:sde_steering} yields:
\begin{align}
    dX_t^* &= \left[ b_t(X_t^*) + \frac{\sigma_t^2}{2} \nabla \log p_t(X_t^*) + \sigma_t^2 \widehat{\nabla V_t(X_t^*)} \right] dt + \sigma_t dB_t, \qquad X_0^* \sim p_0.
    \label{eq:steered_sde_approx}
\end{align}
These dynamics can be simulated via the Euler-Maruyama scheme provided in Algorithm~\ref{alg:mfm_steering_sde}. Analogously, to enable ODE sampling, we apply the estimators to the probability flow ODE formulation~\eqref{eq:doob_ode} giving:
\begin{align}
    \dot{x}_t^{\star} &= b_t(x_t^{\star}) + \frac{\sigma_t^2}{2} \widehat{\nabla V_t(x_t^{\star})}, \qquad x_0^{\star} \sim p_0.\label{eq:steered_ode_approx}
\end{align}
We can integrate this ODE using any numerical integrator; we provide an Euler implementation in Algorithm~\ref{alg:mfm_steering}.

\paragraph{Convergence guarantees.}
Even with an optimally trained MFM, the practical samplers~\eqref{eq:steered_sde_approx} and~\eqref{eq:steered_ode_approx} incur errors from both time discretisation and Monte Carlo estimation of the optimal drift. Below we explicitly quantify these convergence rates for the SDE sampler.
\begin{propbox}
\begin{proposition}[Convergence Rates]
\label{prop:convergence}
Let $\hat{p}_1$ denote the terminal distribution generated by the MFM steering (SDE) sampler~\eqref{eq:steered_sde_approx} using $K$ uniform Euler–Maruyama steps and $N$ independent Monte Carlo samples per step. Under suitable regularity assumptions, the convergence to the target $p_{\mathrm{reward}}$ satisfies:
\begin{equation}
    W_2(\hat{p}_1, p_{\mathrm{reward}}) \leq C \left( \frac{1}{\sqrt{K}} + \frac{1}{N} \right) \quad \text{and} \quad \mathrm{KL}(\hat{p}_1 \| p_{\mathrm{reward}}) \leq C \left( \frac{1}{K} + \frac{1}{N} \right),
\end{equation}
for a constant $C > 0$ independent of $K$ and $N$.
\end{proposition}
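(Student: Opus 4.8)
The plan is to separate the total error into a time-discretization contribution and a Monte Carlo contribution, and to handle each with a standard tool. For the $W_2$ bound I would introduce the auxiliary sampler $(\tilde X_{t_k})_k$ that runs the same $K$-step Euler--Maruyama scheme as~\eqref{eq:steered_sde_approx} but with the \emph{exact} optimal drift $b^\star_t = b_t + \tfrac{\sigma_t^2}{2}\nabla\log p_t + \sigma_t^2\nabla V_t$, driven by the same Brownian increments as $\hat X$. Then $W_2(\hat p_1,p_{\mathrm{reward}}) \le W_2(\hat p_1,\tilde p_1) + W_2(\tilde p_1,p_{\mathrm{reward}})$, and the second term is the pure Euler--Maruyama error for the SDE~\eqref{eq:sde_steering}, whose continuous-time terminal law is exactly $p_1^\star = p_{\mathrm{reward}}$; under Lipschitzness of $b^\star$ and standard moment bounds this is $O(K^{-1/2})$. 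For the $\mathrm{KL}$ bound I would instead control the path-space divergence $\mathrm{KL}(\mathrm{Law}(\hat X_{[0,1]})\,\|\,\mathrm{Law}(X^\star_{[0,1]}))$ directly via Girsanov, which equals $\tfrac12\int_0^1 \sigma_t^{-2}\,\mathbb E\|\hat b_t - b^\star_t(\hat X_t)\|^2\,dt$, and then apply the data-processing inequality to pass to the terminal marginals. Both routes thus reduce the problem to an $L^2$-in-expectation bound on the drift discrepancy along the trajectory.

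\paragraph{Estimator decomposition.}
The core step is controlling the per-step estimation error of $\widehat{\nabla V_{t_k}}$ for the estimators~\eqref{eq:self_norm_estimator} and~\eqref{eq:gradient_estimator}. Both are smooth functionals of empirical averages of $N$ i.i.d.\ terms --- a self-normalized ratio, respectively a gradient of a log-of-average --- so a delta-method / ratio-estimator expansion gives the decomposition
\begin{equation}
\widehat{\nabla V_{t_k}}(x) = \nabla V_{t_k}(x) + \beta_N(x) + \xi^{(k)}_N(x),
\end{equation}
where $\beta_N$ is a \emph{deterministic systematic bias} with $\sup_x\|\beta_N(x)\| = O(1/N)$ and $\xi^{(k)}_N$ is a mean-zero fluctuation with $\mathbb E\|\xi^{(k)}_N(x)\|^2 = O(1/N)$; the hidden constants are uniform in $(t,x)$ provided $r$, $\nabla r$, and the map $X_{0,1}$ are suitably bounded and Lipschitz and the trajectory has bounded moments. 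The key structural fact is that step $k$ uses a \emph{fresh} independent noise batch $\{\epsilon^{(i,k)}\}_i$, so conditionally on the state entering that step $\xi^{(k)}_N$ has conditional mean zero; the $\{\xi^{(k)}_N\}_k$ therefore form a martingale-difference array along the trajectory, and their accumulated contributions combine orthogonally rather than by the triangle inequality.

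\paragraph{Assembling the rates.}
For $W_2$, a synchronous coupling of $\hat X$ and $\tilde X$ together with a discrete Gr\"onwall estimate gives
\begin{equation}
\mathbb E\|\hat X_1 - \tilde X_1\|^2 \;\lesssim\; \Big(\textstyle\sum_k h\,\sigma^2_{t_k}\|\beta_N\|\Big)^{\!2} + \sum_k h^2\sigma^4_{t_k}\,\mathbb E\|\xi^{(k)}_N\|^2 \;=\; O(1/N^2) + O(1/(KN)),
\end{equation}
since $\sum_k h = 1$ makes the systematic bias contribute $O(1/N)$ to the displacement, while the martingale increments contribute orthogonally with $Kh^2 = 1/K$. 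Hence $W_2(\hat p_1,\tilde p_1) \lesssim 1/N + 1/\sqrt{KN}$, and $1/\sqrt{KN}\le \tfrac12(1/\sqrt K + 1/N)$, so combining with the discretization term yields $W_2(\hat p_1,p_{\mathrm{reward}}) \lesssim K^{-1/2} + N^{-1}$. For $\mathrm{KL}$, I split the Girsanov integrand $\hat b_t - b^\star_t(\hat X_t)$ into (i) frozen-time/frozen-state discretization of $b$, $\nabla\log p$, and $\nabla V$, contributing $O(h) = O(1/K)$ under Lipschitz regularity and $\mathbb E\|\hat X_t - \hat X_{t_k}\|^2 = O(h)$, and (ii) the estimation error $\sigma^2_{t_k}(\beta_N + \xi^{(k)}_N)$, whose expected squared norm is $O(1/N^2) + O(1/N) = O(1/N)$ (the cross term vanishes by conditional centering of $\xi^{(k)}_N$). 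This yields a path-space bound $O(1/K + 1/N)$, and the data-processing inequality transfers it to $\mathrm{KL}(\hat p_1\|p_{\mathrm{reward}})$.

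\paragraph{Main obstacle.}
The delicate parts are: (a) establishing the estimator bias/variance bounds \emph{uniformly} over $(t,x)$, which requires the moment and smoothness control on $r,\nabla r, X_{0,1}$ and a priori moment bounds on the discretized chain --- this is where most of the ``suitable regularity'' hypotheses are spent; (b) integrability near the terminal time, where $\sigma^2_t\nabla V_t$ and the weights $\sigma^4_{t_k}/\sigma^2_t$ in the Girsanov integral can be singular as $t\to1$, so one must either use decay of $\sigma_t$ or stop the scheme at $1-\delta$; and (c) correctly invoking the martingale structure of the per-step fresh randomness --- without it the fluctuation term degrades to $O(1/\sqrt N)$ and the advertised $1/N$ rate is lost.
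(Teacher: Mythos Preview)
Your proposal is correct and follows essentially the same strategy as the paper: delta-method bias/variance bounds of order $1/N$ for the self-normalized estimator, synchronous coupling plus Gr\"onwall for $W_2$, Girsanov plus data-processing for $\mathrm{KL}$, and crucially the martingale-difference structure of the fresh per-step noise to get the $O(1/(KN))$ accumulated fluctuation. The only cosmetic difference is that for $W_2$ you insert an intermediate discretized process $\tilde X$ with exact drift and use the triangle inequality, whereas the paper couples $\hat X$ directly to the continuous-time optimal process $X^\star$ and absorbs the discretization into a third error term $D_t$ alongside the bias $\Delta_t$ and martingale $\xi_t$ terms; both decompositions yield the same rates, and your concerns in (b) about singularity at $t=1$ are handled in the paper exactly as you suggest (stopping at $1-\varepsilon$, and assuming $\sigma_t\ge\sigma_{\min}>0$).
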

\end{propbox}
We provide a formal statement and proof in Appendix~\ref{app:convergence_proof}. We note that the constant $C$ depends exponentially on the dimension $d$. Additionally, while the KL bound relies on Girsanov's theorem and is specific to the stochastic setting, a comparable $W_2$ bound holds for the ODE sampler through a similar stability analysis.

\subsection{Training-Time Fine-Tuning}

Besides inference-time steering, MFMs also facilitate efficient \textit{training-time} alignment. Given a pretrained flow matching model $b_t$ targeting $p_1$ (potentially extracted from an MFM), we can fine-tune a new model $\hat{b}_t$ to permanently capture the optimal steering drift $b_t^{\star}$ defined in~\eqref{eq:doob_ode}. This fine-tuned drift targets the reward-tilted distribution $p_{\text{reward}}$. Furthermore, this approach can be integrated into a self-distillation framework to directly learn the tilted flow map that compresses the trajectories of $b_t^\star$ into a one-step model~\citep{boffi2025build}, or be used for teacher-distillation (see Section~\ref{sec:train_teach}) to train the corresponding tilted MFM. This extends the capabilities of MFMs beyond inference-time steering to enable permanent reward alignment of generative models.

We parametrise our fine-tuning network $\hat{b}_t : \mathbb{R}^d \to \mathbb{R}^d$ and seek an objective to learn $b_t^{\star}$. A naive approach would be to regress $\hat{b}_t$ directly onto the Monte Carlo drift estimators~\eqref{eq:self_norm_estimator} or~\eqref{eq:gradient_estimator}. However, these estimators are \textit{self-normalised}---taking the form of a ratio of sample means---and therefore exhibit a small but non-zero bias for any finite number of Monte Carlo samples $N$. To circumvent this, we construct an unbiased objective by rearranging the definition of the optimal drift. Recall from~\eqref{eq:doob_ode} and~\eqref{eq:grad_reparam} that:
\begin{equation}
    b_t^{\star}(x) =  b_t(x) + \tfrac{\sigma_t^2}{2} \nabla V_t(x) =b_t(x) + \frac{\sigma_t^2}{2}\frac{\mathbb E_{\epsilon \sim p_0}\left[\nabla\exp \left (r(X_{0,1}(\epsilon; t, x)) \right )\right]}{\mathbb E_{\epsilon \sim p_0}\left[\exp \left (r(X_{0,1}(\epsilon; t, x)) \right )\right]}.
\end{equation}
To ensure the learned drift $\hat b_t$ matches the optimal $b_t^{\star}$ without computing the ratio explicitly, we multiply through by the denominator. This yields the following \textit{implicit optimality condition}, which holds if and only if $\hat{b}_t(x) = b_t^\star(x)$:
\begin{equation}
    \mathbb{E}_{\epsilon \sim p_0} \left [\exp \left (r(X_{0,1}(\epsilon; t, x)) \right )\left (\hat{b}_t(x) - b_t(x) \right ) - \tfrac{\sigma_t^2}{2}\nabla \exp \left (r(X_{0,1}(\epsilon; t, x)) \right ) \right ] = 0.
\end{equation}
We can enforce this condition by minimising a surrogate regression loss designed such that its gradient expectation vanishes exactly when the condition holds.

\begin{propbox}
\paragraph{Unbiased fine-tuning objective (MFM-FT).}
The optimal steering drift $b_t^{\star}$ is recovered as the unique fixed point of the following objective:
\begin{equation}
\label{eq:fine_tuning}
    \mathcal{L}(\hat b) := \int_0^1\mathbb{E} \left [ \left \| (\hat{b}_t(x) - b_t(x)) + (e^{r(X_{0,1}(\epsilon; t, x))} - 1) \, \text{sg}(\hat{b}_t(x) - b_t(x)) - \tfrac{\sigma_t^2}{2} \nabla e^{r(X_{0,1}(\epsilon; t, x))} \right \|^2 \right ]dt,
\end{equation}
where the expectation is taken over $\epsilon \sim p_0$ and where  $x \in \mathbb{R}^d$ can be sampled from any distribution with full support. Here, $\text{sg}(\cdot)$ denotes the stop gradient operator.
\end{propbox}
Remarkably, notice that the MFM-FT objective in~\eqref{eq:fine_tuning} is explicitly an off-policy objective. The loss is defined pointwise for any $t \in [0,1]$ and $x \in \mathbb{R}^d$ and so we can sample these from any distribution. In practice, we sample $x$ to be a draw from the interpolant $I_t$ because this is simulation free and should cover the support where we care about learning. We emphasise that this is in contrast to many fine-tuning methods which rely on on-policy simulation, i.e., they must draw samples from the current model. We can also sample $x$ using the current $\hat b_t$ in order to sample $x$ in the most important regions, but this is not necessary.

\begin{figure}[t]
\centering

\begin{minipage}[t]{0.49\textwidth}
\vspace{0pt}
\begin{algorithm}[H]
  \caption{MFM Training (From Data)}
  \label{alg:mfm_training}
  \begin{algorithmic}
    \STATE {\bfseries Input:} Initial parameters $\theta$ of $\hat v$
    \WHILE{not converged}
        \STATE Sample $I_0 \sim p_0$, $I_1 \sim p_1$, $t \sim \mathrm{Unif}[0,1]$
        \STATE $I_t \gets \alpha_t I_0 + \beta_t I_1$
        \STATE Sample $\bar I_0 \sim p_0$, $s \sim \mathrm{Unif}[0,1]$
        \STATE $\bar I_s \gets \alpha_s \bar I_0 + \beta_s I_1$
        \STATE $\tfrac{d}{ds}\bar I_s \gets \dot \alpha_s \bar I_0 + \dot \beta_s I_1$
        \STATE \# Monte Carlo loss estimates
        \STATE $\mathcal{L}_{\text{diag}} \gets \|\hat v_{s,s}(\bar I_s; t, I_t) - \tfrac{d}{ds}\bar I_s\|^2$
        \STATE $\mathcal{L}_{\mathrm{cons}} \gets$ self-distillation loss (Table~\ref{tab:objectives})
        \STATE $\mathcal{L}_{\mathrm{MFM}} \gets \mathcal{L}_{\text{diag}} + \mathcal{L}_{\mathrm{cons}}$
        \STATE Compute $\nabla_\theta \mathcal{L}_{\mathrm{MFM}}$
        \STATE Update $\theta$ by gradient descent
    \ENDWHILE
    \STATE {\bfseries Output:} Trained parameters $\theta$
  \end{algorithmic}
\end{algorithm}
\end{minipage}
\hfill
\begin{minipage}[t]{0.49\textwidth}
\vspace{0pt}
\begin{algorithm}[H]
  \caption{MFM Steering (ODE)}
  \label{alg:mfm_steering}
  \begin{algorithmic}
    \STATE {\bfseries Input:} Reward $r(x)$; MFM $X$; times $0=t_0<\cdots<t_K=1$; MC batch size $N$
    \STATE Initialize $x_{0} \sim p_{0}$
    \FOR{$k = 0$ {\bfseries to} $K-1$}
        \STATE $dt \gets t_{k+1} - t_k$
        \STATE $\sigma_{t_k}^2 \gets 2\!\left(\tfrac{\dot \beta_{t_k}}{\beta_{t_k}}\alpha_{t_k}^2 - \dot \alpha_{t_k} \alpha_{t_k}\right)$
        \STATE \# Drift extraction
        \STATE $b_{t_k}(x_{t_k}) \gets v_{{t_k},{t_k}}(x_{t_k}; 0, \vec{0})$
        \STATE \# Monte Carlo steering drift estimation
        \STATE Sample iid $\epsilon^{(n)} \sim p_0$ for $n = 1,\dots, N$
        \STATE $\hat x_1^{(n)} \gets X_{0,1}(\epsilon^{(n)}, t_k, x_{t_k})$
        \STATE Compute $\widehat{\nabla V_{t_k}(x_{t_k})}$ via~\eqref{eq:self_norm_estimator} or~\eqref{eq:gradient_estimator}
        \STATE $x_{t_{k+1}} \gets x_{t_k}
          + dt \cdot b_{t_k}(x_{t_k})
          + \tfrac{dt \cdot \sigma_{t_k}^2}{2}\,\widehat{\nabla V_{t_k}(x_{t_k})}$
    \ENDFOR
    \STATE {\bfseries Output:} Steered sample $x_{1}$
  \end{algorithmic}
\end{algorithm}
\end{minipage}
\end{figure}

\section{Related Work}
\label{sec:related_work}
\paragraph{Inference-time alignment \& steering.}
Inference-time methods aim to adapt the sampling dynamics of a fixed pretrained model to sample from the reward-tilted distribution $p_{\text{reward}}$. Existing approaches broadly fall into two categories: methods that aim to approximate the exact tilted dynamics, and particle-based methods that rely on resampling or search. For the first class, many methods can be seen as attempting to perform exact steering by approximating the true posterior distribution $p_{1|t}(\cdot | x)$ with a surrogate. One common heuristic is to approximate this posterior with a point mass (for example at the conditional expectation of data, i.e., the denoised estimate). This heuristic includes methods such as DPS \citep{chung2024diffusionposteriorsamplinggeneral}, FreeDoM \citep{yu2023freedomtrainingfreeenergyguidedconditional}, MPGD \citep{he2023manifoldpreservingguideddiffusion}, or unified frameworks based on a similar principle  \citep{bansal2023universalguidancediffusionmodels, ye2024tfgunifiedtrainingfreeguidance}. Other methods attempt to have a more refined approximation to this posterior, such as LGD \citep{song2023loss}, which approximates the posterior as a Gaussian centred at the posterior mean with a manually selected variance. While efficient, these approximations introduce significant bias and often fail to guide trajectories correctly in multimodal or nonlinear settings where the mean does not represent a valid data sample \citep{he2023manifoldpreservingguideddiffusion}. Concurrent to our work, \citet{holderrieth2026diamondmapsefficientreward} propose a similar approach to inference-time alignment leveraging stochastic flow maps trained using GLASS distillation. We also propose training directly from data, and a novel, unbiased reward fine-tuning objective.

Conversely, exact sampling methods such as Sequential Monte Carlo (SMC) reweight or resample trajectories to strictly target the tilted distribution \citep{ wu2024practicalasymptoticallyexactconditional, skreta2025feynmankaccorrectorsdiffusionannealing, singhal2025generalframeworkinferencetimescaling}. Although unbiased in principle, these methods require a prohibitively large number of particles to avoid weight degeneracy and collapse \citep{ObstaclestoHighDimensionalParticleFiltering, Bickel_2008}. Recent search-based methods attempt to mitigate this by estimating intermediate rewards via explicit rollouts \citep{li2025dynamicsearchinferencetimealignment, zhang2025inferencetimescalingdiffusionmodels}, but this still incurs a substantial computational cost per sampling step. We refer to \cite{uehara2025inferencetimealignmentdiffusionmodels} for an overview.

\paragraph{Few-step samplers.}
Our training scheme for MFMs leverages learning objectives from the literature on consistency models \citep{song2023consistencymodels,song2023improvedtechniquestrainingconsistency} and flow maps \citep{kim2024consistencytrajectorymodelslearning,frans2024step,geng_consistency_2024,geng2025improvedmeanflowschallenges, boffi_flow_2024,boffi2025build, sabour2025align}. However, unlike existing approaches, which aim to accelerate sampling, MFMs must provide access to cheap (one-step) differentiable samples from $p_{1|t}(\cdot|x)$, for all $(t,x)$, to support efficient estimation of the tilted drift. 

\paragraph{Posterior sampling.}
A critical bottleneck in exact steering is the need to efficiently sample from the conditional posterior $p_{1|t}(\cdot|x)$. Many prior works rely on trajectory rollouts of SDEs~\citep{elata2023nesteddiffusionprocessesanytime,li2025dynamicsearchinferencetimealignment, zhang2025inferencetimescalingdiffusionmodels,jain2025diffusiontreesamplingscalable}. Due to the inefficiencies of SDE sampling, recent work has introduced training-free methods to enable more efficient ODE samplers. In the case where the prior is Gaussian, GLASS Flows \citep{holderrieth2025glassflowstransitionsampling} leverage the sufficient statistics of Gaussian integrals to reparametrise standard pretrained models into transition samplers. However, this method still relies on solving expensive ODEs during inference.  Unlike GLASS, MFMs eliminate the need for this iterative integration. Alternative approaches explicitly learn these transitions during training. For instance, Distributional Diffusion \citep{debortoli2025distributionaldiffusionmodelsscoring} and Gaussian Mixture Flow Matching \citep{chen2025gaussianmixtureflowmatching} train models to output posterior distributions directly via proper scoring rules or mixture approximations. While Distributional Diffusion also trains a stochastic flow map, training with scoring rules can be challenging to tune and scale. In particular, they were not successful in implementing their approach for ImageNet ($64\times 64$), while we show MFMs scale successfully to ImageNet ($256\times 256$).

\paragraph{Value function estimation.}
We mention the closely related field of neural sampling. Neural samplers aim to generate samples from a target distribution $p_1$ given access only to its unnormalised density $\tilde p_1$. These methods are related to steering algorithms as sampling can often be rephrased as modifying a process that targets a simple reference distribution $p_{\text{ref}}$ and steering it to sample from $p_1$ by choosing the reward $r(x) = \log \tilde p_1(x) - \log p_{\text{ref}}(x)$. In this way, steering algorithms and neural samplers can often be repurposed for the reciprocal task. Many existing neural samplers approach sampling as a stochastic optimal control problem and therefore also require obtaining estimates of the value function and its gradient. Within this field, we highlight a class of methods that use gradient-free Monte Carlo estimators for these objects~\citep{huang2021schrodingerfollmersamplersamplingergodicity, vargas2022bayesianlearningneuralschrodingerfollmer, akhoundsadegh2024iterateddenoisingenergymatching}. However, these methods do not have access to true data samples and so they cannot obtain posterior samples directly. As a result, they often suffer from high variance. 

\paragraph{Generative fine-tuning.} Beyond inference-time steering, permanent weight adaptation is another dominant strategy for alignment. Existing work broadly follows two paradigms: \textit{(A) Reward maximisation} techniques such as D-Flow \citep{benhamu2024dflowdifferentiatingflowscontrolled} and DRaFT \citep{clark2024directlyfinetuningdiffusionmodels} that directly optimise the expected reward. However, this often leads to mode collapse and overfitting (Goodhart's Law) as the model collapses to a single high-reward mode rather than the true posterior; \textit{(B) Distribution Matching} techniques aim to align the model with the reward-tilted distribution, preserving diversity. Notable examples include DEFT \citep{denker2025deftefficientfinetuningdiffusion}, Adjoint Matching \citep{domingoenrich2025adjointmatchingfinetuningflow}, Tilt Matching~\citep{TM} and diffusion variants of DPO \citep{wallace2023diffusionmodelalignmentusing}. 

\section{Experiments}
In this section, we steer MFMs using the MC estimators of the optimal drift presented in~\eqref{eq:self_norm_estimator} (MFM-GF) and~\eqref{eq:gradient_estimator} (MFM-G) across a range of tasks. In Section~\ref{sec:guidance}, we present guidance experiments where we target a conditional posterior, namely for an inverse problem on 2D Gaussian Mixture Models (GMMs), and a multi-modal class-conditional sampling problem on MNIST. In Section~\ref{sec:imagenet_experments}, we scale the MFM framework to ImageNet, and present results on reward steering and fine-tuning. For detailed experiment descriptions, additional results and any hyperparameters, please see Appendix~\ref{sec:extended_results}.

\subsection{Guidance}
\label{sec:guidance}
\subsubsection{Gaussian Mixture Models}
We first steer a 2D GMM prior towards an inverse problem posterior, $p(x|y_\text{obs})$, governed by a linear observation likelihood, $p(y_\text{obs}|x)=\mathcal{N}(y_\text{obs};Ax, \sigma^2I)$, where $A$ denotes the linear measurement operator. Note that this synthetic experiment admits an analytic posterior allowing us to directly assess the sampling fidelity of different steering procedures, and as such, serves as a proof-of-concept. We compare both of the MC estimators against Diffusion Posterior Sampling (DPS)~\citep{chung2024diffusionposteriorsamplinggeneral} and SMC, namely the Twisted Diffusion Sampler \citep{wu2024practicalasymptoticallyexactconditional}, which uses Tweedie's estimate of the reward to define intermediate targets. Our steering schemes outperform DPS and TDS even with as few as $N=2$ or 4 MC samples, with DPS significantly over-representing the largest mode (Figure~\ref{fig:linear_gmm_main}).

\begin{figure}[t]
\centering
\begin{minipage}[t]{0.68\linewidth}
  \vspace{0pt}
  \centering
  \begin{subfigure}[t]{0.49\linewidth}
    \centering
    \includegraphics[height=5cm]{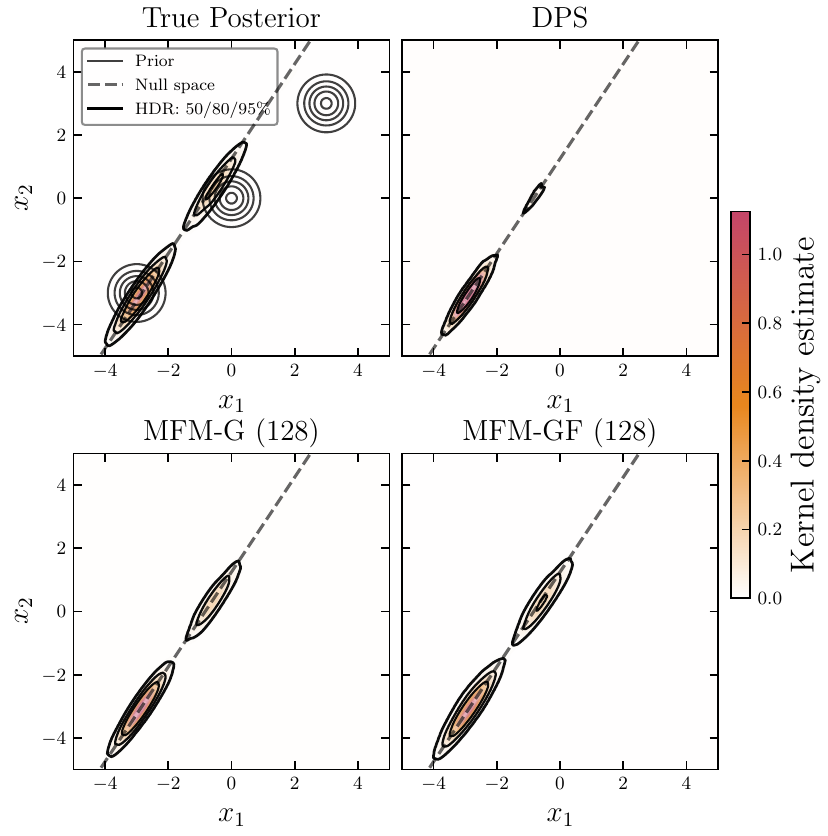}
  \end{subfigure}
  \begin{subfigure}[t]{0.49\linewidth}
    \centering
    \includegraphics[height=5cm]{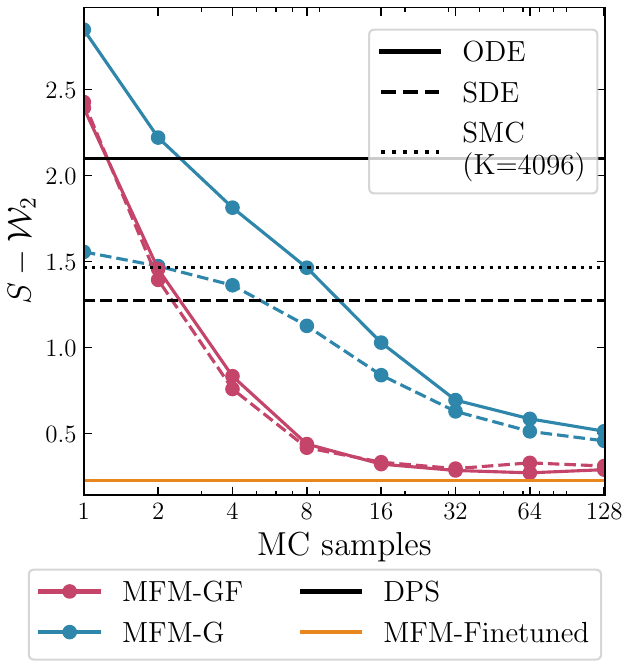}
  \end{subfigure}
\end{minipage}\hfill
\begin{minipage}[t][5cm][c]{0.29\linewidth} 
  \raggedright
  \footnotesize
  \setlength{\tabcolsep}{5pt}
  \begin{tabular}{@{}p{0.48\linewidth} r r@{}}
    \toprule
    \textbf{Method} & \textbf{S-}$\mathcal{W}_2 \downarrow$ & \textbf{MMD} $\downarrow$ \\
    \midrule
    Base MFM & 2.49 & 0.76 \\
    \midrule
    MFM-GF (128) &  0.29 & 0.0064 \\
    MFM-G (128)  &  0.51 & 0.027 \\
    \midrule
    \textbf{MFM-Finetuned} & 0.20 & 0.017 \\
    \bottomrule
  \end{tabular}
\end{minipage}
\caption{Comparison for GMM inverse problems. \textbf{Left} plot shows the prior, analytic posterior and density maps of posterior samples (ODE). \textbf{Centre} graphs show the Sliced-Wasserstein, $\mathcal{S}$-$\mathcal{W}_2$, between 4096 samples from the true posterior and our inference-time steering schemes. For SMC, we use $K=4096$ particles, and report the mean $\mathcal{S}$-$\mathcal{W}_2$ over 20 random seeds. \textbf{Right} table reports $\mathcal{S}$-$\mathcal{W}_2$ and MMD for select inference-time steering setups and fine-tuning (ODE).}
\label{fig:linear_gmm_main}
\end{figure}

Following Proposition~\ref{prop:convergence}, we also empirically observe a strong improvement in steering performance as the number of samples in the MC estimators of the drift is increased.

\subsubsection{MNIST}
We next consider a conditional sampling task with a highly multimodal target on MNIST. We define a reward function as a weighted mixture of class probabilities obtained from a classifier: $\exp(r(x)) = p(c_{\text{mix}}|x) = \sum_{i=1}^{C} w_i \, p_{\theta}(y_i | x)$ where \(p_{\theta}(y_i | x)\) denotes the classifier-predicted probability that image \(x\) belongs to class \(i\), and the mixture weights \(w_i\) satisfy \(\sum_{i=1}^{C} w_i = 1\). Note that by Bayes' rule,\footnote{
By Bayes' rule,
\(
p(x | c_{\text{mix}}) 
= \sum_{i=1}^{C} p(x, y_i | c_{\text{mix}}) 
= \sum_{i=1}^{C} p(x | y_i, c_{\text{mix}}) \, p(y_i | c_{\text{mix}}).
\)
If \(c_{\text{mix}}\) simply indexes the mixture, we identify 
\(p(y_i | c_{\text{mix}}) = w_i\) and 
\(p(x | y_i, c_{\text{mix}}) = p(x | y_i)\),
yielding 
\(p(x | c_{\text{mix}}) = \sum_{i=1}^{C} w_i \, p(x | y_i)\).
} 
the corresponding posterior distribution takes the form $p(x | c_{\text{mix}}) = \sum_{i=1}^{C} w_i \, p(x | y_i)$
which is a weighted mixture of class-conditional posteriors with weights \(w_i\). By appropriately setting $\mathbf{w}$, we can define a challenging, highly multi-modal target distribution.

Consistent with the GMM experiments, DPS fails to respect the target class ratios, significantly over-representing the dominant mode. In contrast, both MFM-GF and MFM-G approach the correct target ratio as the number of MC samples increases. However, unlike the 2D GMM case, MFM-G significantly outperforms MFM-GF.
\begin{figure}[h]
    \centering
    \begin{subfigure}[b]{0.49\linewidth}
        \centering
        \includegraphics[height=5cm]{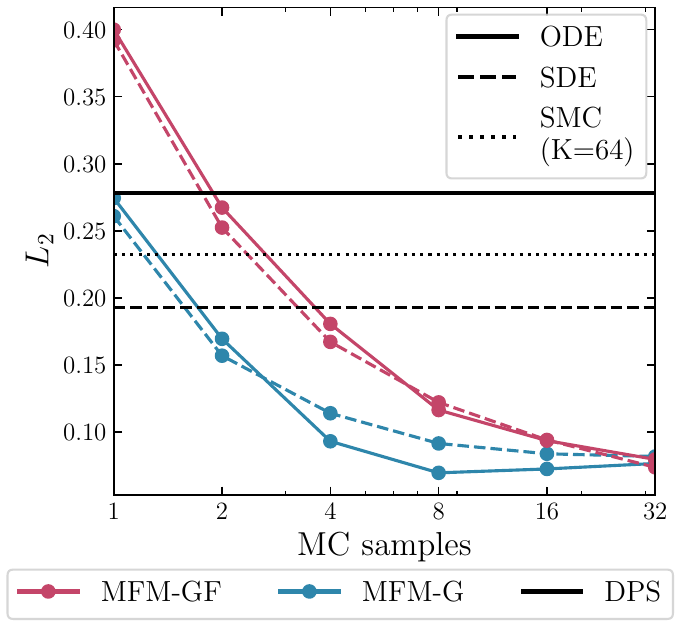}
        \label{fig:l1_a}
    \end{subfigure}%
    \hfill
    \begin{subfigure}[b]{0.49\linewidth}
        \centering
        \includegraphics[height=5cm]{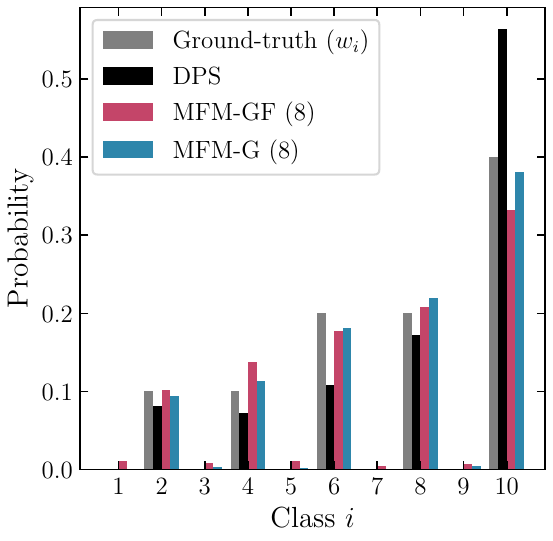}
        \label{fig:l1_b}
    \end{subfigure}
    \caption{
    Comparison of the drift estimators through the empirical probability mass function (PMF) over the classes in the samples (from 4096 samples) and the ground-truth PMF of the target posterior (defined by the weight vector $\mathbf{w}$). \textbf{Left} graph shows the $\mathcal{L}_2$ distance between the empirical PMF and ground-truth PMF for increasing number of MC samples. \textbf{Right} graph plots the ground-truth PMF, alongside the empirical PMF obtained through different drift estimators.}
    \label{fig:l1_comparison}
\end{figure}

\newpage
\subsection{ImageNet}
\label{sec:imagenet_experments}
\subsubsection{MFM Performance}
We now consider ImageNet ($256\!\times\!256$) \citep{inproceedings} to evaluate the scalability of our framework. We provide training details and report the few-step performance of MFMs alongside deterministic few-step models.

\begin{wraptable}[23]{r}{0.40\textwidth}
\centering
\vspace{-10pt}
\small
\begin{tabular}{llcc}
    \toprule
    \multicolumn{4}{c}{\textbf{Deterministic Few-Step Flow Models}} \\
    \midrule
    Method & NFE & \#Params & FID $\downarrow$ \\
    \midrule
    Shortcut-XL/2 & 1 & 676M & 10.60 \\
                  & 4 & 676M & 7.80 \\
    \midrule
    IMM-XL/2      & $2\times1$ & 676M & 7.77 \\
                  & $2\times2$ & 676M & 3.99 \\
                  & $2\times4$ & 676M & 2.51 \\
    \midrule
    MF-XL/2+      & 1 & 676M & 3.43 \\
                  & 2 & 676M & 2.20 \\
    \midrule
    DMF-XL/2+     & 1 & 675M & 2.16 \\
                  & 2 & 675M & 1.64 \\
                  & 4 & 675M & 1.51 \\
    \midrule
    \multicolumn{4}{c}{\textbf{Stochastic Few-Step Flow Models}} \\
    \midrule
    Method & NFE & \#Params & FID $\downarrow$ \\
    \midrule
     \textbf{MFM-XL/2} & {1} & {683M} & {3.72} \\
                       & {2} & {683M} & {2.40} \\
                       & {4} & {683M} & {1.97} \\
    \bottomrule
\end{tabular}
\caption{\textbf{ImageNet (256$\times$256) benchmark.} $2\times$ denotes usage of CFG. We compute FIDs using 50,000 generated and reference images.}
\label{tab:imgnet_results}
\vspace{-10pt}
\end{wraptable}
 
\paragraph{Model.} We use a standard latent Diffusion Transformer (DiT; \citet{peebles2023scalable}) operating in the latent space of a pretrained Variational Auto-Encoder (VAE; \citet{rombach2022high}). We adapt the DiT architecture to accept the additional network inputs $(t, x)$ required in MFMs with a minimal set of additional parameters (see Appendix~\ref{app:arch} for further details). 

\paragraph{Training.} We adapt DMF-XL/2+ \citep{lee2025decoupled}, a state-of-the-art deterministic flow-map, and fine-tune it using the MFM distillation objective presented in Section~\ref{sec:training}. As the flow-map was trained using the Mean Flow objective, we leverage the Eulerian (Teacher) objective (see Table~\ref{tab:objectives}), the closest MFM distillation variant to Mean Flow, to avoid an unfavourable initialisation. We highlight that here we are distilling an MFM from a pretrained flow model, and that this is not related to the reward fine-tuning objective in Equation~\ref{eq:fine_tuning}. We present results for MFMs trained directly from data, including different model scales and ablations, in Appendix~\ref{sec:extended_results}.

\paragraph{Unconditional generation.} Table~\ref{tab:imgnet_results} presents the performance of our model, MFM-XL/2, using the $K$-step refinement sampler (Algorithm \ref{alg:k_step_uncond}). We achieve a competitive FID of \textbf{1.97} in just 4 steps, showing that MFMs are competitive with state-of-the-art deterministic baselines, while additionally providing stochastic one-step posterior samples for reward alignment.

Next, we evaluate the fidelity of the one and few-step posterior samplers, $p_{1|t}(\cdot|x_t)$, for different conditioning times, $t$ (Figure \ref{fig:imagenet_quantitative}). We compare to ODE rollouts of the conditional drift extracted from DMF XL/2+ using GLASS flows (Equation~\ref{eq:glass_flows_ode}). Note that this is the source of diagonal supervision for MFM-XL/2, and as such, allows for a direct evaluation of the computational benefits of learning a one-step sampler. We consider two quantitative evaluations: 

\paragraph{(A) Posterior recovery.}
We assess the fidelity of the posterior samples through FID. Concretely, we noise clean images to time $t$ to form a conditioning set $\{x_t^{(i)}\}^{N}_{i=1}$ (where $N=50,000$). For each noisy image, we generate a single posterior sample $\hat{x}_1^{(i)} \sim p(x_1 \mid x_t^{(i)})$ and compute the FID between the generated set $\{\hat{x}_1^{(i)}\}_{i=1}^{N}$ and 50,000 reference images from ImageNet. Both samplers are conditioned on the ground-truth class labels.

\paragraph{(B) Value function estimation.} We evaluate the accuracy of Monte Carlo estimation of a value function, $V_t(x_t)$. We employ ImageReward \citep{xu2023imagerewardlearningevaluatinghuman} conditioned on the prompt \textit{``A high-quality, high-resolution photograph of a tabby cat''} as the target reward. 
To establish a ``ground truth'' value estimate, we perform an expensive rollout of the SDE (200 steps) with $N=200$ particles for a given noisy input $x_t$ (derived from a \textit{tabby cat} image).
We then compute cheaper MC estimates using the same particle count ($N=200$) but via the MFM or GLASS (ODE rollout) samplers. For both estimators, we condition the posterior samplers on the class \textit{tabby cat}. We repeat for a set of $x_t$ obtained through noising different images of tabby cats to time $t$, and compute the correlation between the high-fidelity estimator and the cheaper estimator. 

\begin{figure}[h]
    \centering
    \begin{subfigure}[b]{0.48\textwidth}
        \centering
        \includegraphics[height=6cm]
        {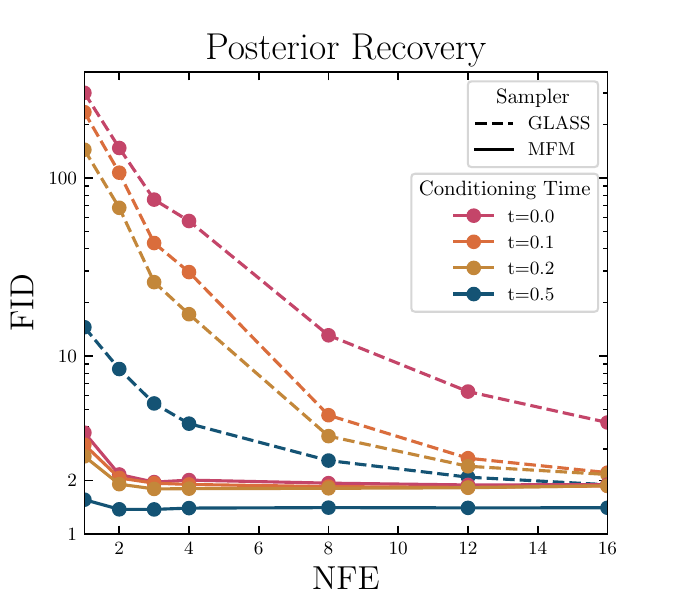}
    \end{subfigure}
    \hfill
    \begin{subfigure}[b]{0.48\textwidth}
        \centering
        \includegraphics[height=6cm]
        {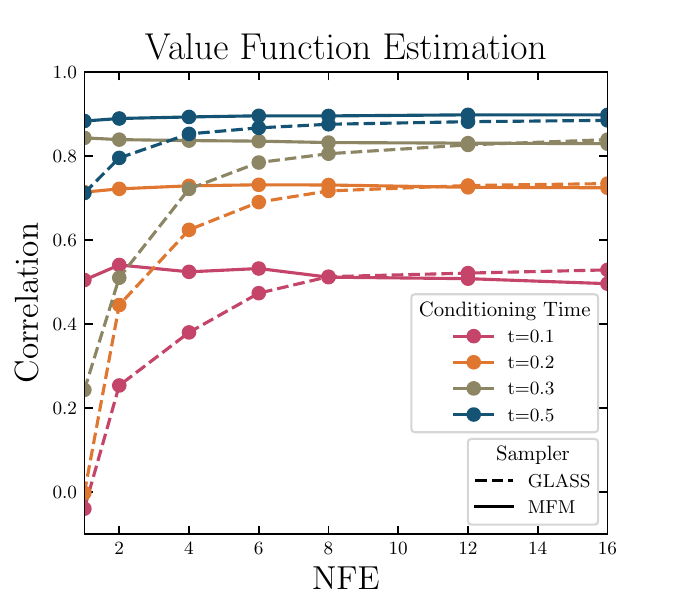}
    \end{subfigure}
    \caption{Comparison of the posterior samples from MFM XL/2 and GLASS flows for different conditioning times, $t$, and for increasing NFE. \textbf{Left} graph shows the posterior FID. \textbf{Right} graph shows the correlation (Pearson's $r$) between an expensive, high-fidelity MC estimator (SDE rollout) and an estimator obtained through MFM or GLASS flows.}
    \label{fig:imagenet_quantitative}
\end{figure}

As seen in Figure~\ref{fig:imagenet_quantitative}, MFMs strongly outperform explicit ODE rollouts across both posterior recovery and value function estimation, for all values of conditioning time and NFEs considered. Most notably, the relative improvement is greatest where the methods are limited to one function evaluation, which is of particular interest, as differentiating through rollouts to realise the gradient estimator in Equation~\ref{eq:gradient_estimator} is prohibitively expensive. 

\subsubsection{Inference-Time Steering}
Having demonstrated the substantial computational advantages of training MFMs for few-step posterior sampling, we now return to the core motivation: reward alignment. We begin by evaluating inference-time steering, where we steer the MFM XL/2 model using several text-to-image human preference reward models.

\paragraph{Reward.} We steer the class-conditioned ODE targeting the \textit{tabby cat} class using three different reward models, ImageReward \citep{xu2023imagerewardlearningevaluatinghuman}, PickScore \citep{kirstain2023pickapicopendatasetuser}, and HPSv2 \citep{wu2023humanpreferencescorev2} conditioned on the prompt \textit{"A high-quality, high-resolution photograph of a tabby cat."}. We also consider a range of reward multiplier values, $\lambda=\{1.0, 2.5, 5.0\}$, where $p_{\text{reward}}(x) \propto p_{\text{model}}(x)\,\exp\!\bigl(\lambda\, r_\theta(x,\text{prompt})\bigr)$. 

\paragraph{Methods.} We present results for MFM-G and MFM-Search, a heuristic search algorithm that is compatible with non-differentiable rewards (Algorithm~\ref{alg:mfm_search}). For clarity of presentation, we omit MFM-GF from the main figures due to consistently inferior performance over the alternative gradient-free solution, MFM-Search; complete results are provided in Appendix~\ref{app:extra_inference_time}. We compare against DPS and the Best-of-N (BoN) baseline, which generates $N_{\text{BoN}}$ samples and selects the highest-reward sample. For each method, we generate 128 images and report the average reward. We provide further implementation details in Appendix \ref{sec:extended_results}.



\paragraph{Compute-normalised performance.} 
We report performance against the number of function evaluations (NFEs) (see Appendix~\ref{app:imagenet_nfe_calculation} for a detailed count of the NFEs). MFM-G achieves substantially better compute-normalised performance than Best-of-$N$ and DPS across all reward models (Figure~\ref{fig:imagenet_reward_nfe}). In particular, its performance curves lie strictly above those of Best-of-$N$ and exhibit strong inference-time scaling behaviour, which is qualitatively observed in Figure~\ref{fig:imagenet_scaling}. Notably, even the cheapest MFM-G variant ($N=1$) outperforms the most expensive Best-of-$N$ configuration ($N_{\mathrm{BoN}}=1000$), while requiring over $100\times$ fewer NFEs. To validate that the observed improvements are not driven by reward hacking, we evaluate performance using reward models different from the one used for steering (Appendix~\ref{app:extra_inference_time}). We find that steering with any of the reward models does not degrade performance on the remaining metrics, and in many cases yields substantial improvements. For example, when steering with either HPSv2 or PickScore, the resulting generations achieve higher scores on the other reward model than $N_{\mathrm{BoN}}=1000$.

\begin{figure}[h]
    \centering
    \includegraphics[width=0.75\linewidth]{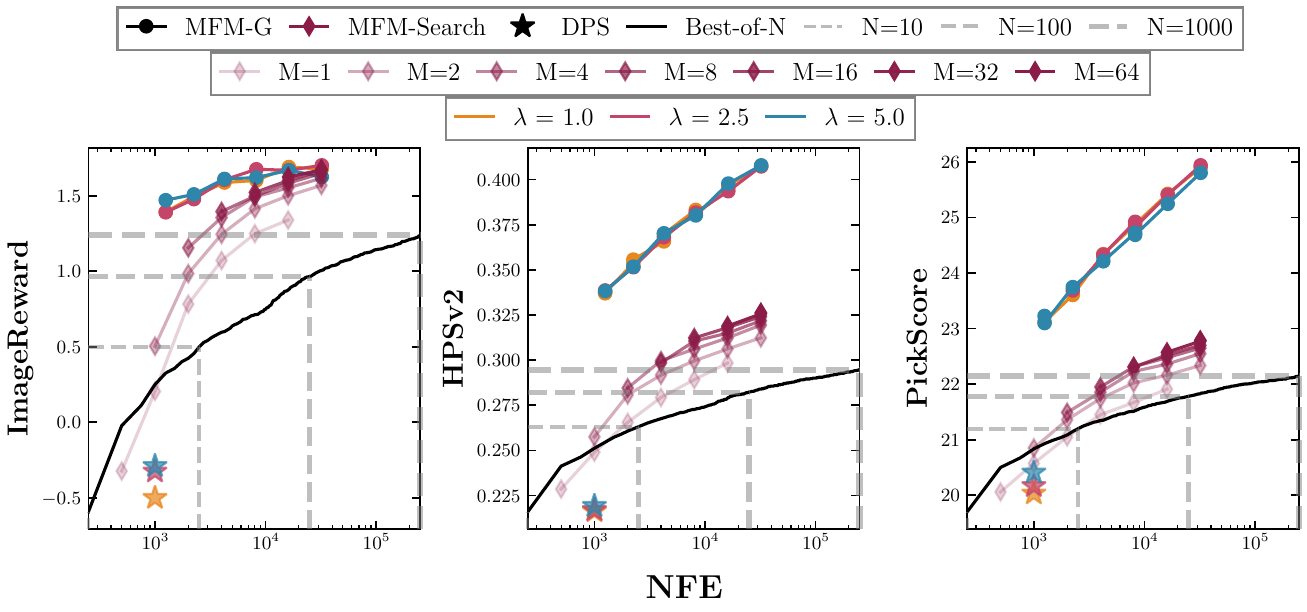}
    \caption{Compute-normalised performance comparison of MFM steering schemes and the Best-of-N baseline.}
    \label{fig:imagenet_reward_nfe}
\end{figure}

\begin{figure}[h]
    \centering
    \begin{subfigure}{0.325\textwidth}
        \centering
        \includegraphics[width=\linewidth]{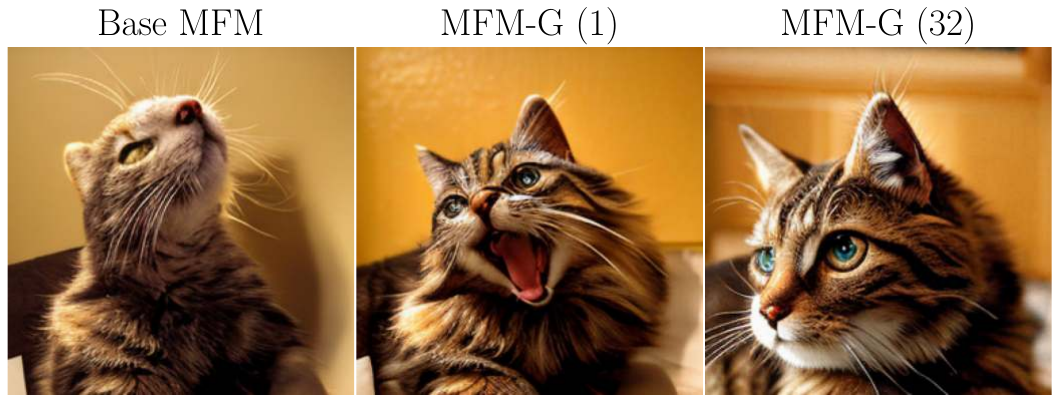}
    \end{subfigure}
    \begin{subfigure}{0.325\textwidth}
        \centering
        \includegraphics[width=\linewidth]{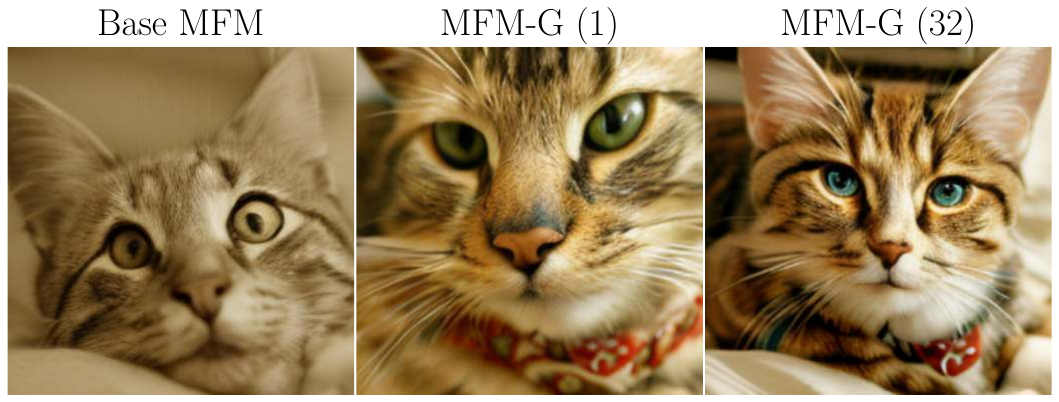}
    \end{subfigure}
    \par
    \begin{subfigure}{0.325\textwidth}
        \centering
        \includegraphics[width=\linewidth]{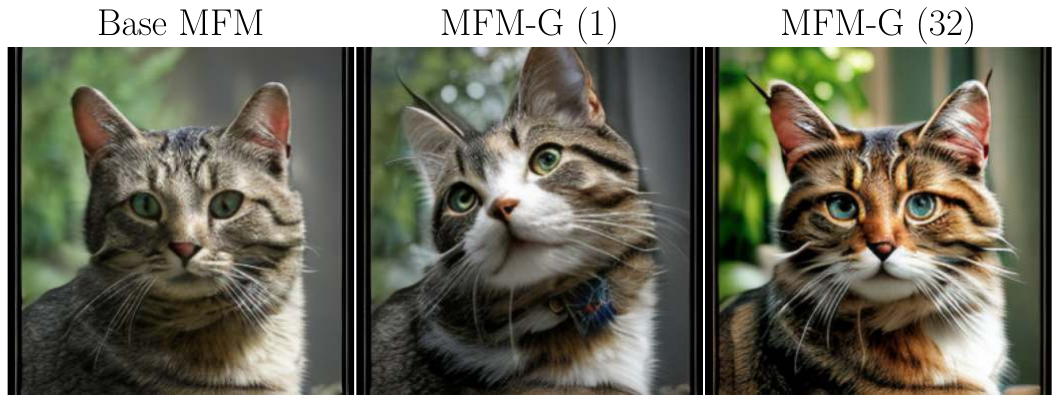}
    \end{subfigure}
    \begin{subfigure}{0.325\textwidth}
        \centering
        \includegraphics[width=\linewidth]{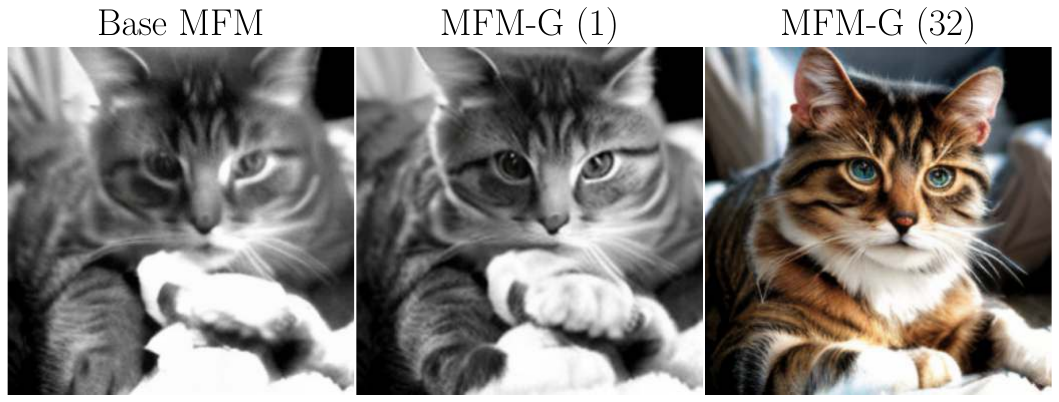} 
    \end{subfigure}
    \caption{Base and steered (HPSv2) samples for 4 random seeds. These samples reinforce the quantitative inference-time scaling properties of MFM-G in Figure~\ref{fig:imagenet_reward_nfe}, with the $N=32$ variant producing visually more appealing images than $N=1$. A larger set of samples can be found in Appendix~\ref{sec:extended_results}.}
    \label{fig:imagenet_scaling}
\end{figure}

\subsubsection{Fine-Tuning} 
Having established the effectiveness of inference-time steering at scale using MFMs, we now investigate fine-tuning the MFM using the objective in Equation~\ref{eq:fine_tuning}. We design a fine-tuning procedure aimed at enhancing the general quality and fidelity of generated samples. To this end, we train across all the ImageNet classes using the HPSv2 reward model, conditioned on the prompt template \textit{"A high-quality, high-resolution photograph of a \{class\}."}. We evaluate performance across a range of reward multiplier values, $\lambda=\{10, 25, 50\}$ and present our results in Figure~\ref{fig:imagenet_finetune_hpsv2}.
\vspace{-4pt}
\begin{figure}[H]
    \centering
    \includegraphics[width=0.75\linewidth]{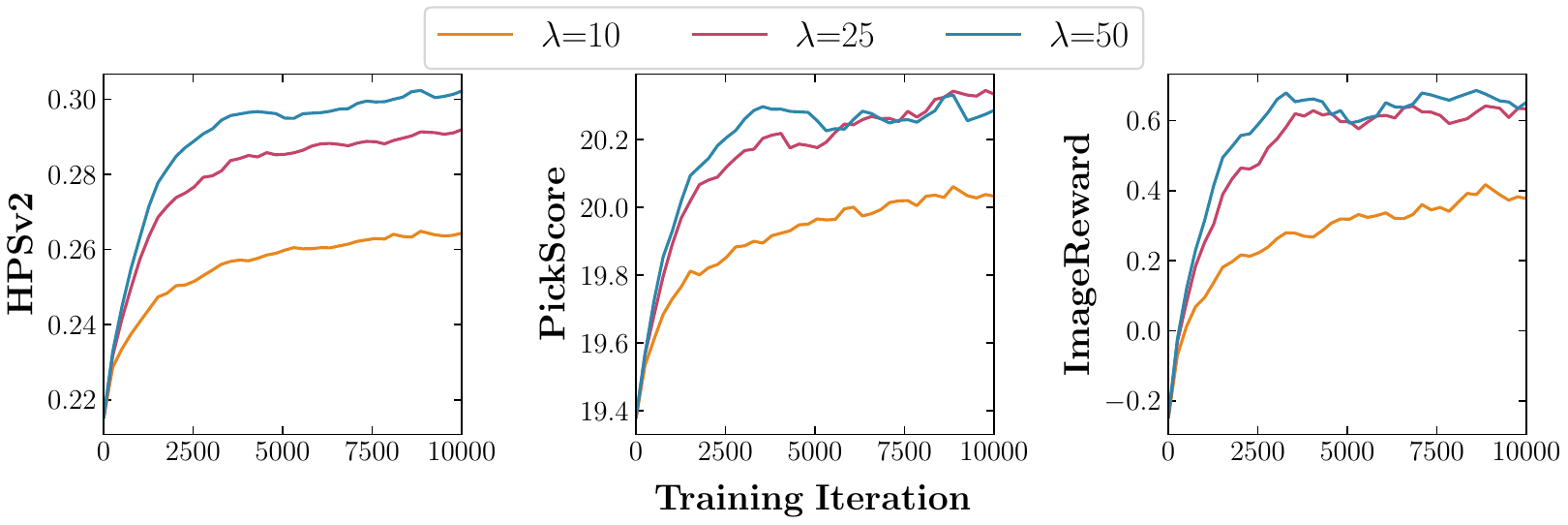}
    \caption{Reward metrics during reward fine-tuning with HPSv2. Rewards on HPSv2, ImageReward, and PickScore are computed every 500 training iterations and averaged across 512 ODE samples with the prompt template used during fine-tuning.}
    \label{fig:imagenet_finetune_hpsv2}
\end{figure}

\begin{figure}[H]
    \centering
    \begin{subfigure}{0.45\textwidth}
        \centering
        \includegraphics[width=\linewidth]{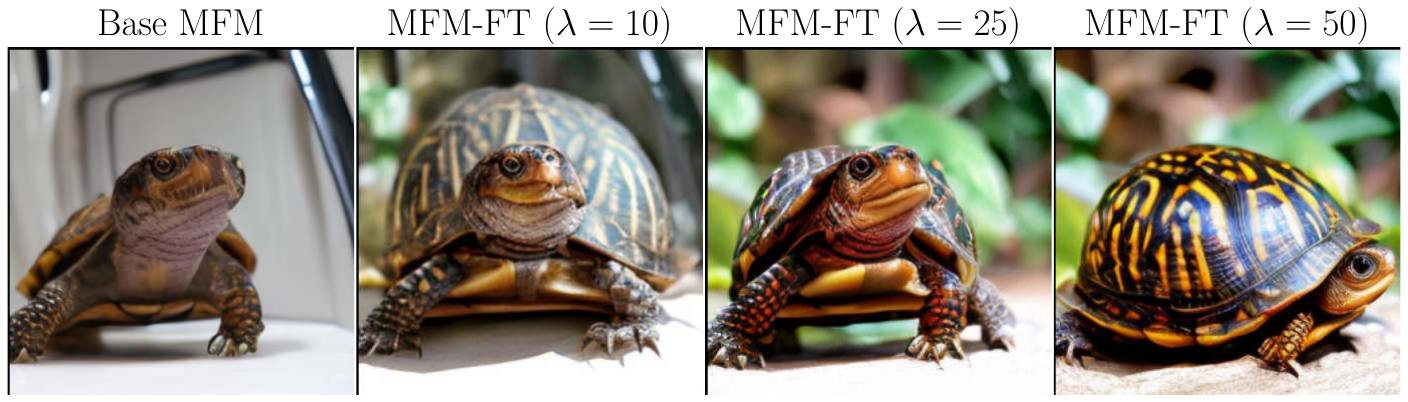}
    \end{subfigure}
    \begin{subfigure}{0.45\textwidth}
        \centering
        \includegraphics[width=\linewidth]{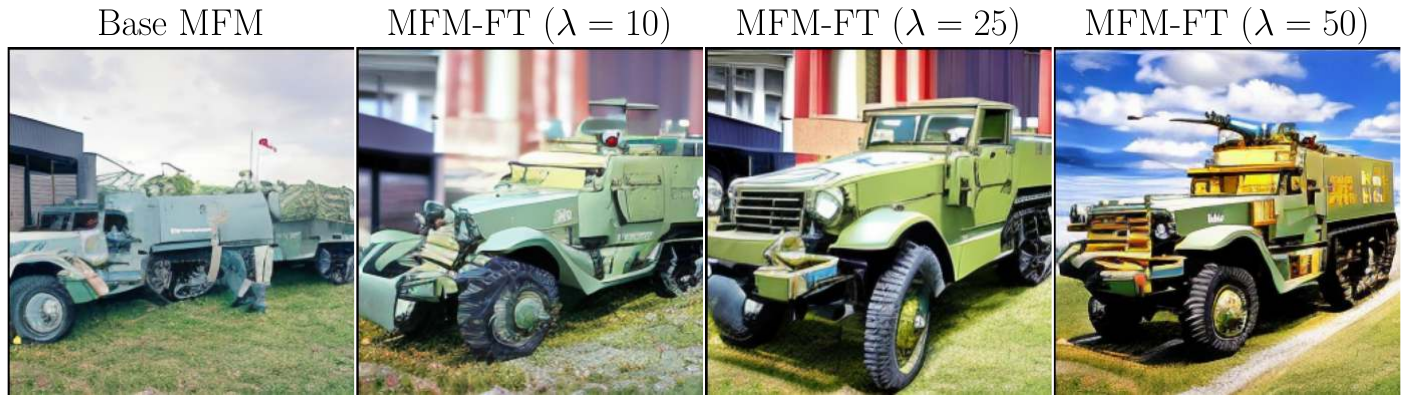}
    \end{subfigure}
    \vspace{0.5em}
    \begin{subfigure}{0.45\textwidth}
        \centering
        \includegraphics[width=\linewidth]{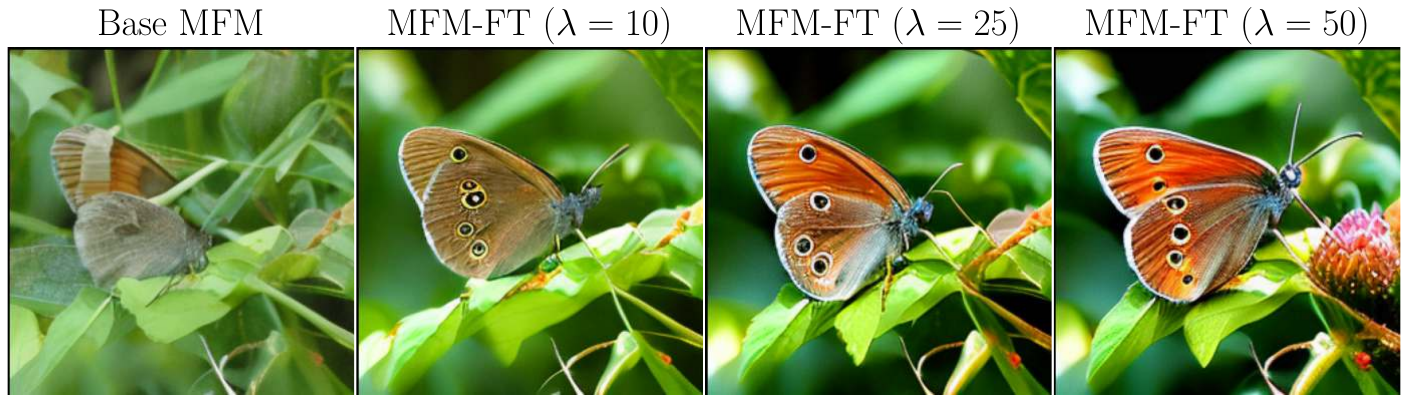}
    \end{subfigure}
    \begin{subfigure}{0.45\textwidth}
        \centering
        \includegraphics[width=\linewidth]{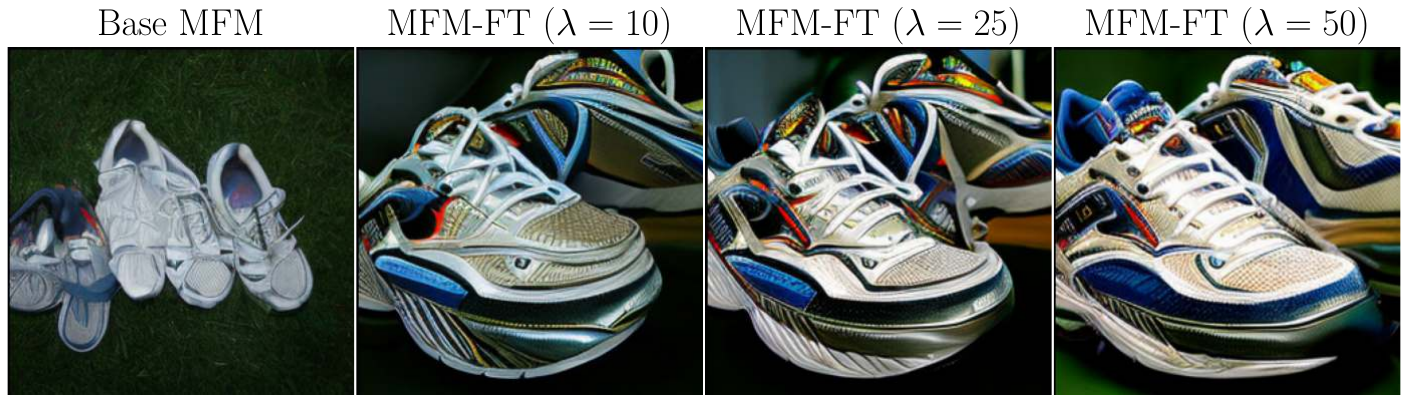}
    \end{subfigure}
    \caption{Samples from the class-conditioned ODE (250 steps) before and after fine-tuning using the MFM-FT objective for different values of $\lambda$, using the same seed.}
    \label{fig:imagenet_finetune_hpsv2_qualitative}
\end{figure}
Figure~\ref{fig:imagenet_finetune_hpsv2} demonstrates the effectiveness of the fine-tuning objective in Equation~\ref{eq:fine_tuning}, with scores from all three reward models exhibiting a stable and consistent increase throughout fine-tuning on HPSv2. These improvements are also qualitatively reflected in the samples shown in Figure~\ref{fig:imagenet_finetune_hpsv2_qualitative}, where the fine-tuned MFMs produce visibly higher-quality and more colourful images while preserving the semantic content of the base MFM samples. We provide a larger set of samples in Appendix~\ref{sec:extended_results}.

\section{Conclusion}
We introduced Meta Flow Maps (MFMs), a framework for learning stochastic flow maps that efficiently generate samples from the family of conditional posteriors $p_{1|t}(\cdot|x)$. By amortising the infinite family of conditional transport problems into a single few-step model, MFMs eliminate the computational bottleneck of explicit inner rollouts, enabling efficient, asymptotically exact inference-time steering. Beyond inference, we demonstrated that MFMs facilitate efficient, unbiased off-policy fine-tuning, allowing generative models to be permanently aligned with general rewards. Importantly, our framework is agnostic to the specific choice of training objective; future work can explore the rich design space of consistency losses and optimisation techniques developed for deterministic flow maps to further enhance performance. Finally, we note that the MFM framework generalises beyond fixed-endpoint generation to support arbitrary intermediate time prediction $p_{r|t}(\cdot|x)$, and extends to more general stochastic processes.

\section*{Acknowledgments}
We thank Iskander Azangulov, Jakiw Pidstrigach, Sam Howard, Franklin Shiyi Wang, Yuyuan Chen, Carles Domingo-Enrich, Francisco Vargas, Peter Holderrieth, and George Deligiannidis for fruitful conversations. We also want to thank Arnaud Doucet in particular for his help with the theoretical development and for his suggestions on the paper. PP is supported by the EPSRC CDT in Modern Statistics and Statistical Machine Learning [EP/S023151/1], a Google PhD Fellowship, and an NSERC Postgraduate Scholarship (PGS D). AS is supported by the EPSRC CDT in Modern Statistics and Statistical Machine Learning [EP/Y034813/1]. AM is supported by the Clarendon Fund Scholarship, University of Oxford. AP is supported by the EPSRC and AstraZeneca via an iCASE award for a DPhil in Machine Learning. MSA is supported by a Junior Fellowship at the Harvard Society of Fellows as well as the National Science Foundation under Cooperative Agreement PHY-2019786 (The NSF AI Institute for Artificial Intelligence and Fundamental Interactions\footnote{http://iaifi.org/}). This work has been made possible in part by a gift from the Chan Zuckerberg Initiative Foundation to establish the Kempner Institute for the Study of Natural and Artificial Intelligence. The authors also acknowledge the use of resources provided by the Isambard-AI National AI Research Resource (AIRR) \citep{isambard}. Isambard-AI is operated by the University of Bristol and is funded by the UK Government’s Department for Science, Innovation and Technology (DSIT) via UK Research and Innovation; and the Science and Technology Facilities Council [ST/AIRR/I-A-I/1023]. This work was also supported in part by the Engineering and Physical Sciences Research Council (EPSRC) through the AI Hub in Generative Models [grant number EP/Y028805/1].

\bibliographystyle{assets/acl_natbib.bst}
\bibliography{bibliography.bib}

\clearpage
\beginappendix
\startcontents[app]
\printcontents[app]{l}{1}{\setcounter{tocdepth}{2}}
\section{Methodology}
\label{app:mfm_method}
\subsection{Reparametrisation}
\label{app:mfm_sampling_diff_interp}
\label{app:mfm_reparam}
Suppose that we have an MFM $X$ trained on an interpolant $(I_t)_{t\in[0,1]}$ with coefficients $\alpha_t, \beta_t$. We emphasise that here we do not necessarily assume that $p_0$ is Gaussian. We denote the marginal density of $I_t$ by $p_t$ and the conditional posterior of $I_1$ given $I_t = x$ by $p_{1|t}(\cdot |  x)$. We describe how this MFM can be reparametrised to sample from posteriors arising from interpolants with different coefficients. Let $\tilde I_t$ be the interpolant defined by 
\begin{equation}
    \tilde I_t = \tilde \alpha_t I_0 + \tilde \beta_t I_1,
\end{equation}
for some new coefficients $\tilde \alpha_t, \tilde \beta_t$. Let $\tilde p_t$ be the marginal density of $\tilde I_t$ and let $\tilde p_{1|t}(\cdot | x)$ denote the conditional posterior. Rearranging the interpolant definition, we have
\begin{equation}
    \frac{1}{\tilde \beta_t}\tilde I_{t} = \frac{\tilde \alpha_t}{\tilde \beta_t} I_0 + I_1.
\end{equation}
Since $\alpha_t$ and $\beta_t$ are continuous in $t$ and since the boundary conditions satisfy $\alpha_0 = 1, \beta_0 = 0$ and $\alpha_1 = 0, \beta_1 =1$, this implies that the map $t \mapsto \frac{\alpha_t}{\beta_t}$ is a surjection from $[0,1]$ onto $[0, \infty)$. Therefore there exists $ t^* \in [0,1]$ such that $\frac{\alpha_{t^*}}{\beta_{t^*}} = \frac{\tilde \alpha_t}{\tilde \beta_t}$ and for this $t^*$ we have
\begin{equation}
    \frac{1}{\tilde \beta_t}\tilde I_{t} = \frac{\alpha_{t^*}}{\beta_{t^*}} I_0 + I_1,
\end{equation}
and so
\begin{equation}
    \frac{\beta_{t^*}}{\tilde \beta_t}\tilde I_{t} = \alpha_{t^*} I_0 + \beta_{t^*} I_1.
\end{equation}
In particular, this shows that
\begin{equation}
\label{eq:cond_posteriors_reparam}
    \tilde p_{1|t}(\cdot | x) = p_{1|t^*}(\cdot | \frac{\beta_{t^*}}{\tilde \beta_t}x).
\end{equation}
Therefore, we can sample from the conditional posterior $\tilde p_{1|t}(\cdot | x)$ using the MFM $X$ trained on the interpolant with coefficients $\alpha_t, \beta_t$. In particular, if $\epsilon \sim p_0$, then
\begin{equation}
    X_{0,1}(\epsilon; t^*, \frac{\beta_{t^*}}{\tilde \beta_t}x) \sim \tilde p_{1|t}(\cdot |x).
\end{equation}
This means that we can obtain differentiable, one-shot samples from posteriors of this form even if our MFM was trained on a different interpolant path.

\subsection{Short Flow Segments}
\label{app:short_flow}
In practice, we will often explicitly form $b_t(x)$ when performing Euler-style updates for an ODE or SDE simulation. As discussed in the main body, we can extract this drift from a trained MFM using~\eqref{eq:mfm_drift_extraction}. Another approach is to use short flow segments as follows:
\begin{equation}
\label{eq:small_step_flows}
     \Delta t\, b_t(x) = X_{t, t + \Delta t}(x; 0, x_0) - x + \mathcal{O}(\Delta t^2),
\end{equation}
which holds for any $x, x_0 \in \mathbb{R}^d$. Depending on the context, this may help reduce discretisation errors.

\subsection{Drift Reparametrisation}
\label{app:drift_extraction}
In addition to the extraction in \eqref{eq:mfm_drift_extraction} another reparametrisation that can be used to extract the unconditional drift $b_t$ from an MFM $X$ is given by
\begin{equation}
    b_t(x_t) =\tfrac{\dot \alpha_t}{\alpha_t}x_t + \frac{\dot \beta_t - \tfrac{\dot \alpha_t}{\alpha_t}\beta_t}{\dot \beta_0}\left (v_{0,0}(x; t, x_t) - \dot \alpha_0 x \right ),
\end{equation}
which holds for any $x$ and any $x_t$. This follows from the identities $b_t(x_t)=\tfrac{\dot \alpha_t}{\alpha_t}x_t + (\dot \beta_t - \tfrac{\dot \alpha_t}{\alpha_t}\beta_t)\mathbb E[I_1|I_t=x_t]$ and $v_{0,0}(x,t,x_t)=\dot \alpha_0 x + \dot \beta_0 \mathbb E[I_1|I_t=x_t]$. 

\subsection{Connection to $\gamma$-sampling.}
\label{sec:connection_gamma_sampling}
We note that our $K$-step refinement sampler is similar to flow map $\gamma$-sampling~\citep{kim2024consistencytrajectorymodelslearning} with $\gamma=1$. See Algorithm~\ref{alg:gamma_sampler} for an implementation with MFMs. In particular, the difference is that step~\ref{step:gamma_step} of Algorithm~\ref{alg:k_step_uncond} is replaced with
\begin{equation}
    \hat x_1^{(k)}\gets X_{t_k,1}(x_{t_k}; 0,\vec 0).
\end{equation}
As with $K$-step refinement, we have that $\hat x_1^{(k)} \sim p_1$, so $x_{t_{k+1}}$ has marginal density $p_{t_{k+1}}$ for all $k$. However, the core difference is that $\gamma$-sampling employs marginal transport whereas the $K$-step refinement uses conditional transport to obtain the sample at time $1$.

\subsection{Extension to Arbitrary Prediction Times and General Stochastic Processes}
\label{app:intermediate_time}
The MFM construction generalises beyond fixed-endpoint generation to support prediction at arbitrary intermediate times for stochastic processes defined over a general index set $\mathcal{T}$. One can learn MFMs to condition on multiple time points $\{t_i\}_{i=1}^M$, for simplicity, we detail the case of conditioning on one intermediate time $t$. Let $(X_t)_{t \in \mathcal{T}}$ be an arbitrary stochastic process; for example, $(X_t)$ could represent frames in a video or a time series of weather data. We emphasise that this process need not be an interpolant or defined by a flow matching process. Fix a conditioning pair $(t,x) \in \mathcal{T} \times \mathbb R^d$ and a target prediction time $r \in \mathcal{T}$. We define the conditional distribution as 
\begin{equation}
    p_{r|t}(\cdot| x):=\text{Law}(X_r| X_t=x).
\end{equation}
As in Section~\ref{sec:MFM}, we introduce a context-dependent drift $\bar b_s(\cdot\,;r,t,x)$ defined over an auxiliary flow time $s \in [0,1]$. This drift is defined as the solution to a conditional flow matching problem that transports a simple base noise distribution $q$ to the target posterior $p_{r|t}(\cdot| x)$. We choose the initial distribution of this auxiliary flow to be a tractable distribution $q$, such as a Gaussian. (Note that $0$ may not be in the index set $\mathcal T$ and even if it is, $p_0 = \text{Law}(X_0)$ is generally intractable to sample from during inference. This is why we choose a different base measure $q$.) This drift defines an auxiliary probability flow ODE
\begin{equation}
    \label{eq:r_ode}\frac{d}{ds}\bar x_s = \bar b_s(\bar x_s; r, t, x),\quad \bar x_0\sim q\quad \Longrightarrow \quad \mathrm{Law}(\bar x_1)=p_{r|t}(\cdot| x).
\end{equation}
We emphasise that the auxiliary flow $(\bar{x}_{s})_{s \in [0,1]}$ does not reproduce the conditional physical evolution of the process $(X_{\tau})$ from $\tau=t$ to $\tau=r$; instead, it serves strictly as a transport bridge constructed to satisfy the endpoint distributional constraint $\bar x_1 \sim p_{r|t}(\cdot |x)$. We define an \emph{extended Meta Flow Map} $X_{s,u}(\cdot\,; r, t, x) : \mathbb R^d \to \mathbb R^d$ as the parametric solution operator for this infinite family of ODEs, satisfying
\begin{equation}
    X_{s,u}(\bar x_s;r,t,x) = \bar x_u, \qquad \forall s,u \in [0,1],
\end{equation}
where $(x_{\tau})_{\tau \in [0,1]}$ are trajectories of~\eqref{eq:r_ode}. Consequently, $X$ satisfies the property of an extended stochastic flow map
\begin{equation}
    X_{0,1}(\cdot\,; r,t,x) \# q = p_{r|t}(\cdot |x), \qquad \forall r,t \in \mathcal{T}, \forall x \in \mathbb{R}^d.
\end{equation}
In practice, we parametrise the MFM in terms of the average velocity $v_{s,u}(\cdot\,; r, t, x): \mathbb{R}^d \to \mathbb{R}^d$:
\begin{equation}
\hat{X}_{s,u}(\bar x; r, t, x) = \bar x + (u-s)v_{s,u}(\bar x; r, t, x).
\end{equation}
Training requires minimising a diagonal loss $\mathcal{L}_{\text{diag}}$ and a consistency loss $\mathcal{L}_{\text{cons}}$ over a neural parametrisation $\hat v_{s,u}$. To train the diagonal, we sample coupled pairs $(X_r, X_t)$ from a dataset of real trajectories and construct a reference interpolant 
\begin{equation}
    \bar I_s = \alpha_s \bar I_0 + \beta_s Y_r, \qquad \bar I_0 \sim q.
\end{equation}
The flow matching loss is evaluated by regressing the learned velocity onto the time derivative $\frac{d}{ds}\bar I_s = \dot \alpha_s \bar I_0 + \dot\beta_s Y_r$, amortised over the index set $\mathcal{T}$:
\begin{equation}
    \mathcal{L}_{\text{diag}}(\hat v) := \int_{\mathcal{T}}\!\!\int_{\mathcal{T}} \!\!\int_0^1\!\!\mathbb E \left [ \left | \hat v_{s,s}(\bar I_s; r, t, Y_t) - \tfrac{d}{ds} \bar I_s \right |^2 \right ] ds\,d\mu(t)\,d\mu(r),
\end{equation}
where $\mu$ is a measure on $\mathcal{T}$ (such as the uniform distribution). Minimising $\mathcal{L}_{\text{diag}}(\hat v)$ ensures that
\begin{equation}
    \hat v_{s,s}(\bar x; r, t, x) = \mathbb E \left [\tfrac{d}{ds}\bar I_s \mid \bar I_s = \bar x, X_t = x\right] = \mathbb E \left [\dot \alpha_s \bar I_0 + \dot \beta_s X_r \mid \bar I_s = \bar x, X_t = x\right] = \bar b_s(\bar x; r,t,x).
\end{equation}

Consistency is enforced by applying any standard consistency objective, such as those in Table~\ref{tab:objectives}, to the conditional map $\hat{X}_{s,u}$, where the loss is augmented by passing the target time $r$ and conditioning state $(t, X_t)$ as additional inputs to the network.

\subsection{Extended MFM-FT}
Define the conditional posteriors
\begin{equation}
    p_{r|t}(\cdot| x):=\text{Law}(X_r| X_t=x).
\end{equation}
Suppose that for some base distribution $q$, the MFM $X_{s,u}(\cdot \, ; r,t,x)$ satisfies
\begin{equation}
    X_{0,1}(\cdot \, ; r,t,x) \# q = p_{r|t}(\cdot| x), \qquad \forall r,t \in [0,1], \forall x \in \mathbb R^d.
\end{equation}
To ease notation, we define
\begin{equation}
    \Phi(\cdot\,; r, t, x) := X_{0,1}(\cdot \, ; r,t,x).
\end{equation}
Then notice that
\begin{align}
    \nabla V_t(x) 
    &= \nabla \log \mathbb E[e^{r(X_1)}|X_t = x] \\
    &= \frac{\nabla \mathbb E[e^{r(X_1)}|X_t = x]}{\mathbb E[e^{r(X_1)}|X_t = x]} \\
    &= \frac{\nabla \mathbb E \left [\mathbb E[e^{r(X_1)}|X_r] |X_t = x \right ]}{\mathbb E[e^{r(X_1)}|X_t = x]}\\
    &= \frac{\nabla \mathbb E \left [\exp\left (r\left(\Phi(\epsilon_2; 1, r, \Phi(\epsilon_1; r, t, x))\right)\right)\right ]}{\mathbb E[e^{r(X_1)}|X_t = x]},
\end{align}
where $\epsilon_1, \epsilon_2 \overset{\text{iid}}{\sim} q$. Let $\hat x_r = \Phi(\epsilon_1; r, t, x)$ and $\hat x_1 = \Phi(\epsilon_2; 1, r, \hat{x}_r)$. Then
\begin{align*}
    \nabla V_t(x)
    &= \frac{\mathbb E \left [\exp\left (r(\hat{x}_1)\right)\nabla r(\hat{x}_1)J\Phi(\epsilon_2; 1, r, \hat x_r) J\Phi(\epsilon_1; r, t, x)\right ]}{\mathbb E[e^{r(X_1)}|X_t = x]} \\
    &= \frac{\mathbb E \left [\nabla V_r(\hat x_r)\mathbb{E}\left [e^{r(X_1)}|X_r = \hat{x}_r \right ]J\Phi(\epsilon_1; r, t, x)\right ]}{\mathbb E[e^{r(X_1)}|X_t = x]}.
\end{align*}
Therefore
\begin{align*}
    \mathbb E[e^{r(X_1)}|X_t = x]\nabla V_t(x)
    &= \mathbb E \left [\nabla V_r(\hat x_r)\mathbb{E}\left [e^{r(X_1)}|X_r = \hat{x}_r \right ]J\Phi(\epsilon_1; r, t, x)\right ].
\end{align*}

\subsection{MFM-Search}
In settings where the reward function, $r(x)$, is non-differentiable, the highly-effective gradient estimator presented in \eqref{eq:gradient_estimator} cannot be used. Although MFM-GF \eqref{eq:self_norm_estimator} can be used in such settings, it scales poorly to high-dimensional problems. As such, we present an alternate search-based algorithm for reward maximisation, which we call MFM-Search.

\begin{algorithm}[H]
  \caption{MFM Search (Gradient-Free)}
  \label{alg:mfm_search}
  \begin{algorithmic}
    \STATE {\bfseries Input:} Reward $r(x)$; MFM $X$; schedule $0=t_0<\cdots<t_K=1$; number of candidates $M$; posterior samples per candidate $N$
    \STATE Initialise candidate set $\mathcal{C}_0 = \{x_{t_0}^{(m)}\}_{m=1}^M$ with $x_{t_0}^{(m)} \sim p_0$
    \FOR{$k = 0$ {\bfseries to} $K-1$}
        \STATE \# Sample posterior candidates via 1-step MFM sampler
        \FOR{$m = 1$ {\bfseries to} $M$}
            \FOR{$n = 1$ {\bfseries to} $N$}
                \STATE Sample $\epsilon^{(m,n)} \sim p_0$
                \STATE $\hat x_{1}^{(m,n)} \gets X_{t_k,1}(\epsilon^{(m,n)}, x_{t_k}^{(m)})$
            \ENDFOR
        \ENDFOR
        \STATE \# Select best candidate across all projections
        \STATE $(m^\star,n^\star) \gets \arg\max\limits_{m,n}\; r\!\left(\hat x_{1}^{(m,n)}\right)$
        \STATE $x_1^\star \gets \hat x_{1}^{(m^\star,n^\star)}$
        \STATE \# Re-noise best candidate to form next candidate set
        \FOR{$m = 1$ {\bfseries to} $M$}
            \STATE Sample $\tilde\epsilon^{(m)} \sim p_0$
            \STATE $x_{t_{k+1}}^{(m)} \gets \beta_{t_{k+1}} x_1^\star + \alpha_{t_{k+1}} \tilde\epsilon^{(m)}$
        \ENDFOR
        \STATE $\mathcal{C}_{k+1} \gets \{x_{t_{k+1}}^{(m)}\}_{m=1}^M$
    \ENDFOR
    \STATE {\bfseries Output:} Selected sample $x_1^\star$
  \end{algorithmic}
\end{algorithm}

We emphasise that MFM-Search does not provide asymptotic guarantees of sampling the true tilted distribution, $p_\text{reward}\propto p_\text{model}(x)e^{r(x)}$. Instead, it should be viewed as a heuristic search procedure that exploits the efficiency of posterior sampling through MFMs to efficiently explore high-reward regions of the sample space. Note that Algorithm~\ref{alg:mfm_search} is a single instantiation within a broader design space of search and optimisation algorithms enabled by MFMs, and we leave a systematic exploration of this space to future work.

\section{Implementation Details}
\subsection{Model Guidance}
\label{sec:model_guide}
Classifier-Free Guidance (CFG) interpolates between conditional and unconditional velocities at inference-time, requiring two function evaluations for every denoising step~\citep{ho2022classifier}:
\begin{equation}
    \tilde{v}_{\theta}(x_t, t, y) = v_{\theta}(x_t, t, y) + \omega_{CFG} \cdot (v_{\theta}(x_t, t, y) - v_{\theta}(x_t, t, \emptyset))
\end{equation}
\citet{tang2025diffusion} recently proposed Model Guidance (MG) to learn this interpolated velocity during training, in order to reduce the inference cost from two to one model evaluation per denoising step. This approach has been shown to be particularly effective for achieving competitive one and few-step performance \citep{geng2025meanflowsonestepgenerative, lee2025decoupled}. The standard training objective for MG replaces the base class-conditioned drift with a target that leverages the model's current conditional and unconditional drift estimates, as well as the class-conditioned velocity from data: 
\begin{equation}
    v^{\text{tgt}}(I_t, t, y) = \dot I_t+ \omega \cdot \text{sg} (v_{\theta}(I_t, t, y) - v_{\theta}(I_t, t, \emptyset))
\end{equation}
where $\omega \in (0, 1)$ is the model guidance scale. The stop-gradient operator, denoted as $\text{sg}(\cdot)$, ensures training stability. By considering the fixed point of this objective, it is easy to show that this training target is equivalent to using a standard CFG scale of $\omega_{CFG} = 1 / (1 - \omega)$ during sampling.

For MFMs, we extend MG by conditioning on an arbitrary pair $(t, x)$ along the stochastic interpolant. To this end, aligning with notation in Equation~\ref{eq:diag_data}, the MFM training target becomes:
\begin{equation}
\begin{aligned}
v^{\text{tgt}}_{s, s}(\bar I_s; t, I_t, y)
&= \frac{d}{ds}{\bar{I}}_s 
   + \omega \cdot \text{sg}\!\left(
      \hat{v}_{s, s} (\bar I_s; t, I_t, y)
      - \hat{v}_{s, s} (\bar I_s; t, I_t, \emptyset)
   \right) 
\end{aligned}
\end{equation}

Recent works also consider amortising over a range of $\omega$ values to allow further flexibility at inference time by passing an additional input $\omega$ to the network \citep{geng2025improvedmeanflowschallenges}. Through minimising this objective on the diagonal, alongside a consistency objective of choice, we enable 1-NFE generation that recovers the desired CFG/MG distribution for all $(t, x)$. We leverage MG for training MFM models on ImageNet. Parameter choices are specified in Table~\ref{tab:model_config}.

\subsection{Architecture}
\label{app:arch}
We leverage standard diffusion and flow architectures~\citep{karras2022elucidating, peebles2023scalable} with extensions to accommodate the additional network inputs. For our ImageNet experiments, we use Diffusion Transformer (DiT) backbone following related works~\citep{geng2025meanflowsonestepgenerative, lee2025decoupled}. While we retain the core transformer blocks, we introduce two specific embedding mechanisms to condition standard architectures for flow maps, $\hat{X}_{s,u}(.)$, on the outer flow time $t$ and the corresponding state $x$, to yield an MFM, $\hat{X}_{s,u}(.; t, x)$.

In DiT-based flow maps, a global conditioning vector $c$, which is a function of start and endpoint times, $s,u$, and any additional conditioning signals, e.g. class, is used throughout the network. For MFMs, we extend this vector to also incorporate outer-time $t$. In order to condition on the spatial state $x$, we introduce an additional patch embedder layer. We then form the input to the DiT as a combination of the inner-state, $x_s$, and outer conditioning state, $x$ modulated by AdaLN-Zero conditioned on $t$:

\begin{equation}
    c = \underbrace{\text{Embed}_{\text{time}, s}(s) + \text{Embed}_{\text{time}, u}(u) + \text{Embed}_{\text{class}}(y)}_{\text{Flow map}} + \underbrace{\text{Embed}_{\text{time}, t}(t)}_{\textbf{MFM}}
\end{equation}
\begin{equation}
    x_{\text{input}} = \underbrace{\text{PatchEmbed}(x_s) + \text{PosEmbed}}_{\text{Flow map}} + \underbrace{\text{AdaLN-Zero}[ \text{PatchEmbed}'(x) | t]}_{\textbf{MFM}}
\end{equation}

\paragraph{Fine-tuning from a flow map} For fine-tuning from a flow map, the flow map can be preserved at initialisation through zero initialisation of the $t$ time embedding MLP, and ensuring the new AdaLN-Zero modulates the contribution of $x$ to an all zero tensor. 
\paragraph{Inductive bias at $t=0$} As highlighted in Section~\ref{subsec:mfm_uncond}, $p_{1\mid0}(.|x_0)=p_1$ for any $x_0$, meaning the conditional velocities and maps should in fact be independent of $x_t$ at $t=0$. As such, architectures that ensure that $x$ is ignored by design at $t=0$ (and diminished at low $t$) can be considered to improve performance.

The limited parameter and architectural overhead allows MFMs to be implemented in popular generative modelling workflows and codebases.

\subsection{Adaptive Loss}
In our ImageNet experiments, we follow Mean Flow \citep{geng2025meanflowsonestepgenerative} and use an adaptive loss for both the diagonal and consistency terms of the MFM loss (Equation~\ref{eq:mfm_total_loss}). The adaptively weighted loss is $\mathrm{sg}(w)\cdot \mathcal{L}$, with $\mathcal{L} = \|\Delta\|_2^2$, where $\|\Delta\|$ denotes the regression error. The weights are set as follows:
\begin{equation}
\label{eq:adaptive_loss}
w = \frac{1}{(\|\Delta\|_2^2 + c)^p},
\end{equation}
where $c > 0$ and $p > 0$ are hyperparameters. Note that $p=0, c=0$ recovers the standard $\mathcal{L}_2$ loss. See Table \ref{tab:fid_results} for further results on different choices of these parameters.

\section{Proofs}
\label{app:proofs}
\subsection{Conditional Endpoint Law}
\begin{proposition}
    Assume $I_0 \sim p_0 = \mathcal{N}(0, I)$ and consider the interpolant \begin{equation}
        I_t = \alpha_t I_0 + \beta_t I_1,
    \end{equation}
    with $\alpha_0 = \beta_1 = 1$ and $\alpha_1 = \beta_0 = 0$. Define 
    \begin{equation}
        \frac{\sigma_t^2}{2} = \frac{\dot \beta_t}{\beta_t}\alpha_t^2 -\dot \alpha_t \alpha_t,
    \end{equation}
    and let $p_t = \mathrm{Law}(I_t)$. Consider the SDE
\begin{align}
    dX_t = f_t(X_t)dt + \sigma_t dB_t, \qquad f_t(x) = b_t(x) + \tfrac{\sigma_t^2}{2} \nabla \log p_t(x), \qquad X_0 \sim p_0,
\end{align}
where
\begin{equation}
    b_t(x) = \mathbb{E}[\dot I_t | I_t = x].
\end{equation}
Then for all $t \in [0,1]$ and $x\in \mathbb{R}^d$
\begin{equation}
     \mathrm{Law}(X_1 | X_t = x) = \mathrm{Law}(I_1 | I_t = x).
\end{equation}
\end{proposition}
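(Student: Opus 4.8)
The plan is to reduce the statement to a claim about the \emph{forward} conditional law $\mathrm{Law}(X_t\mid X_1=y)$, and then to realize the SDE $(X_t)$ as a mixture of explicit linear (Gauss--Markov) bridges whose endpoints are drawn from $p_1$. First recall that the added-noise construction ensures $\mathrm{Law}(X_t)=p_t$ for all $t$, so in particular $\mathrm{Law}(X_1)=p_1=\mathrm{Law}(I_1)$. By Bayes' rule, the identity $\mathrm{Law}(X_1\mid X_t=x)=\mathrm{Law}(I_1\mid I_t=x)$ is then equivalent to equality of the joint laws of $(X_t,X_1)$ and $(I_t,I_1)$; since the $X_1$- and $I_1$-marginals already agree, it suffices to show
\begin{equation}
\mathrm{Law}(X_t\mid X_1=y)=\mathrm{Law}(I_t\mid I_1=y)=\mathcal{N}(\beta_t y,\ \alpha_t^2 I),\qquad\forall\,t\in[0,1],\ y\in\mathbb{R}^d,
\end{equation}
where the right-hand Gaussian uses only $I_0\sim\mathcal{N}(0,I)$ and $I_t=\alpha_t I_0+\beta_t I_1$.

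To establish this, I would introduce, for each $y$, the Gauss--Markov process solving the linear SDE
\begin{equation}
d\tilde X_t^y=\big(A_t\,\tilde X_t^y+C_t\,y\big)\,dt+\sigma_t\,dB_t,\qquad \tilde X_0^y\sim\mathcal{N}(0,I),
\end{equation}
with $A_t:=2\dot\alpha_t/\alpha_t-\dot\beta_t/\beta_t$ and $C_t:=2(\dot\beta_t-\dot\alpha_t\beta_t/\alpha_t)$. Since this SDE is linear, $\tilde X_t^y$ is Gaussian, and its mean/covariance ODEs $\dot\mu_t=A_t\mu_t+C_t y$ and $\dot v_t=2A_t v_t+\sigma_t^2$ are solved by $\mu_t=\beta_t y$ and $v_t=\alpha_t^2$ — two identities that hold precisely because $\sigma_t^2/2=(\dot\beta_t/\beta_t)\alpha_t^2-\dot\alpha_t\alpha_t$, with the boundary data $\alpha_0=1,\ \beta_0=0$ matching the initial law. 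Now draw $Y\sim p_1$ independently of $B$ and set $\bar X:=\tilde X^{Y}$. Then $\bar X_1=\tilde X_1^{Y}=\beta_1 Y=Y$ a.s.\ (as $\alpha_1=0$), and the joint density of $(\bar X_t,Y)$ is $p_1(y)\,\mathcal{N}(x;\beta_t y,\alpha_t^2 I)$, which is exactly the joint density of $(I_t,I_1)$.

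It then remains to identify $\bar X$ with $X$ in law. Both are diffusions with the same diffusion coefficient $\sigma_t$ and the same initial law $p_0$, so it is enough to match their drifts. The infinitesimal drift of $\bar X$ at $x$ is $\mathbb{E}[A_t x+C_t Y\mid\bar X_t=x]=A_t x+C_t\,\mathbb{E}[I_1\mid I_t=x]$ (using that $(\bar X_t,Y)$ and $(I_t,I_1)$ share a joint law), and the core computation is to verify this equals $f_t(x)=b_t(x)+\tfrac{\sigma_t^2}{2}\nabla\log p_t(x)$. This follows by expanding $b_t(x)=\dot\alpha_t\,\mathbb{E}[I_0\mid I_t=x]+\dot\beta_t\,\mathbb{E}[I_1\mid I_t=x]$, using the Tweedie identity $\nabla\log p_t(x)=-\mathbb{E}[I_0\mid I_t=x]/\alpha_t=(\beta_t\mathbb{E}[I_1\mid I_t=x]-x)/\alpha_t^2$ for the Gaussian $I_0$-component, substituting $x=\alpha_t\mathbb{E}[I_0\mid I_t=x]+\beta_t\mathbb{E}[I_1\mid I_t=x]$, and collecting terms; the schedule is exactly what cancels the $\mathbb{E}[I_0\mid I_t=x]$-contributions and leaves $A_t x+C_t\mathbb{E}[I_1\mid I_t=x]$. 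With drift, diffusion, and initial law coinciding, SDE (martingale-problem) uniqueness gives $\mathrm{Law}(\bar X)=\mathrm{Law}(X)$, hence $\mathrm{Law}(X_t,X_1)=\mathrm{Law}(\bar X_t,Y)=\mathrm{Law}(I_t,I_1)$, and conditioning on the first coordinate finishes the proof.

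The main obstacle I anticipate is purely the schedule-dependent algebra that makes the three verifications line up — the mean ODE, the covariance ODE, and the drift identity $A_t x+C_t\mathbb{E}[I_1\mid I_t=x]=f_t(x)$ — all of which reduce to the single fact that $\sigma_t^2/2=(\dot\beta_t/\beta_t)\alpha_t^2-\dot\alpha_t\alpha_t$ linearizes the combination $b_t-\tfrac{\sigma_t^2}{2}\nabla\log p_t$ into $(\dot\beta_t/\beta_t)\,x$; a reader who does not see that identity up front may find the drift computation opaque, and the endpoints $t\in\{0,1\}$ need the usual regularity care for $b_t$ and $\nabla\log p_t$. Everything else (the Bayes reduction, Gaussianity of linear SDEs, the superposition-of-bridges step, and SDE uniqueness) is standard. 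An essentially equivalent route bypasses the mixture argument: posit the ansatz $k_{t,1}(x,y)=p_1(y)\,\mathcal{N}(x;\beta_t y,\alpha_t^2 I)/p_t(x)$ for the transition density, verify it solves the backward Kolmogorov equation $\partial_t k+f_t\cdot\nabla_x k+\tfrac{\sigma_t^2}{2}\Delta_x k=0$ with terminal delta at $t=1$ (using the Fokker--Planck equation for $p_t$ to dispatch the $1/p_t$ factor), and invoke uniqueness; the load-bearing computation is the same.
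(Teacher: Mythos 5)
Your route is genuinely different from the paper's. The paper proves the claim by applying Anderson's time-reversal to the SDE, observing that the reversed drift simplifies (via Tweedie) to the linear form $-\tfrac{\dot\beta_t}{\beta_t}x$, and then solving that linear SDE explicitly with an integrating factor to read off $\mathrm{Law}(X_t\mid X_1)$. You instead build a forward superposition of explicit Gauss--Markov bridges $\tilde X^{y}$ and try to identify the mixture $\bar X := \tilde X^{Y}$ with $X$. The schedule algebra you rely on is correct: $\mu_t=\beta_t y$ and $v_t=\alpha_t^2$ do solve your mean/variance ODEs, and $f_t(x)=A_t x + C_t\,\mathbb{E}[I_1\mid I_t=x]$ does hold (I verified all three).

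The genuine gap is the sentence ``Both are diffusions \dots\ so it is enough to match their drifts,'' followed by the invocation of martingale-problem uniqueness. The process $\bar X$ is, by construction, a mixture over $Y\sim p_1$ of Markov processes, and such a mixture is \emph{not automatically Markov} in its own filtration: what you compute, $\mathbb{E}[A_t\bar X_t + C_tY\mid \bar X_t=x]$, is only the Markovian-projection (Gy\"ongy/mimicking) drift, which guarantees the one-time marginals of $\bar X$ solve the same Fokker--Planck equation as $p_t$, but does \emph{not} by itself imply equality of two-time joint laws or of path laws. To invoke uniqueness you must first show that $\bar X$ is a weak solution of the martingale problem for $(f_t,\sigma_t)$, which requires $\mathbb{E}[Y\mid \bar X_s,\ s\le t]=\mathbb{E}[Y\mid \bar X_t]$. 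That identity is true here, but only because of a cancellation: the Anderson reversal of the bridge SDE $d\tilde X^y_t=(A_t\tilde X^y_t + C_t y)\,dt+\sigma_t dB_t$ (using its marginal $\mathcal{N}(\beta_t y,\alpha_t^2 I)$) has drift $-\tfrac{\dot\beta_{1-s}}{\beta_{1-s}}x$ with the $y$-dependence exactly cancelling, so the mixture of reversals is Markov and hence $\bar X$ is Markov. That cancellation is precisely the load-bearing computation in the paper's proof, so as written your argument quietly presupposes it. The cleanest fix is either to carry out that reversal-drift cancellation explicitly, or to switch to the alternative route you mention at the end (verifying the ansatz for the transition density against the backward Kolmogorov equation), which sidesteps the Markovianity question entirely; with either patch the proof is sound.
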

\begin{proof}
    Note that $\mathrm{Law}(X_t) = p_t$, see \cite{song2020score,albergo_stochastic_dd_2023}. Therefore the time reversal $(Y_t)_{t \in [0,1]} := (X_{1-t})_{t \in [0,1]}$ satisfies the following SDE \citep{ANDERSON1982313}
    \begin{equation}
        dY_t = \tilde f_{1-t}(Y_t) + \sigma_{1-t} dB_t, \qquad \tilde f_t(x) = -b_t(x) + \tfrac{\sigma_t^2}{2} \nabla \log p_t(x), \qquad Y_0 \sim p_1.
    \end{equation}
    
    By Tweedie's formula we have the identity
    \begin{equation}
        \nabla \log p_t(x) = \frac{-x + \beta_t \mathbb{E}[I_1|I_t=x]}{\alpha_t^2}.
    \end{equation}
    This gives
    \begin{align}
        \tilde f_t(x) 
        &= -\dot \alpha_t \mathbb{E}[I_0 | I_t = x] - \dot \beta_t \mathbb{E}[I_1 | I_t = x] + (\frac{\dot \beta_t}{\beta_t}\alpha_t^2 -\dot \alpha_t \alpha_t)\frac{-x + \beta_t \mathbb{E}[I_1|I_t=x]}{\alpha_t^2} \\
        &= -\dot \alpha_t \mathbb{E}[I_0 | I_t = x] - \frac{\dot \alpha_t}{\alpha_t}\beta_t \mathbb{E}[I_1|I_t=x] + (\frac{\dot \alpha_t}{\alpha_t} -\frac{\dot \beta_t}{\beta_t})x.
    \end{align}
    Using the relation $x = \alpha_t \mathbb{E}[I_0|I_t=x] + \beta_t \mathbb{E}[I_1|I_t=x]$ we obtain
    \begin{align}
        \tilde f_t(x) 
        &= -\frac{\dot \beta_t}{\beta_t} x.
    \end{align}
    We use an integrating factor. Let $\Phi(t) = \int_0^t -\frac{\dot \beta_{1-s}}{\beta_{1-s}}ds = \log \beta_{1-t}$ and by Ito's formula we have
    \begin{align}
        d \left (e^{- \Phi(t)}Y_t \right )
        &= - \Phi'(t) e^{- \Phi(t)}Y_tdt + e^{- \Phi(t)}dY_t \\
        &= \frac{\dot \beta_{1-t}}{\beta_{1-t}}e^{- \Phi(t)}Y_tdt - e^{- \Phi(t)}\frac{\dot \beta_{1-t}}{\beta_{1-t}}Y_tdt  + e^{-\Phi(t)}\sigma_{1-t}dB_t \\
        &= e^{-\Phi(t)}\sigma_{1-t}dB_t.
    \end{align}
    Therefore
    \begin{align}
        Y_t = e^{\Phi(t)}Y_0 + e^{\Phi(t)}\int_0^t e^{-\Phi(s)}\sigma_{1-s}dB_s.
    \end{align}
    Notice that $\int_0^t e^{-\Phi(s)}\sigma_{1-s}dB_s$ is a mean zero Gaussian with variance equal to
    \begin{align}
        \int_0^t e^{-2\phi(s)}\sigma_{1-s}^2ds
        &=\int_0^t 2\beta_{1-s}^{-2}(\frac{\dot \beta_{1-s}}{\beta_{1-s}}\alpha_{1-s}^2 -\dot \alpha_{1-s} \alpha_{1-s})ds \\
        &= \int_0^t \frac{d}{ds}\left( \frac{\alpha_{1-s}^2}{\beta_{1-s}^2} \right )ds \\
        &= \frac{\alpha_{1-t}^2}{\beta_{1-t}^2}.
    \end{align}
    Therefore 
    \begin{equation}
        \mathrm{Law}(Y_t | Y_0) = \mathrm{Law}(\alpha_{1-t} Z + \beta_{1-t} Y_0 | Y_0),
    \end{equation}
    for some $Z \sim \mathcal{N}(0, I)$ independent of $Y_0$. Since $Y$ is the time reversal of $X$, we have for all $t \in [0,1]$ and $x \in \mathbb{R}^d$
    \begin{equation}
        \mathrm{Law}(X_t | X_1 = x)
        = \mathrm{Law}(\alpha_t Z + \beta_t I_1 | x),
    \end{equation}
    where $Z \sim \mathcal{N}(0,I)$ is independent of $I_1$. Moreover, we also have equality of the marginal laws $\mathrm{Law}(X_1) = \mathrm{Law}(I_1)$, so the joint laws of the SDE and the interpolant coincide:
    \begin{equation}
        \mathrm{Law}(X_t, X_1) = \mathrm{Law}(I_t, I_1).
    \end{equation}
    Equality of the joint laws implies equality of the corresponding conditional laws for all $t \in [0,1]$ and $x \in \mathbb{R}^d$
    \begin{equation}
        \mathrm{Law}(X_1 | X_t = x)
        = \mathrm{Law}(I_1 | I_t = x),
    \end{equation}
    which is precisely the desired claim.
\end{proof}

\subsection{Convergence Guarantees}
\label{app:convergence_proof}
In this section, we provide a proof for the convergence rates stated in Proposition~\ref{prop:convergence} in the main text.

\begin{proposition}[Formal Convergence Guarantees]
\label{prop:formal_convergence_full}
Let $p_{\mathrm{reward}}$ denote the target distribution defined in \eqref{eq:tilted_dist}. Let $\hat{p}_1$ denote the terminal distribution generated by the MFM steering (SDE) sampler (Algorithm~\ref{alg:mfm_steering_sde}) using a uniform time discretisation with $K$ steps ($t_k = k/K$) and $N$ independent Monte Carlo samples per step.

Suppose the following regularity conditions hold:
\begin{enumerate}
    \item The reward function $r \in C^1(\mathbb{R}^d)$ and its gradient $\nabla r$ are both bounded.
    \item The MFM $f(\epsilon, t, x) \in C^1(\mathbb{R}^d)$ in $x$ and $\nabla_x f$ is bounded.
    \item The optimal drift $b^\star_t(x)$ is $L$-Lipschitz continuous in space and $1/2$-Hölder continuous in time. That is, for all $t,s \in [0,1]$ and $x,y \in \mathbb{R}^d$:
    \begin{equation}
    |b^\star_t(x) - b^\star_t(y)| \leq L|x-y| \quad \text{and} \quad |b^\star_t(x) - b^\star_s(x)| \leq C_{\mathrm{time}} |t-s|^{1/2}.
    \end{equation}
    
    \item There exist $\sigma_{\max} > \sigma_{\min} > 0$ such that $\sigma_{\min} \leq \sigma_t \leq \sigma_{\max}$ for all $t \in [0,1]$.
\end{enumerate}
Then, there exists a constant $C > 0$ independent of $K$ and $N$ such that the convergence to the target satisfies:
\begin{equation}
    W_2(\hat{p}_1, p_{\mathrm{reward}}) \leq C \left( \frac{1}{\sqrt{K}} + \frac{1}{N} \right) \quad \text{and} \quad \mathrm{KL}(\hat{p}_1 \| p_{\mathrm{reward}}) \le C \left( \frac{1}{K} + \frac{1}{N} \right).
\end{equation}
\end{proposition}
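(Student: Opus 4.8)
The plan is to decompose the total error by inserting an idealized reference process: let $\tilde{p}_1$ denote the terminal law of the Euler--Maruyama scheme that uses $K$ uniform steps but the \emph{exact} optimal drift $b_t^\star$ from~\eqref{eq:sde_steering}, and recall from Section~\ref{sec:doob} that the continuous-time controlled SDE has terminal law exactly $p_{\mathrm{reward}}$. For the $W_2$ bound I would use the triangle inequality $W_2(\hat p_1,p_{\mathrm{reward}}) \le W_2(\hat p_1,\tilde p_1) + W_2(\tilde p_1,p_{\mathrm{reward}})$ and treat the two pieces separately; for the KL bound a direct Girsanov computation is cleaner. As a preliminary used throughout, I would establish uniform-in-$(K,N)$ moment bounds $\sup_t \mathbb{E}\|\hat X_t\|^2 \le C$ and one-step increment bounds $\mathbb{E}\|\hat X_t - \hat X_{\kappa(t)}\|^2 \le C(t-\kappa(t))$, where $\kappa(t)$ is the largest grid point $\le t$. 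These follow from the at-most-linear growth of $b_t$ and $\nabla\log p_t$ together with boundedness of $\nabla V_t$ and of the estimator $\widehat{\nabla V_t}$; the latter follows from assumptions~1--2, since $r$ bounded forces $e^{r(\Phi)}$ bounded above and below away from zero, so the self-normalized estimators~\eqref{eq:self_norm_estimator}--\eqref{eq:gradient_estimator} and their $x$-gradients are well-defined with bounded moments.

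For the discretization term $W_2(\tilde p_1,p_{\mathrm{reward}})$, I would run a synchronous coupling of the exact controlled SDE and its Euler scheme under the same Brownian motion, apply It\^o to the squared difference, and close a Gr\"onwall inequality. The $L$-Lipschitz-in-space assumption controls $\|b_t^\star(\tilde X_t) - b_t^\star(\tilde X_{\kappa(t)})\|$, while the $\tfrac12$-H\"older-in-time assumption together with the one-step increment bound controls the drift-freezing error $\|b_{\kappa(t)}^\star(\tilde X_{\kappa(t)}) - b_t^\star(\tilde X_{\kappa(t)})\| \le C_{\mathrm{time}}(t-\kappa(t))^{1/2} + L\|\tilde X_t - \tilde X_{\kappa(t)}\|$. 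This gives $\mathbb{E}\sup_t \|X_t^\star - \tilde X_t\|^2 \le C/K$, hence $W_2(\tilde p_1,p_{\mathrm{reward}}) \le C/\sqrt{K}$, a standard strong-order-$1/2$ Euler estimate that needs no smoothness of $b_t^\star$ beyond these hypotheses.

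For the Monte Carlo term $W_2(\hat p_1,\tilde p_1)$, I would couple $\hat X$ and $\tilde X$ through the same Brownian path so that the diffusion terms cancel, and track $e_{t_k} = \hat X_{t_k} - \tilde X_{t_k}$. Its recursion contains a Lipschitz term $\Delta t\,[b_{t_k}^\star(\hat X_{t_k}) - b_{t_k}^\star(\tilde X_{t_k})]$ and a Monte Carlo perturbation $\Delta t\,\sigma_{t_k}^2\,\xi_k$ with $\xi_k := \widehat{\nabla V_{t_k}}(\hat X_{t_k}) - \nabla V_{t_k}(\hat X_{t_k})$. The key technical input is a delta-method (second-order Taylor) analysis of $\xi_k$: the numerator and denominator sample averages underlying~\eqref{eq:gradient_estimator} (resp.~\eqref{eq:self_norm_estimator}) are unbiased for $\nabla e^{V_t}$ and $e^{V_t}$ with variance $O(1/N)$, so the ratio/logarithm introduces a conditional bias $\|\mathbb{E}[\xi_k \mid \hat X_{t_k}]\| \le C/N$ and conditional variance $\le C/N$. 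Because fresh noise is drawn each step, the centered parts $\xi_k - \mathbb{E}[\xi_k\mid\hat X_{t_k}]$ form a martingale-difference sequence, so their $\Delta t$-weighted sum has variance $\sum_k \Delta t^2\,O(1/N) = O(1/(KN))$, while the per-step biases accumulate only to magnitude $O(1/N)$; a Gr\"onwall argument then yields $\mathbb{E}\|e_1\|^2 \le C\big(1/(KN) + 1/N^2\big)$, so $W_2(\hat p_1,\tilde p_1) \le C(1/\sqrt{KN} + 1/N) \le C(1/\sqrt{K} + 1/N)$, which with the previous paragraph proves the $W_2$ claim. For KL I would instead view $\hat X$ as a continuous-time SDE with piecewise-frozen, MC-randomized drift, enlarge the probability space so the Monte Carlo randomness lies in the driving filtration, and apply Girsanov against the exact controlled SDE (the diffusion coefficients match); the data-processing inequality gives $\mathrm{KL}(\hat p_1\|p_{\mathrm{reward}}) \le \tfrac12\mathbb{E}\int_0^1 \sigma_t^{-2}\|\hat b_{\kappa(t)}^\star(\hat X_{\kappa(t)}) - b_t^\star(\hat X_t)\|^2\,dt$, and splitting the integrand into the same discretization and Monte Carlo pieces yields $O(1/K)$ and $O(1/N)$ respectively (the variance term here contributing its full $O(1/N)$, since there is no cross-step cancellation inside an $\|\cdot\|^2$ integral), hence $\mathrm{KL}(\hat p_1\|p_{\mathrm{reward}}) \le C(1/K + 1/N)$.

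The main obstacle is twofold. First is the delta-method control of the \emph{bias} of the self-normalized gradient estimators: showing it is genuinely $O(1/N)$ rather than $O(1/\sqrt N)$ requires boundedness of $r$ (so every relevant exponential moment exists) and of $\nabla r$ and $\nabla_x\Phi$ (so one may differentiate under the expectation and bound the gradient of the ratio), and one must verify the second-order Taylor remainder is uniformly controlled over the query states $\hat X_{t_k}$ using the moment bounds. Second is the bookkeeping that keeps the accumulated Monte Carlo error from degrading to $O(\sqrt K/N)$: in the $W_2$ argument this rests on independence of the per-step noise and the resulting martingale-difference cancellation, and in the Girsanov argument on the $\|\cdot\|^2$ structure of the relative-entropy integrand. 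The remaining steps are routine Euler--Maruyama and Girsanov stability estimates.
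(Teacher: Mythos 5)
Your proposal is correct and follows essentially the same route as the paper: a delta-method (second-order Taylor) analysis establishing $O(1/N)$ bias and $O(1/N)$ variance for the self-normalized drift estimator, synchronous coupling with a martingale-difference cancellation that turns the per-step variance into an $O(1/(NK))$ contribution, Gr\"onwall for the $W_2$ bound, and Girsanov with data-processing for the KL bound where the $\|\cdot\|^2$ integrand prevents cross-step cancellation and the variance contributes its full $O(1/N)$. The only cosmetic difference is that you insert an intermediate Euler process $\tilde X$ driven by the exact drift $b^\star$ and split via the triangle inequality, whereas the paper couples $\hat X$ directly to $X^\star$ and performs a single three-term drift decomposition (martingale $\xi_t$, bias $\Delta_t$, discretization $D_t$); both organizations yield the same estimates.
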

\begin{remark}
We note that the regularity conditions on the drift may be violated at $t=1$ if the target distribution is supported on a low-dimensional manifold, as the score function becomes singular. Following standard practice in diffusion theory, our results formally apply to the process stopped at $t = 1 - \varepsilon$ for a small $\varepsilon > 0$, where the score is smooth and the Lipschitz constant $L_\varepsilon$ is finite. In the case of the Wasserstein bound, we pick up an additional term $W_2(p_{1-\varepsilon}, p_{\mathrm{reward}})$ corresponding to the smoothing error, which allows us to bound the distance to the exact target.
\end{remark}
\begin{proof}
To analyse the convergence, we view the discrete sampling algorithm (Algorithm~\ref{alg:mfm_steering_sde}) as a continuous-time randomised Euler scheme by interpolating the discrete algorithm. Let $t_k = k/K$ for $k=0, \dots, K$ be a uniform time grid with step size $\delta = 1/K$. We define the randomised interpolant $\widehat{X}_t$ as the continuous process satisfying:
\begin{equation}
    d\widehat{X}_t = \hat{b}^{(N)}_{\eta(t)}(\widehat{X}_{\eta(t)}) dt + \sigma_t dB_t, \quad \widehat{X}_0 \sim p_0,
\end{equation}
where $\eta(t) = t_k$ for $t \in [t_k, t_{k+1})$. The drift $\hat{b}^{(N)}_{t_k}(x)$ is the Monte Carlo estimator of the optimal drift derived using $N$ independent samples. Crucially, for each step $k$, we draw a fresh batch of $N$ samples $\epsilon^{(k, 1)}, \dots, \epsilon^{(k, N)} \sim p_0$ to construct the estimator. We use the Gradient Estimator defined in~\eqref{eq:gradient_estimator} for the drift. Specifically, $\hat{b}^{(N)}_t(x)$ estimates the optimal drift $b^\star_t(x) = b_t(x) + \frac{\sigma_t^2}{2}\nabla \log p_t(x) + \sigma_t^2 \nabla V_t(x)$. Using the reparametrisation $f(\epsilon, t, x) = \Phi_{0,1}(\epsilon; t, x)$, the gradient estimator is:
\begin{equation}
    \hat{b}^{(N)}_t(x) = b_t(x) + \frac{\sigma_t^2}{2}\nabla \log p_t(x) + \sigma_t^2 \nabla_x \log \left( \frac{1}{N} \sum_{i=1}^N e^{r(f(\epsilon^{(i)}, t, x))} \right).
\end{equation}
We compare $\widehat{X}_t$ to the optimal steered process $X^\star_t$ governed by the exact drift $b^\star_t(x)$:
\begin{equation}
    dX^\star_t = b^\star_t(X^\star_t) dt + \sigma_t dB_t, \quad X^\star_0 \sim p_0.
\end{equation}
Our main analysis relies on controlling the error of the Monte Carlo drift estimator. The following proposition establishes the bias and variance bounds that will be central to our main proof.
\begin{proposition}[Drift Estimator Moments]
\label{prop:drift_moments}
There exists a constant $C'$ such that for any $x$:
\begin{equation}
    \underbrace{\left|\mathbb{E}[\hat{b}^{(N)}_t(x)] - b^\star_t(x)\right| }_{\text{Bias}} \leq \frac{C'}{N}, \qquad
    \underbrace{\mathbb{E}\left[\left|\hat{b}^{(N)}_t(x) - \mathbb{E}[\hat{b}^{(N)}_t(x)]\right|^2\right]}_{\text{Variance}}\leq \frac{C'}{N}.
\end{equation}
\end{proposition}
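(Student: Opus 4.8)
\emph{Proof plan.} The plan is to observe that the $b_t$ and $\nabla\log p_t$ terms are deterministic and identical in $\hat b^{(N)}_t$ and $b^\star_t$, so the estimation error collapses to $\hat b^{(N)}_t(x) - b^\star_t(x) = \sigma_t^2\bigl(\widehat{\nabla V_t(x)} - \nabla V_t(x)\bigr)$; since $\sigma_t^2 \le \sigma_{\max}^2$, it suffices to bound the bias and variance of the ratio estimator $\widehat{\nabla V_t(x)} = H_N/G_N$, where, setting $g(\epsilon):=e^{r(f(\epsilon,t,x))}$, I write $G_N:=\tfrac1N\sum_{i=1}^N g(\epsilon^{(i)})$, $H_N:=\tfrac1N\sum_{i=1}^N \nabla_x g(\epsilon^{(i)})$, $\mu:=\mathbb{E}[g(\epsilon)]$, $\nu:=\mathbb{E}[\nabla_x g(\epsilon)]$, so that $\nabla V_t(x)=\nu/\mu$ (the interchange of $\nabla_x$ and $\mathbb{E}$ being justified by dominated convergence). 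The first step is to record the uniform bounds that drive the argument. Writing $R:=\sup_x|r(x)|$, boundedness of $r$ gives $e^{-R}\le g\le e^{R}$ and hence the same two-sided bound for $G_N$ and $\mu$ — in particular $1/G_N\le e^{R}$ \emph{surely}, so the ratio is always well defined; moreover $\nabla_x g = g\,(\nabla_x f)^{\!\top}\nabla r(f)$ together with boundedness of $\nabla r$ and $\nabla_x f$ gives $|\nabla_x g|\le B$ for a finite constant $B$, hence $|H_N|,|\nu|\le B$. These immediately yield $\operatorname{Var}(G_N)\le e^{2R}/N$ and $\mathbb{E}|H_N-\nu|^2=\operatorname{Var}(H_N)\le B^2/N$. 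Since every constant here is a supremum over $(\epsilon,t,x)$, the eventual $C'$ is automatically uniform in $N$, $t$, and $x$.

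For the \textbf{variance}, I would use the elementary identity
\[
\frac{H_N}{G_N}-\frac{\nu}{\mu}=\frac{H_N(\mu-G_N)}{\mu\,G_N}+\frac{H_N-\nu}{\mu}.
\]
The second term has expected squared norm $\le e^{2R}B^2/N$; for the first, $|H_N|\le B$ and $\mu^{-1}G_N^{-1}\le e^{2R}$ give the pointwise bound $Be^{2R}|\mu-G_N|$, whose square has expectation $\le B^2e^{6R}/N$. Summing (absorbing a factor $2$), $\mathbb{E}\bigl[|H_N/G_N-\nu/\mu|^2\bigr]\le C'/N$, and since the true mean minimizes mean-square deviation this also bounds $\operatorname{Var}(\widehat{\nabla V_t(x)})$; multiplying by $\sigma_t^4\le\sigma_{\max}^4$ gives the claimed variance bound.

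The \textbf{bias} is the delicate step, because the crude bound on the first term above only yields $O(1/\sqrt N)$. The trick is to expand $\tfrac1{\mu G_N}=\tfrac1{\mu^2}-\tfrac{\mu-G_N}{\mu^2 G_N}$ and use $\mathbb{E}[H_N-\nu]=0$ to kill the second term of the identity above, leaving
\[
\mathbb{E}\!\left[\frac{H_N}{G_N}\right]-\frac{\nu}{\mu}=\frac{1}{\mu^2}\,\mathbb{E}\bigl[H_N(\mu-G_N)\bigr]-\mathbb{E}\!\left[\frac{H_N(\mu-G_N)^2}{\mu^2 G_N}\right].
\]
The last expectation is bounded in norm by $Be^{5R}\operatorname{Var}(G_N)=O(1/N)$. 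For the first, I would compute $\mathbb{E}[H_NG_N]$ exactly from the i.i.d.\ structure: the $N(N-1)$ off-diagonal pairs factorize to $\mu\nu$ and the $N$ diagonal terms contribute $c:=\mathbb{E}[g\,\nabla_x g]$, so $\mathbb{E}[H_NG_N]=\tfrac{N-1}{N}\mu\nu+\tfrac1N c$ and hence $\mathbb{E}[H_N(\mu-G_N)]=\mu\nu-\mathbb{E}[H_NG_N]=\tfrac1N(\mu\nu-c)=O(1/N)$, using $|\mu\nu|,|c|\le Be^{R}$. Combining the two pieces and multiplying by $\sigma_t^2\le\sigma_{\max}^2$ gives $|\mathbb{E}[\hat b^{(N)}_t(x)]-b^\star_t(x)|\le C'/N$. \textbf{The main obstacle is precisely this bias estimate}: one must resist settling for the Cauchy--Schwarz bound $\mathbb{E}|\mu-G_N|\le N^{-1/2}$, and instead exploit both the unbiasedness of the numerator $H_N$ and the exact (V-statistic) evaluation of $\mathbb{E}[H_NG_N]$ to harvest the extra factor $N^{-1/2}$; everything else — the uniform bounds and the algebraic decompositions — is routine.
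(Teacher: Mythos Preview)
Your proof is correct and takes a genuinely different route from the paper's. The paper argues via a second-order Taylor expansion of $h(u,v)=u/v$ around the population means (the delta method): first-order terms vanish in expectation, the second-order terms are variances and covariances of the sample means and hence $O(1/N)$, and the third-order remainder is asserted to be $O(N^{-2})$; the variance is handled similarly using the first-order linear part. You instead use the exact algebraic identity $H_N/G_N-\nu/\mu = H_N(\mu-G_N)/(\mu G_N) + (H_N-\nu)/\mu$, a further exact expansion of $1/(\mu G_N)$, and a direct V-statistic computation of $\mathbb{E}[H_N G_N]$ from the i.i.d.\ structure. The payoff of your route is that it is fully rigorous with no remainder term to control: because $r$ is bounded you have $1/G_N\le e^R$ \emph{surely}, so every inequality is a hard pointwise bound, whereas the paper's Taylor-remainder step is stated heuristically and implicitly relies on the same lower bound on $G_N$. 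The paper's approach, in return, makes the leading constants in the bias and variance (in terms of $\mathrm{Var}(W_i)$ and $\mathrm{Cov}(G_i,W_i)$) more transparent. One harmless slip: the correct expansion is $\tfrac{1}{\mu G_N}=\tfrac{1}{\mu^2}+\tfrac{\mu-G_N}{\mu^2 G_N}$ (plus, not minus), which flips the sign of the second term in your displayed bias identity; since you bound each piece in absolute value this does not affect the conclusion.
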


\begin{proof}
Recall that the estimator is given by $\hat{b}^{(N)}_t(x) = b_t(x) + \frac{\sigma_t^2}{2}\nabla \log p_t(x) + \sigma^2_t \widehat{\nabla V}_t(x)$, where $b_t(x) + \frac{\sigma_t^2}{2}\nabla \log p_t(x)$ is deterministic. Therefore, the bias and variance of $\hat{b}^{(N)}_t(x)$ are determined by the properties of the gradient estimator $\widehat{\nabla V}_t(x)$. Recall that $\nabla V_t(x) = \frac{\mathbb{E}[e^{r(f(\epsilon, t, x))} \nabla_x(r \circ f)(\epsilon, t, x)]}{\mathbb{E}[e^{r(f(\epsilon, t, x))}]}$. The estimator takes the form of a ratio of sample means $\frac{\bar G}{\bar W} = \frac{\frac{1}{N}\sum G_i}{\frac{1}{N}\sum W_i}$, where $W_i = e^{r(f(\epsilon^{(i)},t, x))}$ and $G_i = W_i\nabla (r \circ f)(\epsilon^{(i)}, t, x)$. Let $\mu_W = \mathbb{E}[W_i]$ and $\mu_G = \mathbb{E}[G_i]$. The target is $\frac{\mu_G}{\mu_W}$. To analyse the bias and variance, we apply the multivariate Taylor expansion of the function $h(u, v) = u/v$ around the point $(\mu_G, \mu_W)$. The partial derivatives evaluated at the mean are:
\begin{equation}
    \frac{\partial h}{\partial u} = \frac{1}{\mu_W}, \quad \frac{\partial h}{\partial v} = -\frac{\mu_G}{\mu_W^2}, \quad
    \frac{\partial^2 h}{\partial u^2} = 0, \quad \frac{\partial^2 h}{\partial v^2} = \frac{2\mu_G}{\mu_W^3}, \quad \frac{\partial^2 h}{\partial u \partial v} = -\frac{1}{\mu_W^2}.
\end{equation}

\paragraph{Bias Analysis.} Expanding $h(\bar G, \bar W) = \frac{\bar G}{\bar W}$ to the second order yields:
\begin{align}
    \frac{\bar{G}}{\bar{W}} &= \frac{\mu_G}{\mu_W} + \frac{1}{\mu_W}(\bar{G} - \mu_G) - \frac{\mu_G}{\mu_W^2}(\bar{W} - \mu_W)  \\
    &\quad + \frac{1}{2} \left[ \frac{2\mu_G}{\mu_W^3}(\bar{W} - \mu_W)^2 - \frac{2}{\mu_W^2}(\bar{G} - \mu_G)(\bar{W} - \mu_W) \right] + R_3,
\end{align}
where $R_3$ is the remainder term. Taking the expectation, the first-order terms vanish since $\mathbb{E}[\bar G] = \mu_G$ and $\mathbb{E}[\bar W] = \mu_W$. For the second-order terms, we utilise the properties of the sample mean variances and covariances:
\begin{equation}
\mathbb{E}[(\bar{W} - \mu_W)^2] = \frac{1}{N}\text{Var}(W_i), \quad \mathbb{E}[(\bar{G} - \mu_G)(\bar{W} - \mu_W)] = \frac{1}{N}\text{Cov}(G_i, W_i).
\end{equation}
Substituting these into the expectation:
\begin{equation}
\mathbb{E}\left[\frac{\bar{G}}{\bar{W}}\right] - \frac{\mu_G}{\mu_W} = \frac{1}{N} \left( \frac{\mu_G}{\mu_W^3}\text{Var}(W_i) - \frac{1}{\mu_W^2}\text{Cov}(G_i, W_i) \right) + \mathbb{E}[R_3].
\end{equation}
By our boundedness assumptions, $W_i$ and $G_i$ have bounded moments, and $\mu_W > 0$. The expectation of the remainder $\mathbb{E}[R_3]$ is dominated by third-order central moments of the sample means, which scale as $\mathcal{O}(N^{-2})$ for i.i.d.\ variables with bounded moments. Thus, the bias is dominated by the $1/N$ term:
\begin{equation}
\left| \mathbb{E}[\hat{b}^{(N)}_t(x)] - b^\star_t(x) \right| \leq \frac{C'}{N}.
\end{equation}

\paragraph{Variance Analysis.}
Using the second-order Taylor expansion of $h$ around $(\mu_G, \mu_W)$, we have:
\begin{equation}
\frac{\bar G}{\bar W} = \frac{\mu_G}{\mu_W} + \mathcal{L} + R_2,
\end{equation}
where $\mathcal{L} = \frac{1}{\mu_W}(\bar{G} - \mu_G) - \frac{\mu_G}{\mu_W^2}(\bar{W} - \mu_W)$ is the first-order linear term and $R_2$ is the remainder. Since $\mathbb{E}[\mathcal{L}] = 0$, the expectation of the estimator is
\begin{equation}
\mathbb{E}\left[\frac{\bar G}{\bar W}\right] = \frac{\mu_G}{\mu_W} + \mathbb{E}[R_2].
\end{equation}
Substituting this into the variance definition:
\begin{align}
    \text{Var}\left(\frac{\bar G}{\bar W}\right)
    &= \mathbb{E}\left[ \left( (\frac{\mu_G}{\mu_W} + \mathcal{L} + R_2) - (\frac{\mu_G}{\mu_W} + \mathbb{E}[R_2]) \right)^2 \right] \\
    &= \mathbb{E}\left[ ( \mathcal{L} + (R_2 - \mathbb{E}[R_2]) )^2 \right] \\
    &= \mathbb{E}[\mathcal{L}^2] + 2\mathbb{E}[\mathcal{L}(R_2 - \mathbb{E}[R_2])] + \text{Var}(R_2).
\end{align}
The dominant term is the variance of the linear approximation $\mathbb{E}[\mathcal{L}^2]$:
\begin{equation}
\mathbb{E}[\mathcal{L}^2] = \frac{1}{N} \left( \frac{\text{Var}(G_i)}{\mu_W^2} + \frac{\mu_G^2 \text{Var}(W_i)}{\mu_W^4} - \frac{2\mu_G \text{Cov}(G_i, W_i)}{\mu_W^3} \right).
\end{equation}
The other terms involve the remainder $R_2$, which scales as $\mathcal{O}(N^{-1})$. Therefore, the cross-term $\mathbb{E}[\mathcal{L}R_2]$ and $\text{Var}(R_2)$ scale as $\mathcal{O}(N^{-2})$. Given the boundedness assumptions on $r$ and its gradients, the moments of $G_i$ and $W_i$ are finite. Thus, we obtain the bound:
\begin{equation}
\text{Var}\left(\hat{b}^{(N)}_t(x)\right) = \sigma_t^4 \text{Var}\left(\frac{\bar G}{\bar W}\right) \leq \frac{C'}{N}.
\end{equation}
\end{proof}
We return to our main proof. We use the notation $\lesssim$ to denote inequality up to a multiplicative constant independent of $N$ and $K$, simplifying the presentation by suppressing non-essential factors.
\paragraph{Wasserstein-2 Bound.}
We use a synchronous coupling where both processes are driven by the same Brownian motion $B_t$. Let $e_t = \widehat{X}_t - X^\star_t$. The error evolves according to:
\begin{equation}
    \frac{d}{dt}e_t = \hat{b}^{(N)}_{\eta(t)}(\widehat{X}_{\eta(t)}) - b^\star_t(X^\star_t).
\end{equation}
We decompose the drift mismatch as follows:
\begin{equation}
    \hat{b}^{(N)}_{\eta(t)}(\widehat{X}_{\eta(t)}) - b^\star_t(X^\star_t)= \underbrace{[\hat{b}^{(N)}_{\eta(t)}(\widehat{X}_{\eta(t)}) - \bar{b}_{\eta(t)}(\widehat{X}_{\eta(t)})]}_{\xi_t}+ \underbrace{[\bar{b}_{\eta(t)}(\widehat{X}_{\eta(t)}) - b^\star_{\eta(t)}(\widehat{X}_{\eta(t)})]}_{\Delta_t }+ \underbrace{[b^\star_{\eta(t)}(\widehat{X}_{\eta(t)}) - b^\star_t(X^\star_t)]}_{D_t},
\end{equation}
where $\bar{b}_t(x) = \mathbb{E}[\hat{b}^{(N)}_t(x)]$ denotes the expectation over the random samples $\{\epsilon^{(i)}\}_{i = 1}^N$. Integrating and taking expectations yields:
\begin{equation}
\label{eq:wass_bound_triple}
    \mathbb{E}|e_t|^2 \lesssim \mathbb{E}\left| \int_0^t \xi_s ds \right|^2 + \mathbb{E}\left| \int_0^t \Delta_s ds \right|^2 + \mathbb{E}\left| \int_0^t D_s ds \right|^2.
\end{equation}
Next, we bound each of the three terms separately.

\paragraph{Martingale Term ($\xi_t$).}
Since independent samples are used for each interval $[t_k, t_{k+1})$, the integral represents a sum of martingale increments. By orthogonality:
\begin{equation}
    \mathbb{E}\left| \int_0^t \xi_s ds \right|^2 \leq \sum_{k=0}^{K-1} \mathbb{E}\left| \int_{t_k}^{t_{k+1}} \xi_s ds \right|^2 = \sum_{k=0}^{K-1} \delta^2 \mathbb{E}|\xi_{t_k}|^2.
\end{equation}
Using the variance bound from Proposition~\ref{prop:drift_moments} ($\mathbb{E}\|\xi_{t_k}\|^2 \lesssim N^{-1}$), we have:
\begin{equation}
    \mathbb{E}\left| \int_0^t \xi_s ds \right|^2 \lesssim \sum_{k=0}^{K-1} \delta^2 \frac{1}{N} = \frac{1}{NK}.
\end{equation}

\paragraph{Bias Term ($\Delta_t$).}
Using Jensen's inequality and the bias bound from Proposition~\ref{prop:drift_moments},
\begin{equation}
    \mathbb{E}\left\|\int_0^t\Delta_s ds\right\|^2
    \leq t\int_0^t\mathbb{E}\|\Delta_s\|^2 ds \lesssim \frac{1}{N^2}.
\end{equation}

\paragraph{Discretisation Term ($D_t$).}
We assume $b^\star$ is $L$-Lipschitz in space and $1/2$-Hölder continuous in time. Using the triangle inequality:
\begin{align}
    |D_s|
    &\leq |b^\star_{\eta(s)}(\widehat{X}_{\eta(s)}) - b^\star_s(\widehat{X}_{\eta(s)})| + |b^\star_s(\widehat{X}_{\eta(s)}) - b^\star_s(\widehat{X}_s)| + |b^\star_s(\widehat{X}_s) - b^\star_s(X^\star_s)| \\
    & \lesssim \sqrt{|s - \eta(s)|} + |\widehat{X}_{\eta(s)} - \widehat{X}_s| + |e_s|.
\end{align}
Integrating and taking expectation implies:
\begin{equation}
    \mathbb{E}\left| \int_0^t D_s ds \right|^2 \lesssim \int_0^t \left( \delta + \mathbb{E}|\widehat{X}_{\eta(s)} - \widehat{X}_s|^2 + \mathbb{E}|e_s|^2 \right) ds.
\end{equation}
To handle the term $\mathbb{E}|\widehat{X}_{\eta(s)} - \widehat{X}_s|^2$, recall that $\widehat{X}_s - \widehat{X}_{\eta(s)} = \int_{\eta(s)}^s \hat{b}_u du + \int_{\eta(s)}^s \sigma_u dB_u$. The squared expectation is dominated by the diffusion term (via Itô isometry), which scales as the interval length $\delta$, whereas the drift term scales as $\delta^2$. Thus, we have the standard Euler-Maruyama bound $\mathbb{E}|\widehat{X}_{\eta(s)} - \widehat{X}_s|^2 \lesssim \delta$. Substituting this back:
\begin{equation}
    \mathbb{E}\left| \int_0^t D_s ds \right|^2 \lesssim \delta + \int_0^t \mathbb{E}|e_s|^2 ds.
\end{equation}

\paragraph{Completing the Bound.} Combining the terms leads to:
\begin{equation}
    \mathbb{E}|e_t|^2 \lesssim \frac{1}{NK} + \frac{1}{N^2} + \frac{1}{K} + \int_0^t \mathbb{E}|e_s|^2 ds.
\end{equation}
Applying Grönwall's lemma:
\begin{equation}
    \sup_{t \in [0,1]} \mathbb{E}|e_t|^2 \lesssim \frac{1}{NK} + \frac{1}{N^2} + \frac{1}{K}.
\end{equation}
Taking square root and only keeping the dominant terms yields the Wasserstein-2 bound:
\begin{equation}
    W_2(\hat{p}_1, p_{\mathrm{reward}}) \leq \sqrt{\mathbb{E}|e_1|^2} \lesssim \frac{1}{\sqrt{K}} + \frac{1}{N}.
\end{equation}

\paragraph{KL Divergence Bound.}
We employ the data-processing inequality to bound the divergence between the sampling distribution and the target by the divergence between their path measures: $\mathrm{KL}(\hat{p}_1 \| p_{\mathrm{reward}}) \leq \mathrm{KL}(\mathbb{P}^{\widehat{X}} \| \mathbb{P}^{X^\star})$. The process $\widehat{X}$ follows $d\widehat{X}_t = \hat{b}^{(N)}_{\eta(t)}(\widehat{X}_{\eta(t)}) dt + \sigma_t dB_t$, while the optimal target process $X^\star$ follows $dX^\star_t = b^\star_t(X^\star_t) dt + \sigma_t dB_t$. Let $\mathcal{G}_t$ denote the filtration generated by the Brownian motion up to time $t$ and the sequence of independent random samples used to construct the drift estimators at steps $t_k \le t$. Under the boundedness assumptions, Novikov's condition holds and Girsanov's theorem applies, so the KL divergence is given by:
\begin{equation}
    \mathrm{KL}(\mathbb{P}^{\widehat{X}} \| \mathbb{P}^{X^\star}) = \frac{1}{2} \int_0^1 \mathbb{E} \left[ \sigma_t^{-2} \left| \hat{b}^{(N)}_{\eta(t)}(\widehat{X}_{\eta(t)}) - b^\star_t(\widehat{X}_t) \right|^2 \right] dt.
\end{equation}
Assuming $\sigma_t \geq \sigma_{\min} > 0$, we can bound the integrand by the Mean Squared Error (MSE) of the drift. We decompose the error at time $t$ into the same three components $\xi_t$, $\Delta_t$, and $D_t$ used in the Wasserstein analysis and obtain:
\begin{equation}
    \mathrm{KL}(\mathbb{P}^{\widehat{X}} \| \mathbb{P}^{X^\star}) \lesssim \int_0^1 \left( \mathbb{E}|\xi_t|^2 + \mathbb{E}|\Delta_t|^2 + \mathbb{E}|D_t|^2 \right) dt.
\end{equation}
We now bound the integrated MSE of each term. Crucially, notice that we must bound the integral of the expectations, which differs from the expectation of the squared integral in the Wasserstein analysis \eqref{eq:wass_bound_triple}.

\paragraph{Variance Term ($\mathbb{E}|\xi_t|^2$).}
This term represents the variance of the gradient estimator. By Proposition~\ref{prop:drift_moments}, $\mathbb{E}|\xi_t|^2 \lesssim N^{-1}$. Thus:
\begin{equation}
    \int_0^1 \mathbb{E}|\xi_t|^2 dt \lesssim \frac{1}{N}.
\end{equation}

\paragraph{Bias Squared Term ($\mathbb{E}|\Delta_t|^2$).}
By Proposition~\ref{prop:drift_moments}, the bias satisfies $\|\Delta_t\| \lesssim N^{-1}$. Consequently, the squared bias scales quadratically:
\begin{equation}
    \int_0^1 \mathbb{E}|\Delta_t|^2 dt \lesssim \frac{1}{N^2}.
\end{equation}

\paragraph{Discretisation Term ($\mathbb{E}|D_t|^2$).}
Recall that the discretisation error is defined as $D_t = b^\star_{\eta(t)}(\widehat{X}_{\eta(t)}) - b^\star_t(\widehat{X}_t)$. Using the $L$-Lipschitz spatial condition and $1/2$-Hölder time condition on $b^\star$:
\begin{equation}
    |D_t| \leq |b^\star_{\eta(t)}(\widehat{X}_{\eta(t)}) - b^\star_t(\widehat{X}_{\eta(t)})| + |b^\star_t(\widehat{X}_{\eta(t)}) - b^\star_t(\widehat{X}_t)| \lesssim \sqrt{|t-\eta(t)|} + |\widehat{X}_{\eta(t)} - \widehat{X}_t|.
\end{equation}
Squaring and taking expectations, we apply the standard Euler-Maruyama estimate $\mathbb{E}|\widehat{X}_{\eta(t)} - \widehat{X}_t|^2 \lesssim \delta$. Since $|t - \eta(t)| \leq \delta$, we obtain the pointwise bound $\mathbb{E}|D_t|^2 \lesssim \delta$. Integrating over $[0,1]$ yields:
\begin{equation}
    \int_0^1 \mathbb{E}|D_t|^2 dt \lesssim \delta = \frac{1}{K}.
\end{equation}

\paragraph{Completing the Bound.}
Summing the contributions and keeping dominant terms yields:
\begin{equation}
    \mathrm{KL}(\hat{p}_1 \| p_{\mathrm{reward}}) \lesssim \frac{1}{N} + \frac{1}{K}.
\end{equation}
\end{proof}

\section{GLASS Flows~\citep{holderrieth2025glassflowstransitionsampling}}
\label{app:glass_flows}
In this section, we provide additional background on GLASS Flows~\citep{holderrieth2025glassflowstransitionsampling}. GLASS Flows provides the methodology for sampling from the conditional posterior $p_{1|t}(\cdot | x)$ of a Gaussian probability path using an ODE. The core insight is that when the prior $p_0$ is Gaussian, the drift, $\overline{b}_s$, targeting this conditional posterior can be derived by a re-parametrisation of the denoiser $D_t$, and hence, the drift $b_t(x)$:

\begin{equation}
    \bar b_s(\bar x_s; t, x) 
    = w_1(s) \bar x_s + w_2(s) D_{t^*}(S(\bar x_s, x)) + w_3(s) x
\end{equation}

where $w_1(s)=\frac{\dot \alpha_s}{\alpha_s}$, $w_2(s)=\dot \beta_s - \beta_s \, w_1(s)$, $w_3'(s)=-\bar\gamma \, w_1(s)$ are scalar coefficients, $t^*$ is a re-parametrised time, $S$ is a linear sufficient statistic, and $\bar\gamma$ = $\rho\alpha_s\alpha_t$ with $-1 \leq \rho \leq 1$ denoting a free parameter (the correlation between $x_s$ and $x$ in their joint distribution conditioned on data). We can rewrite this re-parametrisation in terms of the unconditional drift $b_t(x)$, instead of the denoiser $D_t(x)$, as the two are related as follows:
\begin{equation}
    D_{t^*}(S(\bar x_s, x)) = \frac{b_{t^*}(S(\bar x_s, x))-\frac{\dot\alpha_s}{\alpha_s}\,S(\bar x_s, x)}{\dot\beta_s-\frac{\dot\alpha_s}{\alpha_s}\beta_s}
\end{equation}

We further map the notations and variables in \citet{holderrieth2025glassflowstransitionsampling} to the notation used in our presentation of MFMs in Table~\ref{tab:glass_notation}.

\begin{table}[h]
\centering
\begin{tabular}{lcc}
\toprule
\textbf{Concept} & \textbf{GLASS}~\citep{holderrieth2025glassflowstransitionsampling} & \textbf{MFM (Ours)} \\
\midrule
Clean Data & $z \sim p_{\text{data}}$ & $x_1 \sim p_1$ \\
Noise & $\epsilon \sim \mathcal{N}(0, I)$ & $x_0 \sim p_0$ \\
Interpolant & $x_t = \alpha_t z + \sigma_t \epsilon$ & $I_t = \beta_t x_1 + \alpha_t x_0$ \\
Unconditional Drift & $u_t(x)$ & $b_t(x)$ \\
Denoiser & $D_t(x) = \mathbb{E}[z|x_t=x]$ & $D_t(x) = \mathbb{E}[x_1|I_t=x]$ \\
\bottomrule
\end{tabular}
\caption{Translation of notation between GLASS Flows and MFM.}
\label{tab:glass_notation}
\end{table}

\section{Algorithms}
\label{sec:algos}

\begin{algorithm}[H]
  \caption{MFM Steering (SDE)}
  \label{alg:mfm_steering_sde}
  \begin{algorithmic}
    \STATE {\bfseries Input:} Reward $r(x)$; MFM $X$; times $0=t_0<\cdots<t_K=1$; MC batch size $N$
    \STATE Initialize $X_{0} \sim p_{0}$
    \FOR{$k = 0$ {\bfseries to} $K-1$}
        \STATE $dt \gets t_{k+1} - t_k$
        \STATE $\sigma_{t_k}^2 \gets 2\!\left(\tfrac{\dot \beta_{t_k}}{\beta_{t_k}}\alpha_{t_k}^2 - \dot \alpha_{t_k}\alpha_{t_k}\right)$
        \STATE \# Drift extraction
        \STATE $b_{t_k}(X_{t_k}) \gets v_{{t_k},{t_k}}(X_{t_k}; 0, \vec 0)$
        \STATE \# Score estimation by reparametrization
        \STATE Extract $\nabla \log p_{t_k}(X_{t_k})$
        \STATE \# Monte Carlo steering drift estimation
        \STATE Sample iid $\epsilon^{(n)} \sim p_0$ for $n = 1,\dots,N$
        \STATE $\hat x_1^{(n)} \gets X_{0,1}(\epsilon^{(n)}, t_k, X_{t_k})$
        \STATE $\widehat{V_{t_k}(X_{t_k})} \gets \log\!\left(\frac{1}{N}\sum_{n=1}^{N} e^{r(\hat x_1^{(n)})}\right)$
        \STATE Compute $\widehat{\nabla V_{t_k}(X_{t_k})}$ via~\eqref{eq:self_norm_estimator} or~\eqref{eq:gradient_estimator}
        \STATE Sample $Z \sim \mathcal{N}(0, I)$
        \STATE $X_{t_{k+1}} \gets X_{t_k}
          + dt\!\left(b_{t_k}(X_{t_k})
          + \tfrac{\sigma_{t_k}^2}{2}\nabla \log p_{t_k}(X_{t_k})
          + \sigma_{t_k}^2 \widehat{\nabla V_{t_k}(X_{t_k})}\right)
          + \sqrt{dt}\,\sigma_{t_k} Z$
    \ENDFOR
    \STATE {\bfseries Output:} Steered sample $X_{1}$
  \end{algorithmic}
\end{algorithm}

\begin{figure}[H]
\centering
\begin{minipage}{0.45\linewidth}
\begin{algorithm}[H]
  \caption{One-Step MFM Sampler}
  \label{alg:one_step_mfm}
  \begin{algorithmic}
    \STATE {\bfseries Input:} MFM $X$
    \STATE Sample $\epsilon \sim p_0$
    \STATE $\hat x_1 \gets X_{0,1}(\epsilon; 0, \vec 0)$
    \STATE {\bfseries Output:} Sample $\hat x_1$
  \end{algorithmic}
\end{algorithm}
\end{minipage}
\hfill
\begin{minipage}{0.45\linewidth}
\begin{algorithm}[H]
  \caption{$K$-Step $\gamma=1$ Sampler}
  \label{alg:gamma_sampler}
  \begin{algorithmic}
    \STATE {\bfseries Input:} MFM $X$; times $0=t_0<\cdots<t_K=1$
    \STATE Sample $x_0 \sim p_0$
    \FOR{$k = 0$ {\bfseries to} $K-1$}
      \STATE Sample $x_0^{(k)} \sim p_0$
      \STATE $\hat x_1^{(k)} \gets X_{t_k,1}(x_{t_k}; 0, \vec 0)$
      \STATE $x_{t_{k+1}} \gets \alpha_{t_{k+1}} x_0^{(k)} + \beta_{t_{k+1}} \hat x_1^{(k)}$
    \ENDFOR
    \STATE {\bfseries Output:} Sample $x_1$
  \end{algorithmic}
\end{algorithm}
\end{minipage}
\end{figure}

\section{Experiment Details \& Additional Results}
\label{sec:extended_results}
In this section, we present further discussion of our experimental settings, any hyperparameters and additional results to supplement the main body. Across all the inference-time steering experiments, we steer the dynamics of the unconditional drift extracted from an MFM itself. Further, all baselines, namely Diffusion Posterior Sampling (DPS;~\citep{chung2024diffusionposteriorsamplinggeneral}), Sequential Monte Carlo (SMC-TDS; \citet{wu2024practicalasymptoticallyexactconditional}) and Best-of-N, are implemented using this extracted drift. 

\subsection{Gaussian Mixture Model (GMM)}
To analytically evaluate our proposed methods, we first consider a synthetic 2D problem. 

\textbf{Model.} We consider a $2$D GMM with $3$ components as the prior distribution $p_1$, i.e. $p_1(x) = \sum_{i=1}^3 \frac{1}{3} \mathcal{N}(x; \mu_i, \Sigma_i)$, where $\mu_1=(-3,-3); \mu_2=(0,0); \mu_3=(3,3); \Sigma_1=\Sigma_2=\Sigma_3=0.5I_{2\times2}$. We train a MFM using a small MLP with the semigroup MFM loss (see Table~\ref{tab:objectives}) to sample this prior distribution.

\textbf{Reward.}
We define the reward via the likelihood of a linear inverse problem with noisy measurements, $y = \boldsymbol{a}^\top \boldsymbol{x} + \epsilon$ where $\boldsymbol{a} = [1.2, -0.8]^\top$ and $\epsilon \sim \mathcal{N}(0, \sigma^2)$ with $\sigma = 0.2$. We condition on the measurement $y_{\mathrm{obs}} = -1.0$, and as such, we target the posterior distribution $p(x | y_{\mathrm{obs}}=-1)$ while steering.

\paragraph{Sampling and Evaluation Details.}
For all methods, ODEs and SDEs are solved using Euler and Euler--Maruyama schemes respectively, each with $N=1000$ discretisation steps. For SMC, we use $K=4096$ particles and report the mean $\mathcal{S}$-$\mathcal{W}_2$ over 20 random seeds. For evaluation, we (i) generate 4096 posterior samples and (ii) compute sample-to-sample metrics against samples from the analytic posterior using two metrics: \textbf{(A)} sliced Wasserstein-2 (SW$_2$) and \textbf{(B)} Maximum Mean Discrepancy (MMD).

\textbf{(A)} The sliced SW$_2$ distance between distributions $P$ and $Q$ is defined as
\[
\mathcal{S}\text{-}\mathcal{W}_2^2(P, Q)
= \mathbb{E}_{\theta \sim \mathrm{Unif}(\mathbb{S}^{d-1})}
\big[ W_2^2(\theta^\top X, \theta^\top Y) \big],
\]
where $X \sim P$ and $Y \sim Q$. The expectation over $\theta$ is approximated using random one-dimensional projections.

\textbf{(B)} The squared MMD between distributions $P$ and $Q$ is defined as
\[
\mathrm{MMD}^2(P, Q)
= \mathbb{E}\!\left[k(X, X')\right]
+ \mathbb{E}\!\left[k(Y, Y')\right]
- 2\,\mathbb{E}\!\left[k(X, Y)\right],
\]
where $X, X' \sim P$ and $Y, Y' \sim Q$. We use a standard multi-scale RBF kernel and compute an unbiased empirical estimate.

\begin{figure}[H]
\centering
\begin{minipage}[t]{0.43\linewidth}
  \vspace{0pt}
  \centering
  \includegraphics[height=4.5cm]{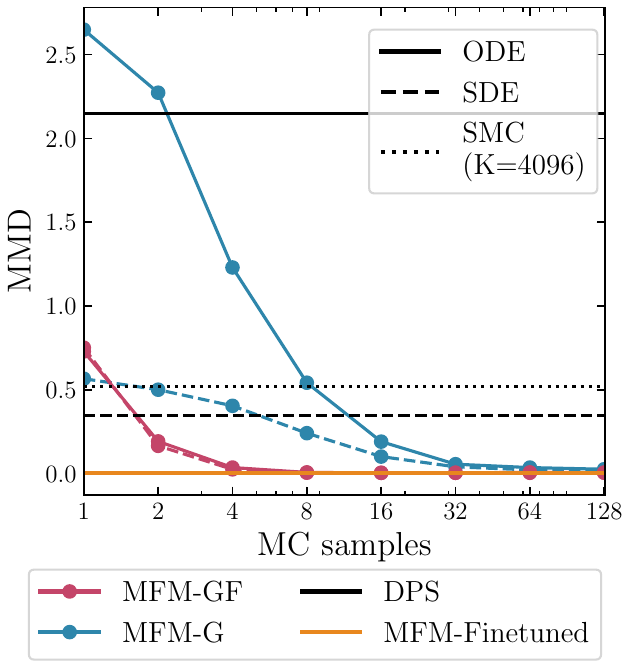}
  \caption{Comparison for GMM inverse problem: MMD between true posterior samples and steered samples.}
  \label{fig:mmd_gmm}
\end{minipage}\hfill
\begin{minipage}[t]{0.43\linewidth}
  \vspace{0pt}
  \centering
  \includegraphics[height=4.5cm]{gmm/gmm_inverse_results_2_final.pdf}
  \caption{Prior, analytic posterior and density maps of steered samples (ODE).}
  \label{fig:gmm_heatmap}
\end{minipage}
\end{figure}

\subsection{MNIST}
\label{app:extended_mnist}
\textbf{Model.} For MNIST, we train a 9M parameter UNet-based MFM using the semigroup MFM loss (see Table~\ref{tab:objectives}). 

\textbf{Reward.} As noted in the main body, we consider a conditional sampling task with a highly-multimodal target. Specifically, we define a reward function as a weighted mixture of class probabilities obtained from a classifier: $\exp(r(x)) = p(c_{\text{mix}}|x) = \sum_{i=1}^{C} w_i \, p_{\theta}(y_i | x)$ where \(p_{\theta}(y_i | x)\) denotes the classifier-predicted probability that image \(x\) belongs to class \(i\), and the mixture weights \(w_i\) satisfy \(\sum_{i=1}^{C} w_i = 1\).  Note that by Bayes' rule,\footnote{
By Bayes' rule,
\(
p(x | c_{\text{mix}}) 
= \sum_{i=1}^{C} p(x, y_i | c_{\text{mix}}) 
= \sum_{i=1}^{C} p(x | y_i, c_{\text{mix}}) \, p(y_i | c_{\text{mix}}).
\)
If \(c_{\text{mix}}\) simply indexes the mixture, we identify 
\(p(y_i | c_{\text{mix}}) = w_i\) and 
\(p(x | y_i, c_{\text{mix}}) = p(x | y_i)\),
yielding 
\(p(x | c_{\text{mix}}) = \sum_{i=1}^{C} w_i \, p(x | y_i)\).
} 
the corresponding posterior distribution takes the form $p(x | c_{\text{mix}}) = \sum_{i=1}^{C} w_i \, p(x | y_i)$
which is a weighted mixture of class-conditional posteriors with weights \(w_i\). By appropriately setting $\mathbf{w}$, we can define a challenging, highly multi-modal target distribution; we take $\mathbf{w} =[0,1,0,1,0,2,0,2,0,4]$. For defining the reward model, we use a simple CNN-based classifier. 
\paragraph{Sampling and Evaluation Details.} As with GMMs, we use Euler and Euler-Maruyama samplers, but with $N=500$ discretisation steps for all methods presented. The $\mathcal{L}_2$ presented in Figure~\ref{fig:l1_comparison} is computed from the empirical class ratios observed in 4096 steered samples and the target weights $\mathbf{w}$. For SMC, we use $K=64$ particles and report the mean $\mathcal{L}_2$ over 20 random seeds. Below, we present a plot of the empirical ratios of steered samplers, for different drift estimators, alongside the ground-truth ratios, $\mathbf{w}$. 

\begin{figure}[H]
    \centering
    \includegraphics[height=4.5cm]{graphs/mnist_ratios.pdf}
    \caption{Empirical class ratios of steered samples, alongside the target ratio, $\mathbf{w}$.}
    \label{fig:target_ratios}
\end{figure}

\subsection{ImageNet $(\mathbf{256\times256})$}
For ImageNet $(256\times256)$, we train MFMs by adapting DiT architectures, at B/2 and XL/2 scale, to allow for conditioning on $(t, x)$ (see \ref{app:arch} for further details). These adaptations result in a small relative increase in the number of parameters, 131 $\to$ 134M and 675 $\to$ 684M respectively.

\textbf{MFM B/2.} We train a B/2 model using data. We initialise the model with the weights of a well-trained flow model, SiT B/2. As the unconditional instantaneous velocity is well learned at the start of training, the focus is on 1) learning to condition on $(t, x)$, and 2) self-distillation into a flow-map.

\textbf{MFM-XL/2.} We train XL/2 models using both data and distillation. For both variants, we initialise at a well-trained flow-map checkpoint, DMF XL/2+ \citep{lee2025decoupled}, for faster convergence. For the distillation variants, we regress onto the conditional drift extracted from a copy of DMF XL/2+ using GLASS Flows (see Equation~\ref{eq:diag_GLASS}).

\textbf{Training Objective.} When training from data, we leverage the Mean-Flow MFM objective, and for distillation, we leverage the Eulerian (Teacher) objective (see Table~\ref{tab:objectives}). This design choice was made to align with the standard flow-map objective used for training DMF XL/2+, the model used for initialisation. We leave a comprehensive benchmarking of the design space to future work. 

\textbf{FID.} In Table~\ref{tab:imgnet_results}, we present the few-step FIDs of the most performant MFM-XL/2 model, alongside deterministic flow-map baselines.

\subsubsection{Ablations.} In Table~\ref{tab:fid_results}, we present few-step FIDs of several configurations, including different model scales and objectives. In general, we took $(p_{diag}=1.0, p_{cons}=1.0)$ as the adaptive loss coefficients (see Equation~\ref{eq:adaptive_loss}) following Mean Flow. However, we found that $(p_{diag}=0.5, p_{cons}=1.0)$ was a marginally more effective configuration for 2 and 4-step generation in our XL/2 experiments. The best configuration, denoted as MFM-XL/2 in the main body and Table~\ref{tab:imgnet_results}, is bolded.
\begin{table}[H]
\centering
\caption{FID scores for increasing NFE. Lower is better. Numbers in brackets indicate the $p$ parameter of adaptive loss for diagonal and consistency terms of the loss function, respectively. Note that the $c$ parameter is fixed to be $0.01$. See Table~\ref{tab:model_config} for the complete set of base hyperparameters.}
\label{tab:fid_results}
\begin{tabular}{lcccc}
\toprule
\textbf{Model} & \multicolumn{4}{c}{\textbf{NFE}} \\
\cmidrule(lr){2-5}
 & 1 & 2 & 4 & 8 \\
\midrule

\multicolumn{5}{l}{\textbf{B/2}} \\
\midrule
Data $(1.0, 1.0)$ 
& 8.71 & 6.84 & 6.57 & 6.55 \\

\midrule
\multicolumn{5}{l}{\textbf{XL/2}} \\
\midrule
Data $(1.0, 1.0)$ 
& 4.18 & 4.18 & 4.14 & 4.41 \\

Distill $(1.0, 1.0)$ 
& 3.65 & 2.70 & 2.17 & 2.18 \\

\textbf{Distill} $\mathbf{(0.5, 1.0)}$ 
& $\mathbf{3.72}$ & $\mathbf{2.40}$ & $\mathbf{1.97}$ & $\mathbf{2.45}$ \\

\bottomrule
\end{tabular}
\end{table}

\subsubsection{Inference-Time Steering}
\label{app:extra_inference_time}
\paragraph{Sampling and Evaluation Details.} We consider the ODE and solve using an Euler scheme with $N=250$ discretisation steps. For MFM-G, we encountered large-magnitude steering gradients, and as such, renormalised the steering gradient (see Appendix~\ref{app:imagenet_steered_drift_norm} for further details), which we found to be the best performing strategy. For the Best-of-$N$ (BoN) baseline, we evaluate performance as a function of $N_{\text{BoN}} \in [1,1000]$. To obtain a smooth curve, we first generate a pool of $128{,}000$ samples. For each $N_{\text{BoN}}$, we partition the samples into 128 disjoint groups of size $N_{\text{BoN}}$, select the highest-scoring sample within each group according to the reward model, and report the average score across the 128 selected samples.

We present additional results on inference-time steering. For completeness, we first present the compute-normalised plot from the main body (Figure~\ref{fig:imagenet_reward_nfe}) with MFM-GF retained (Figure~\ref{fig:imagenet_reward_nfe_with_gf}), noting that it performs far worse than both MFM-G and MFM-Search.

\begin{figure}[H]
\centering
    \includegraphics[width=0.9\linewidth]{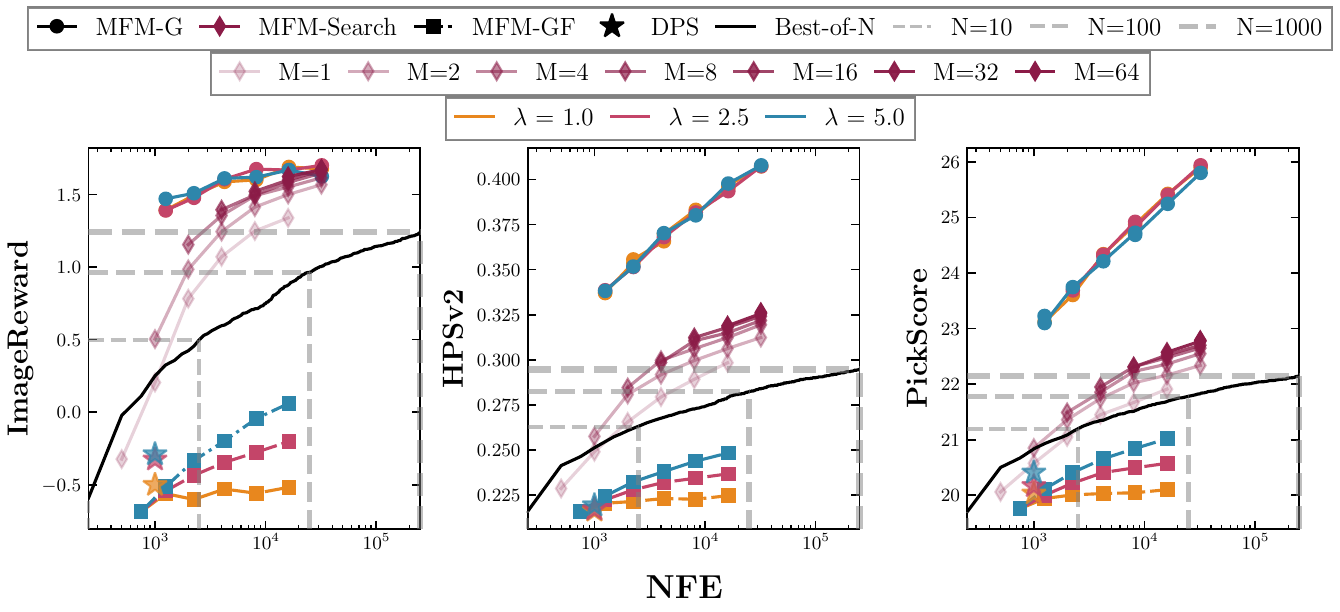}
    \caption{Compute-normalised performance comparison of inference-time steering schemes (with MFM-GF).}
    \label{fig:imagenet_reward_nfe_with_gf}
\end{figure}
    
Next, we present metrics of the steered generations using reward models distinct from the one employed during steering. This serves as an important robustness check, as improvements in the steering reward should not arise from reward hacking, which would degrade performance on similar related metrics. The reward model used for steering is shown in the first (bolded) subplot, while evaluations under alternative reward models are displayed in the subsequent subplots to the right.

\begin{figure}[H]
    \centering
    \includegraphics[width=0.70\linewidth]{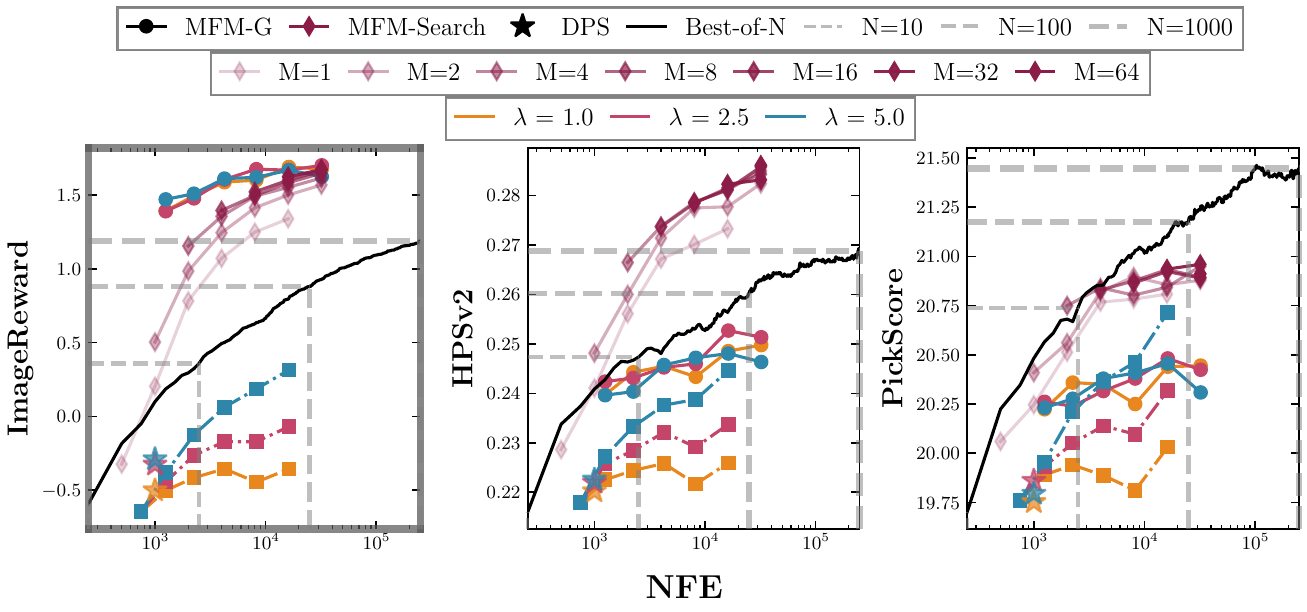}
    \caption{Metrics for steering using ImageReward}
    \label{fig:imagereward_cross_metric_nfe}
\end{figure}

\begin{figure}[H]
    \centering
    \includegraphics[width=0.70\linewidth]{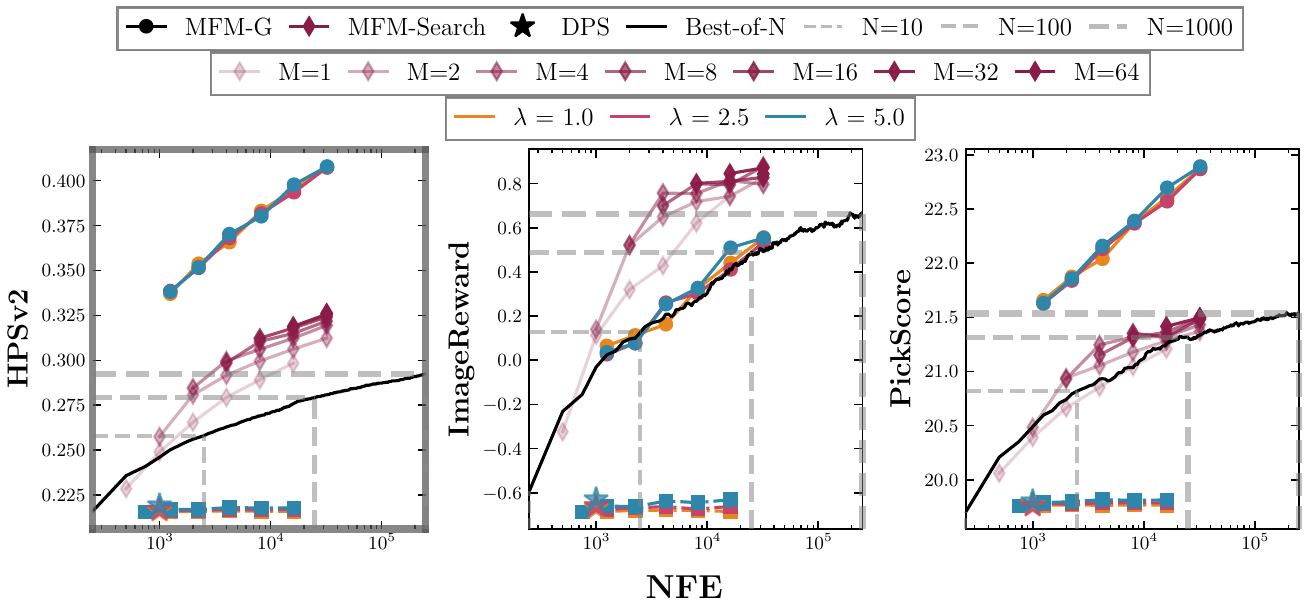}
    \caption{Metrics for steering using HPSv2}
    \label{fig:hpsv2_cross_metric_nfe}
\end{figure}

\begin{figure}[H]
    \centering
    \includegraphics[width=0.70\linewidth]{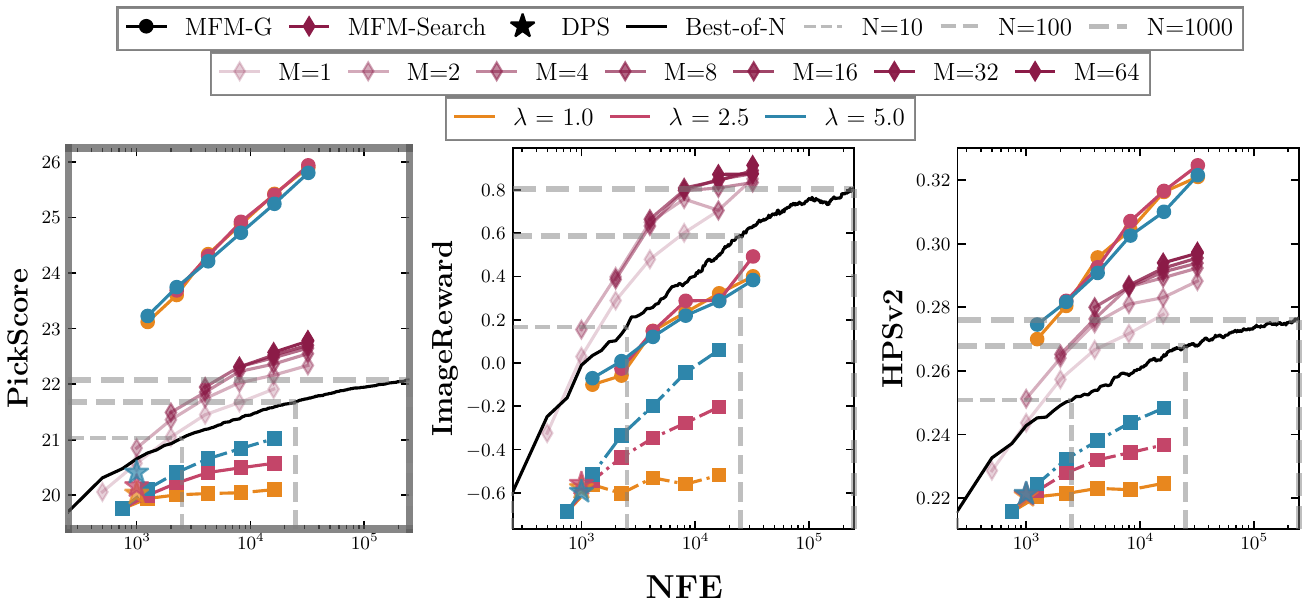}
    \caption{Metrics for steering using PickScore}
    \label{fig:pickscore_cross_metric_nfe}
\end{figure}

\clearpage

Below, we plot the performance of MFM-GF and MFM-G against the number of MC samples in the drift estimator. 

\begin{figure}[H]
    \centering
    \includegraphics[width=0.8\linewidth]{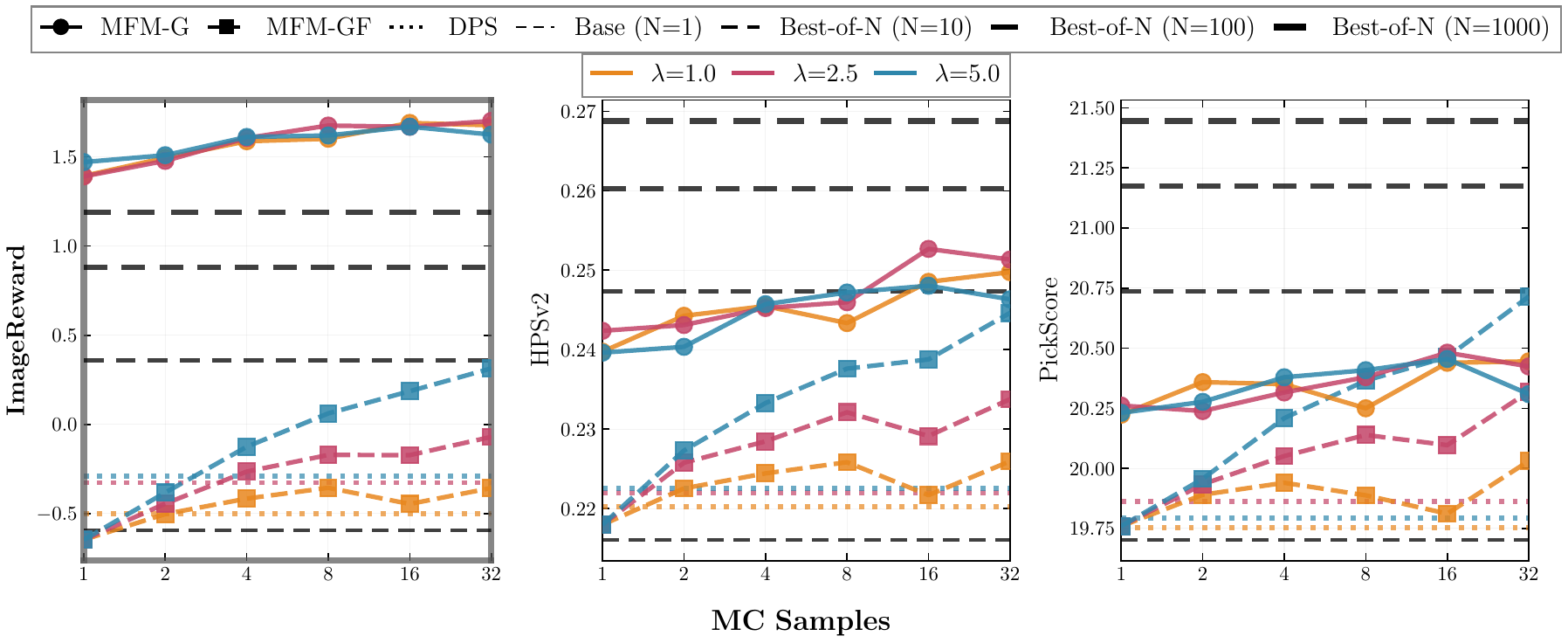}
    \caption{Metrics for steering using ImageReward}
    \label{fig:imagereward_cross_metric}
\end{figure}

\begin{figure}[H]
    \centering
    \includegraphics[width=0.8\linewidth]{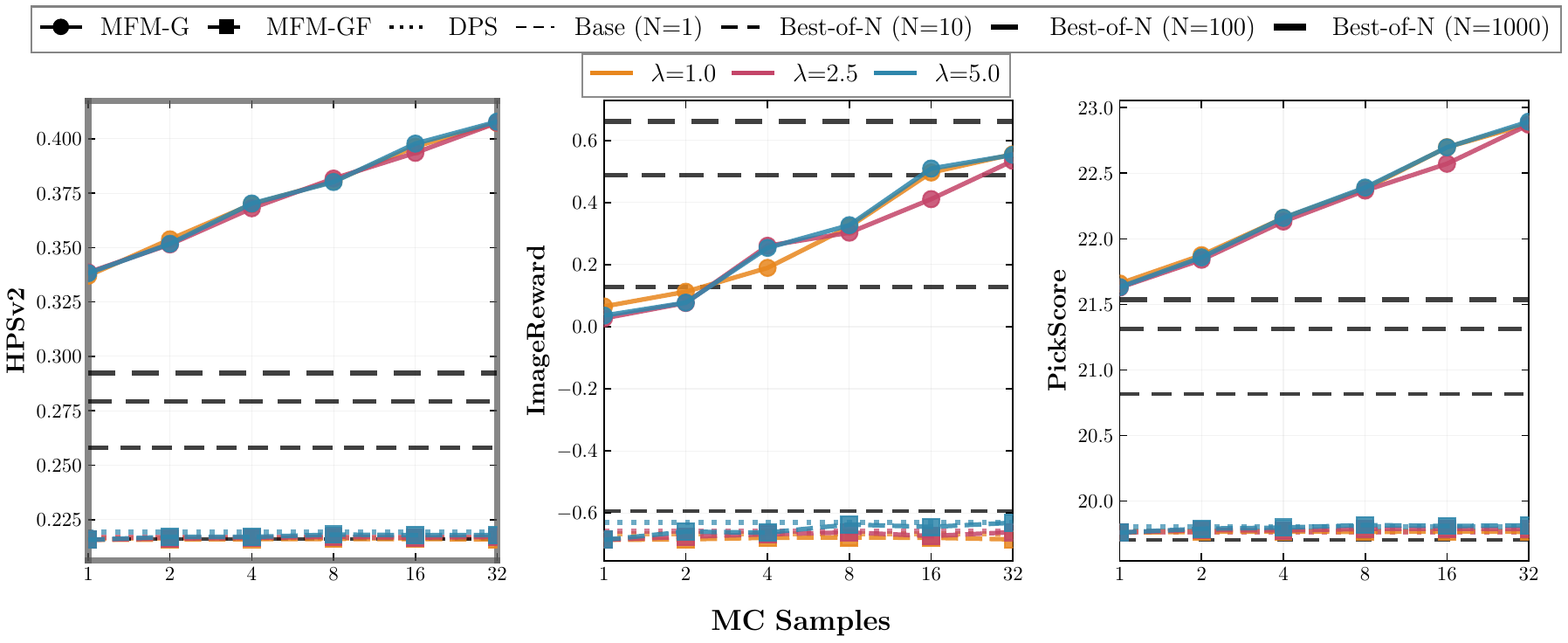}
    \caption{Metrics for steering using HPSv2}
    \label{fig:hpsv2_cross_metric}
\end{figure}

\begin{figure}[H]
    \centering
    \includegraphics[width=0.8\linewidth]{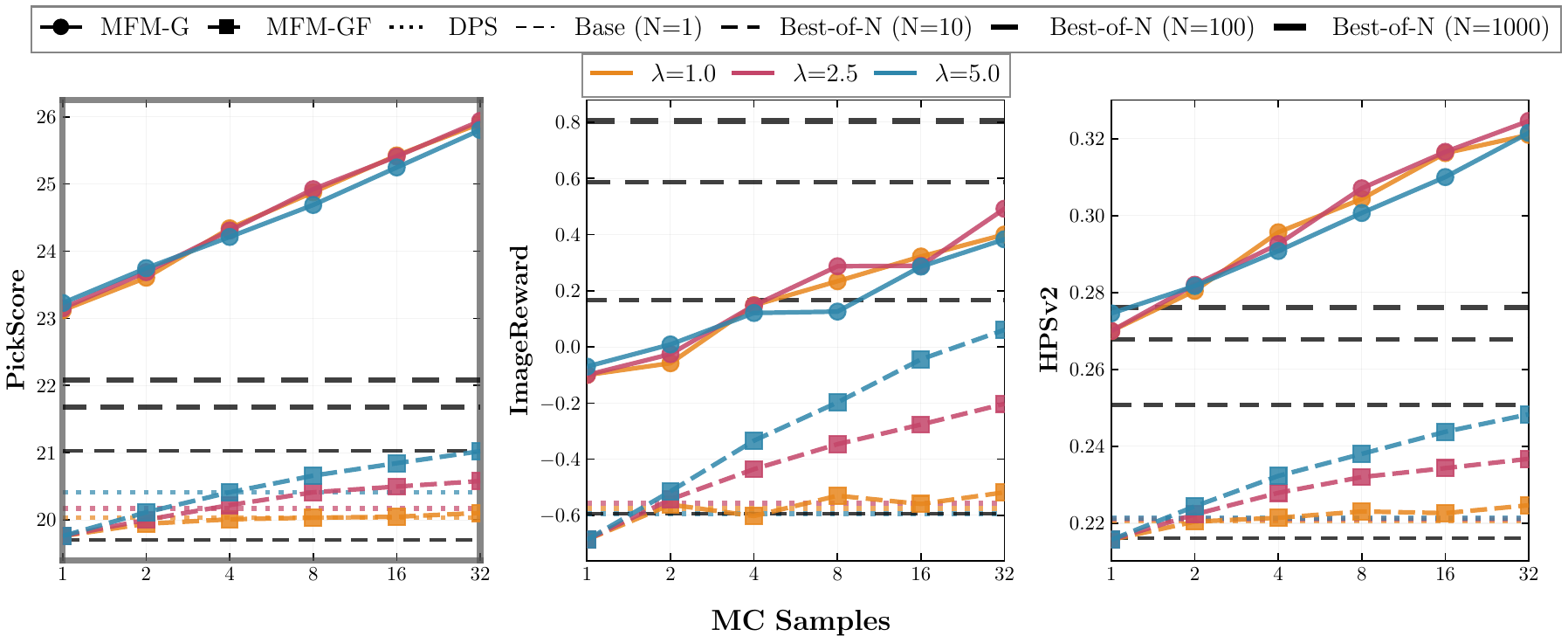}
    \caption{Metrics for steering using PickScore}
    \label{fig:pickscore_cross_metric}
\end{figure}

\subsubsection{Steered Generations}
Below, we present a randomly subsampled set of generations from the Base MFM and the MFM-G steered samples (HPSv2 reward model), for $\{1, 2, 4,  8, 16, 32\}$ samples in the MC estimate. Note that in each column, each row of generations shares the same random seed.

\begin{figure}[H]
    \centering
    \begin{minipage}[t]{0.49\textwidth}
        \centering
        \includegraphics[width=\linewidth,height=0.85\textheight,keepaspectratio]{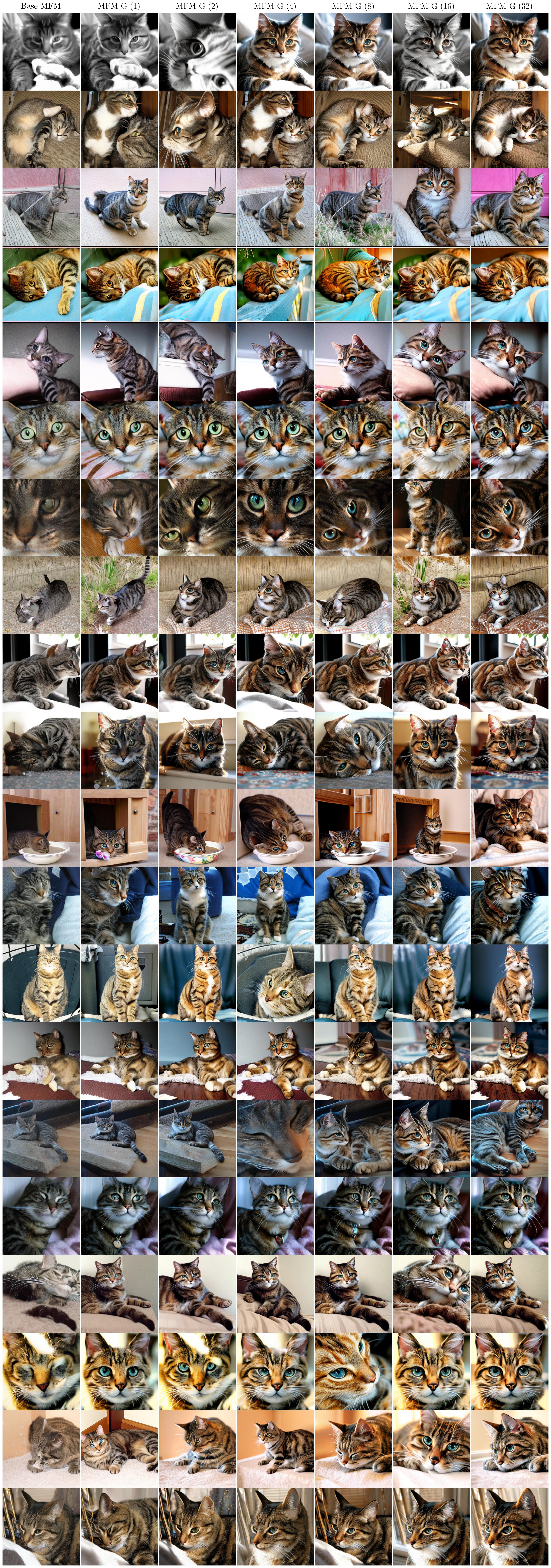}
    \end{minipage}\hspace{-0.06\textwidth}
    \begin{minipage}[t]{0.49\textwidth}
        \centering
        \includegraphics[width=\linewidth,height=0.85\textheight,keepaspectratio]{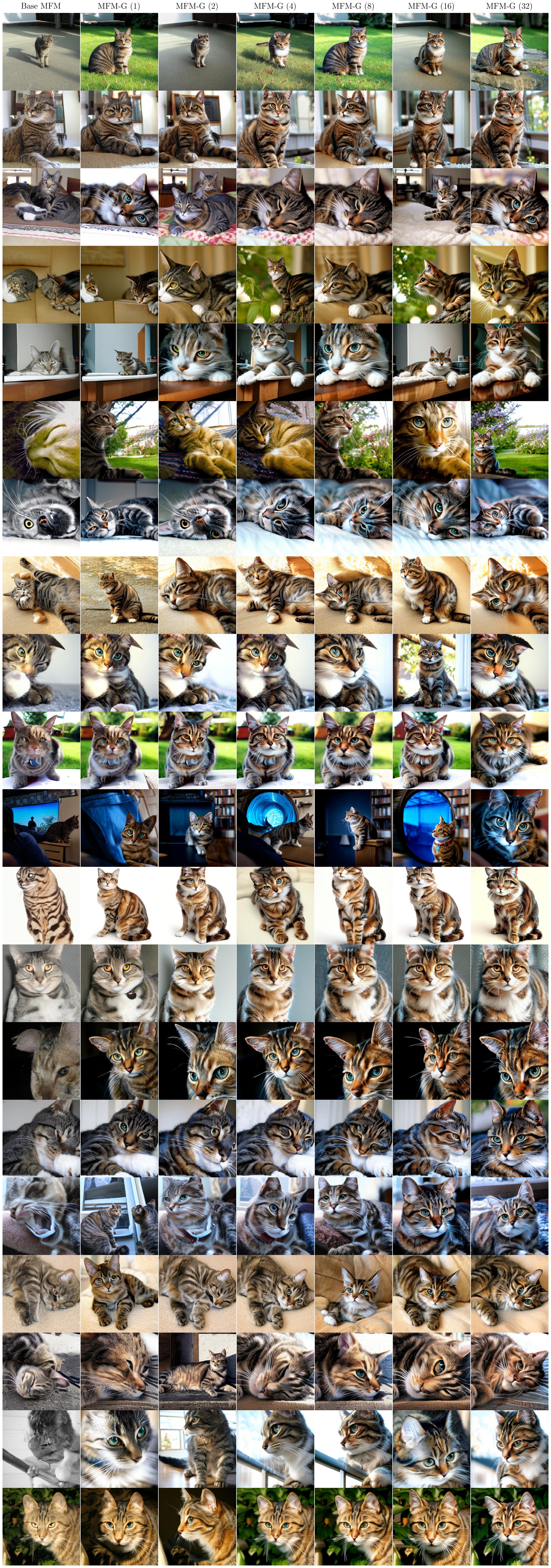}
    \end{minipage}
    \caption{Base MFM and MFM-G steered generations for 30 random seeds (HPSv2).}
    \label{fig:extended_hpsv2_generations}
\end{figure}

\subsubsection{Fine-tuned Generations}
Below, we present a randomly subsampled set of generations from the base MFM and MFMs fine-tuned using $\lambda=\{10, 25,50\}$.

\begin{figure}[H]
    \centering
    \begin{minipage}[t]{0.5\textwidth}
        \centering
        \includegraphics[width=\linewidth,height=0.85\textheight,keepaspectratio]{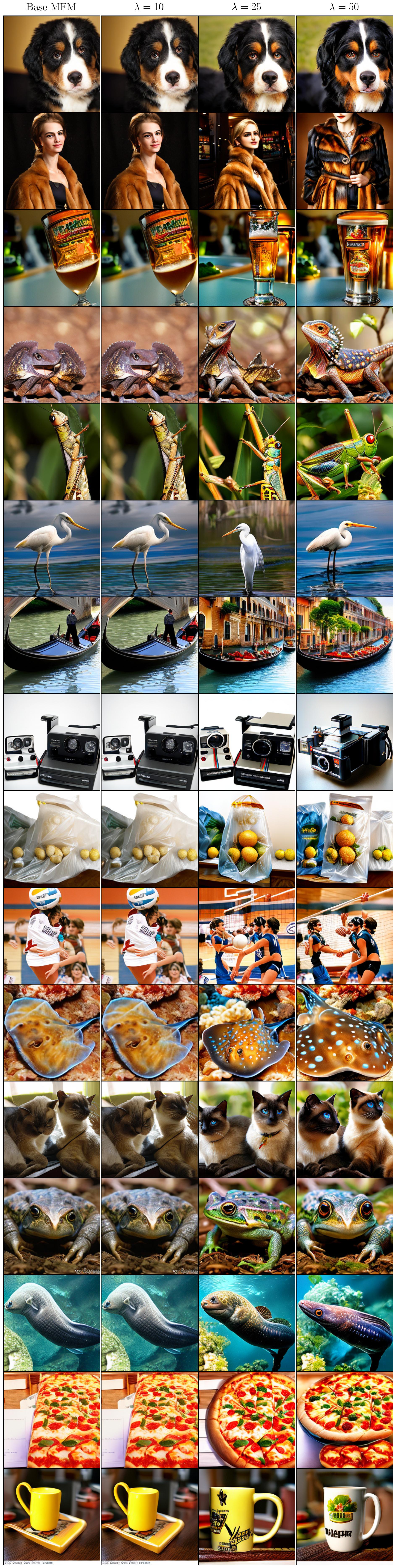}
    \end{minipage}\hspace{-0.14\textwidth}
    \begin{minipage}[t]{0.5\textwidth}
        \centering
        \includegraphics[width=\linewidth,height=0.85\textheight,keepaspectratio]{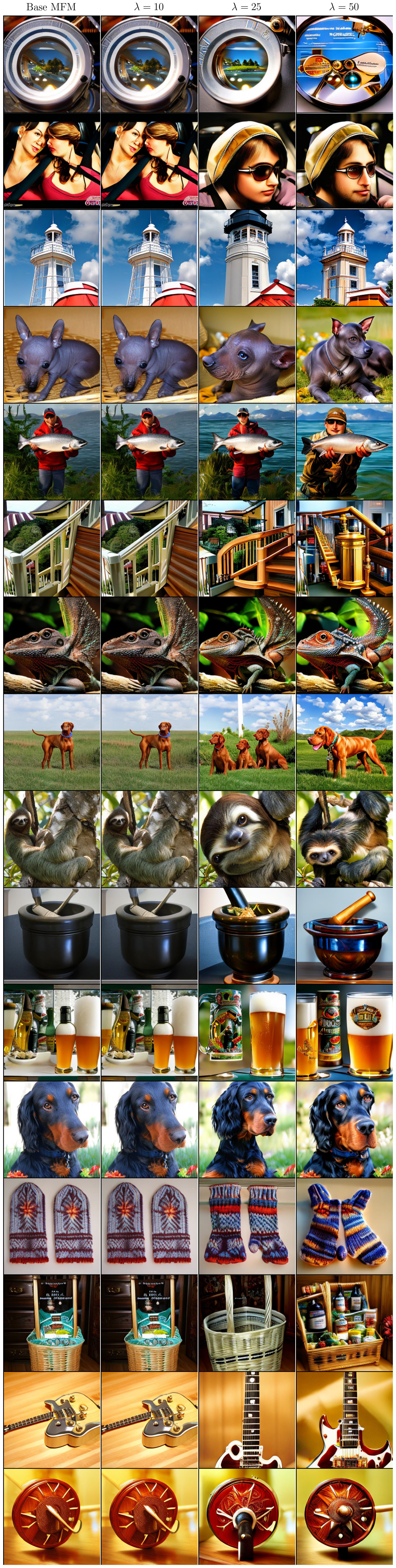}
    \end{minipage}
    \caption{Base MFM and fine-tuned MFM generations for 32 random seeds (HPSv2).}
    \label{fig:extended_finetuned_generations}
\end{figure}

\subsubsection{Number of Function Evaluations (NFEs)}
\label{app:imagenet_nfe_calculation}
\paragraph{MFM-GF.}
For a \(K\)-step discretisation of the continuous-time ODE, with $N$ posterior samples for drift estimation at each step, MFM-GF requires
\[
\mathrm{NFE} = K + 2NK.
\]
This consists of \(K\) evaluations of the base drift, and at each step \(N\) one-step posterior samples and \(N\) reward evaluations to estimate the value function. 

\paragraph{MFM-G.}
For a \(K\)-step discretisation of the continuous-time ODE, with $N$ posterior samples for drift estimation at each step, MFM-G requires
\[
\mathrm{NFE} = K + 4NK.
\]
This consists of \(K\) base drift evaluations and, at each step, \(N\) one-step posterior samples and \(N\) reward evaluations for value estimation. MFM-G additionally requires a backward pass through the \(2N\) network evaluations at each step, which we assume incurs a \(2\times\) multiplicative cost.

\paragraph{MFM-Search.}
For a \(K\)-step discretisation, with $M$ candidate solutions and $N$ posterior samples for drift estimation at each step, MFM-Search requires
\[
\mathrm{NFE} = 2MNK.
\]
This consists of $M*N$ one-step posterior samples and  $M*N$ reward evaluations (i.e. one per posterior sample) at each step.

\paragraph{DPS.}
For a \(K\)-step discretisation of the continuous-time ODE, DPS requires
\[
\mathrm{NFE} = 4NK.
\]
At each step, DPS requires a base drift evaluation, from which the Tweedie estimate can be recovered, and a reward function evaluation. As with MFM-G, we assume a \(2\times\) multiplicative cost for the backward pass used to compute the gradient.

\paragraph{Best-of-\(N\) baseline.}
Using \(N_{\mathrm{BoN}}\) samples, the Best-of-\(N\) baseline requires
\[
\mathrm{NFE} = K N_{\mathrm{BoN}} + N_{\mathrm{BoN}}.
\]
This corresponds to generating \(N_{\mathrm{BoN}}\) samples using \(K\) discretisation steps, followed by a final reward evaluation for each sample required for selecting the highest-reward sample.

\subsubsection{Normalisation of Steered Drift Estimator}
\label{app:imagenet_steered_drift_norm}
In ImageNet experiments, we encountered steering drifts much larger in magnitude than the unconditional drift when using the MFM-G drift estimator. In order to faithfully realise the steered dynamics in such settings, we would require a much finer time discretisation than is required for unconditional generation to avoid excessive discretisation error. For practical implementation, we considered two different solutions: 1) \textit{clipping} and 2) \textit{rescaling} the steering drift relative to the norm of the unconditional drift. Although this introduces a bias, it is only in the magnitude of the drift and not the direction. As it was both highly stable and effective, we used the \textit{rescaling} in all our ImageNet inference-time steering experiments, with $\lambda=1$:

\begin{equation}
b_t^*(x)
= b_t(x)
+ \lambda \, \|b_t(x)\|_2 \, \frac{\nabla V(x)}{\|\nabla V(x)\|_2}.
\end{equation}

\subsubsection{ImageNet Training Configuration}
We present below the base hyperparameters used in training the models presented in Table~\ref{tab:fid_results}.

\begin{table}[ht!]
    \centering\small
    \caption{{ Configurations for ImageNet experiments.}}
    \begin{tabular}{l cc}
    \toprule
      &  MFM-B/2 & MFM-XL/2 \\
    \midrule
    \multicolumn{3}{l}{{\textit{Model}}} \\
    \arrayrulecolor{black!30}\midrule
    Resolution   & 256$\times$256 & 256$\times$256 \\
    Params (M)   & 134  & 683  \\
    Hidden dim.  & 768  & 1152 \\
    Heads        & 12   & 16 \\
    Patch size   & 2$\times$2& 2$\times$2\\
    Sequence length & 256 & 256 \\
    Layers       & 12   & 28 \\
    Encoder depth  & 8  & 20 \\
    \arrayrulecolor{black}\midrule
    \multicolumn{3}{l}{{\textit{Optimisation}}} \\
    \arrayrulecolor{black!30}\midrule
    Optimiser      & \multicolumn{2}{c}{AdamW~\citep{kingma2017adam}} \\
    Batch size      & \multicolumn{2}{c}{256} \\
    Learning rate   & \multicolumn{2}{c}{1e-4} \\
    Adam $(\beta_1,\beta_2)$   & \multicolumn{2}{c}{(0.9, 0.95)} \\
    Adam $\epsilon$   & \multicolumn{2}{c}{1e-8} \\
    Adam weight decay   & \multicolumn{2}{c}{0.0} \\
    EMA decay rate     & \multicolumn{2}{c}{0.9999} \\
    \arrayrulecolor{black}\midrule
    \multicolumn{3}{l}{\textit{Flow model training}} \\
    \arrayrulecolor{black!30}\midrule
    Training iterations & 800K & 4M \\
    Epochs  & 160  & 800 \\
    Class dropout probability & 0.2 & 0.2 \\
    Time proposal $\mu_{\textrm{FM}}$  & 0.0 & - \\
    REPA alignment depth & - & 8 \\
    REPA vision encoder   & - & DINOv2-B/14 \\ 
    QK-norm    & \ding{55} & \ding{55} \\
    \arrayrulecolor{black}\midrule
    \multicolumn{3}{l}{{\textit{DMF flow map training}}} \\
    \arrayrulecolor{black!30}\midrule
    Training iterations & - & 400K \\
    Epochs  & - & 80 \\
    Class dropout probability & - & 0.1 \\
    Time proposal $\mu_{\textrm{FM}}$  & - & {0.0} \\
    Time proposal $(\mu_{\textrm{MF}}^{(1)}, \mu_{\textrm{MF}}^{(2)})$ & - & {(0.4, -1.2)} \\
    Model guidance scale $\omega$ & - & 0.6 \\
    Guidance interval & - & [0.0, 0.7] \\
    \arrayrulecolor{black}\midrule
    \multicolumn{3}{l}{{\textit{\textbf{MFM training}}}} \\
    \arrayrulecolor{black!30}\midrule
    Training iterations & 100K & 100K \\
    Batch size & 512 & 360 \\
    Epochs  & 40 & 28 \\
    Optimiser & \multicolumn{2}{c}{RAdam~\citep{liu2021varianceadaptivelearningrate}} \\
    Learning rate   & \multicolumn{2}{c}{1e-4} \\
    Learning rate warmup & \multicolumn{2}{c}{Linear (first 2000 steps)} \\
    RAdam $(\beta_1,\beta_2)$   & \multicolumn{2}{c}{(0.9, 0.999)} \\
    RAdam $\epsilon$   & \multicolumn{2}{c}{1e-8} \\
    RAdam weight decay   & \multicolumn{2}{c}{0.0} \\
    EMA decay rate     & \multicolumn{2}{c}{0.9999} \\
    Class dropout probability & \multicolumn{2}{c}{0.2} \\
    Model guidance scale $\omega$ & \multicolumn{2}{c}{0.6} \\
    Guidance interval & \multicolumn{2}{c}{[0.0, 1.0]} \\
    \arrayrulecolor{black}\bottomrule
    \end{tabular}
    \label{tab:model_config}
\end{table}

\end{document}